
\documentclass[12pt]{article}
\pdfoutput=1  
\pdfminorversion=7


\usepackage{scicite}


\usepackage{times}


\usepackage{amsfonts}
\usepackage{amssymb}
\usepackage{graphicx}

\usepackage{booktabs}       
\usepackage{graphicx}
\usepackage{mathtools}
\usepackage[usenames, dvipsnames]{color}
\usepackage{amsthm}
\usepackage{url}
\usepackage{epigraph}
\usepackage{xspace}
\usepackage{wrapfig}
\usepackage{tabularx}
\usepackage{algcompatible}  
\usepackage{algorithm}
\usepackage{algpseudocode}
\usepackage[a4paper, total={6in, 10in}]{geometry}
\usepackage[labelfont=bf]{caption}
\usepackage{subcaption}
\usepackage{arydshln}
\usepackage{soul}  
\usepackage{titlesec} 
\usepackage{hyperref}
\makeatletter
\def\ttl@useclass#1#2{%
  \@ifstar
    {\ttl@labelfalse\@dblarg{#1{#2}}}
    {\ttl@labeltrue\@dblarg{#1{#2}}}}
\makeatother

\usepackage{blindtext}

\newif\ifcamera
\camerafalse

\newif\ifshowchanges
\showchangesfalse

\newif\ifscience
\sciencetrue

\ifshowchanges
\newcommand{\CHANGED}[1]{{\color{red}#1}}
\else
\newcommand{\CHANGED}[1]{{#1}}
\fi


\newcommand{\sectionref}[1]{{\nameref{#1}}}
\newcommand{\supplementaryref}[1]{supplementary text}





\newcommand{\bp}{\mathbf{p}}

\newcommand{\bv}{\mathbf{v}}

\newcommand{\btheta}{{\boldsymbol\theta}}
\newcommand{\cA}{\mathcal{A}}

\newcommand{\cH}{\mathcal{H}}
\newcommand{\cL}{\mathcal{L}}

\newcommand{\cO}{\mathcal{O}}
\newcommand{\cP}{\mathcal{P}}
\newcommand{\cS}{\mathcal{S}}
\newcommand{\cT}{\mathcal{T}}

\newcommand{\spub}{s_{\text{pub}}}
\ifscience
\newcommand{\defword}[1]{#1} 
\newcommand{\emphword}[1]{#1}
\newcommand{\ie}{i.e.~}
\newcommand{\eg}{e.g.~}
\else
\newcommand{\defword}[1]{\textbf{\boldmath{#1}}}
\newcommand{\emphword}[1]{\textit{#1}}
\newcommand{\ie}{{\it i.e.}~}
\newcommand{\eg}{{\it e.g.}~}
\fi

\newtheorem{theorem}{Theorem}

\newtheorem{lemma}{Lemma}


\newcommand{\pog}{\CHANGED{\textsc{SoG}}\xspace}
\newcommand{\poglong}{\CHANGED{Student of Games}\xspace}



\newcommand{\cfv}[2]{v^{#2}(#1)}
\newcommand{\apxcfv}[2]{\tilde{v}^{#2}({#1})}
\newcommand{\brcfv}[2]{BV^{#2}({#1})}

\newcommand{\pubstate}{{s_\textnormal{pub}}}

\newcommand{\infostate}[1]{{s_{#1}}}

\newcommand{\gtcfr}{GT-CFR}
\newcommand{\treeinterior}[1]{\mathcal{N}({#1})}
\newcommand{\treefrontier}[1]{\mathcal{F}({#1})}
\newcommand{\treeterminals}[1]{\mathcal{Z}({#1})}
\newcommand{\interiorintervals}[1]{\mathcal{T}_n({#1})}

\newcommand{\subgame}[1]{{sub-game}}



\topmargin 0.0cm
\oddsidemargin 0.2cm
\textwidth 16cm 
\textheight 21cm
\footskip 1.0cm


\newenvironment{sciabstract}{%
\begin{quote} \bf}
{\end{quote}}




\title{\CHANGED{Student of Games: A unified learning algorithm for both perfect and imperfect information games}}



\author{Martin Schmid${}^{1,2}$,
Matej Morav\v{c}\'{i}k${}^{1,2}$,
Neil Burch${}^{2,3,4}$,
Rudolf Kadlec${}^{1,2}$,\\
Josh Davidson${}^{2,3}$,
Kevin Waugh${}^{2,3}$,
Nolan Bard${}^{2,3}$,
Finbarr Timbers${}^{2,5}$,\\
Marc Lanctot${}^{2,6}$,
G. Zacharias Holland${}^{2,3}$,
Elnaz Davoodi${}^{2,6}$,\\
Alden Christianson${}^{2,7}$,
Michael Bowling${}^{2,4,7\ast}$\\
\\
\CHANGED{\normalsize{${}^{1}$EquiLibre Technologies, Prague, Czechia}, 
\normalsize{${}^{2}$Google DeepMind},} \\
\CHANGED{\normalsize{${}^{3}$Sony AI, New York, NY, USA},
\normalsize{${}^{4}$Amii, Edmonton, Canada},} \\
\CHANGED{\normalsize{${}^{5}$Midjourney, South San Francisco, CA, USA},} \\
\CHANGED{\normalsize{${}^{6}$Google DeepMind, Montreal, Canada},
\normalsize{${}^{7}$University of Alberta, Edmonton, Canada}} \\
\CHANGED{\normalsize{$^\ast$To whom correspondence should be addressed; E-mail:  {{\tt }mbowling@ualberta.ca}}}
}


\date{}


\begin{document} 




\maketitle 

\begin{quote}
\item \paragraph{Teaser:} \poglong combines search, learning, and game-theoretic reasoning to play chess, go, poker, and Scotland Yard.
\end{quote}



\begin{sciabstract}
Games have a long history as benchmarks for progress in artificial intelligence.
Approaches using search and learning produced strong performance across many perfect information games, and approaches using game-theoretic reasoning and learning demonstrated strong performance for specific imperfect information poker variants.
We introduce \poglong, a general-purpose algorithm that unifies previous approaches, combining guided search, self-play learning, and game-theoretic reasoning. 
\poglong achieves strong empirical performance in large perfect and imperfect information games --- an important step towards truly general algorithms for arbitrary environments.
We prove that \poglong is sound, converging to perfect play as available computation and approximation capacity increases.
\poglong reaches strong performance in chess and Go, beats the strongest openly available agent in heads-up no-limit Texas hold'em poker, and defeats the state-of-the-art agent in Scotland Yard, an imperfect information game that illustrates the value of guided search, learning, and game-theoretic reasoning.
\end{sciabstract}


\section*{Introduction}

In the 1950s, Arthur L. Samuel developed a checkers-playing program that employed what is now called 
minimax search (with alpha-beta pruning) and ``rote learning'' to improve its evaluation function via self-play~\cite{Samuel59}.
This investigation inspired many others, and ultimately Samuel co-founded the field of artificial 
intelligence~\cite{RussellNorvig2010} and popularized the term ``machine learning''. 
A few years ago, the world witnessed a computer program defeat a long-standing professional at the game
of Go~\cite{Silver16Go}. AlphaGo also combined learning and search.
Many similar achievements happened in between, such as the race for super-human chess leading to DeepBlue~\cite{DeepBlue} and 
TD-Gammon teaching itself to play master-level performance in Backgammon through self-play~\cite{TDGammon}, continuing the tradition of using games
as canonical markers of mainstream progress across the field.

Throughout the stream of successes, there is an important common element: the focus on a single game. Indeed, DeepBlue could
not play Go, and Samuel's program could not play chess. Likewise, AlphaGo could not play chess; however its successor
AlphaZero~\cite{Silver18AlphaZero} could, and did. AlphaZero demonstrated that a single algorithm could master three different
perfect information games \CHANGED{--- where the game's state is known to all players ---} using a simplification of AlphaGo's approach, and with minimal human knowledge. Despite this success, AlphaZero could not play poker,
and the extension to imperfect information games was unclear. 

Meanwhile, approaches taken to achieve super-human poker AI were \CHANGED{substantially} different. 
Strong poker play has relied on game-theoretic reasoning to ensure that
private information is concealed effectively. 
Initially, super-human poker agents were based primarily on computing approximate Nash equilibria
offline~\cite{Johanson16phdthesis}.
Search was then added and proved to be a crucial ingredient
to achieve super-human success in no-limit variants~\cite{Moravcik17DeepStack,Brown17Libratus,Brown19Pluribus}.
Training for other large games have also been inspired by game-theoretic reasoning and search,
such as Hanabi~\cite{Bard19Hanabi,lerer2020hanabi}, The Resistance~\cite{serrino2019finding}, Bridge~\cite{lockhart2020humanagent},
AlphaStar~\cite{Silver19AlphaStar}, and (no-press)
Diplomacy~\cite{anthony2020learning,gray2020humanlevel,bakhtin2021nopress}.
Here again, however, despite remarkable success: each advance 
was still on a single game, with some clear uses of
domain-specific knowledge and structure to reach  strong performance.

In this paper, we introduce \poglong (\pog), an algorithm that generalizes the class of games in which strong performance can be achieved using self-play learning, search, and game-theoretic reasoning.
\pog uses growing-tree counterfactual regret minimization (GT-CFR): an anytime local search that builds subgames non-uniformly, expanding the tree toward the most relevant future states while iteratively refining values and policies.
In addition, \pog employs sound self-play: a learning procedure that trains value-and-policy networks using both game outcomes and recursive sub-searches applied to situations that arose in previous searches.

\poglong achieves strong performance in multiple challenge domains with both perfect and imperfect information -- an important step towards truly general algorithms that can learn in arbitrary environments.
Applications of traditional search suffer well-known problems in imperfect information games~\cite[Section 5.6.2]{RussellNorvig2010}. Evaluation has remained focused on single domains (e.g. poker) despite recent progress toward sound search in imperfect information games~\cite{Moravcik17DeepStack,Brown17Safe,sustr2020sound}.
\poglong fills this gap, using a single algorithm with minimal domain-specific knowledge. Its search is sound~\cite{sustr2020sound} across these fundamentally different game types: it is guaranteed to find an approximate Nash equilibrium by re-solving subgames to remain consistent
during online play, and yields low exploitability in practice in small games where exploitability is computable.
\pog demonstrates strong performance across
four different games: two perfect information (chess and Go) and two imperfect information (poker and Scotland Yard).
Finally, unlike poker, Scotland Yard has \CHANGED{substantially} longer search horizons and game lengths, requiring long-term planning.

\subsection*{Background and Terminology}
\label{sec:background}

\poglong will be presented using the Factored-Observation Stochastic Games (FOSG) formalism.  For further details on the formalism, see~\cite{KovarikFOSG,Schmid21Thesis}.

A game between two players starts in a specific \defword{world state} $w^{init}$ and proceeds to the successor world states $w \in \mathcal{W}$ as a result of players choosing actions $a \in \cA$ until the game is over when a terminal state is reached.
A world state can be categorized as a decision node, a terminal node, or a chance node. At a decision node, player $\cP(w)$ acts. A terminal node marks the end of a game where no players act. A chance node is a special node representing a stochastic event, such as a die roll, with a fixed distribution.
At any world state $w$, $\cA(w) \subseteq \cA$ refers to those actions that are available, or legal, in world state $w$.
Sequences of actions taken along the course of the game are called \defword{histories} and denoted $h \in \cH$, with $h' \sqsubseteq h$ denoting a prefix history (subsequence). At terminal histories, $z \subset \cH$, each player $i$ receives a utility $u_i(z)$.

An \defword{information state} is a state with respect to one player's information. Specifically, $s_i \in \cS_i$ for player $i$ is a set of histories that are indistinguishable due to missing information. A simple example is a specific decision point in poker where player $i$ does not know the opponent's private cards; the histories in the information state are different only in the chance event outcomes that determine the opponent's private cards, since everything else is public knowledge. 
A player $i$ plays a \defword{policy} $\pi_i : \cS_i \rightarrow \Delta(\cA)$, where $\Delta(\cA)$ denotes the set of probability distributions over actions $\cA$.
The goal of each player is to find a policy that maximizes their own expected utility.

Every time a player takes an action, each player gets a \defword{private observation} $\cO_{\text{priv}(i)}(w, a, w')$ and a \defword{public observation} $\cO_{\text{pub}}(w, a, w')$ as a result of applying action $a$, changing the game's state from $w$ to $w'$. 
\CHANGED{In \defword{perfect information} games, the public observation contains complete information, i.e.,  $\cO_{\text{pub}}(w, a, w') = (w, a, w')$ making any private observations uninformative.  Furthermore, the transition function depends only on the active player's action, i.e., $\cT(w, a) = \cT(w, a_{\cP(w)})$.
In contrast, \defword{imperfect information} games have information asymmetry between players and some players will receive informative private observations.}
A \defword{public state} $\spub(h) \in \cS_{\text{pub}}$ is the sequence of public observations encountered along the history $h$. For example, a public state in Texas hold'em poker is represented by initial public information (stack sizes and antes), the betting history, and any publicly revealed board cards.
Let $\cS_i(\spub)$ be the set of possible information states for player $i$ given $\spub$: each information state $s_i \in \cS_i(\spub)$ is consistent with public observations in $\spub$ but has different sequences of private observations. Supplementary material shows a full example of a FOSG in Figure~\ref{fig:fosg-example} and an example public tree in Figure~\ref{fig:fosg-public-tree}. 

\CHANGED{Imperfect information games introduce additional complexity, as $\cS_i(\spub)$ can now contain multiple information states that the player's policy depends on. For example, in poker the information states would contain the private cards of player $i$.  Since past actions can leak otherwise private information, agents must reason about which information states players could be in to act soundly.}
A \defword{public belief state} is defined as $\beta = (\spub, r)$, where the \defword{range} (or beliefs) $r \in \Delta(\cS_1(\spub)) \times \Delta(\cS_2(\spub))$ is a pair of distributions over possible information states for both players representing the beliefs over information states in $\spub$. 
A basic depiction of the various components of a public belief state is depicted in Figure~\ref{fig:public-belief-state}.

Suppose players use a policy profile $\pi = (\pi_1, \pi_2)$. Denote the expected utility to player to player $i$ as $u_i(\pi_1, \pi_2)$ and $-i$ as the opponent of player $i$.
A \defword{best response} to a specific opponent policy $\pi_{-i}$ is any policy $\pi_i^b$ that achieves maximal utility against $\pi_{-i}$:
$\pi_i^b \in \{ \pi_i~|~u_i(\pi_i, \pi_{-i}) = \max_{\pi_i'} u_i(\pi_i', \pi_{-i}) \}$. A policy profile $\pi$ is a \defword{Nash equilibrium} if and only if $\pi_1$ is a best response to $\pi_2$ and $\pi_2$ is a best response to $\pi_1$. There are also approximate equilibria: $\pi$ is an \defword{$\epsilon$-Nash equilibrium} if and only if $u_i(\pi_i^b, \pi_{-i}) - u_i(\pi_i, \pi_{-i}) \le \epsilon$ for all players $i$.

In two-player zero-sum games, Nash equilibria are optimal because they maximize worst-case utility guarantees for both players.  This worst-case utility is unique and is called the game's \defword{value}.  \CHANGED{For such games, a 
standard metric to represent empirical convergence rate is a strategy's \defword{exploitability}: how much, on average, a player will lose against a best response relative to a Nash equilibrium. For a given policy profile in a two-player zero-sum game $\pi = (\pi_1, \pi_2)$, $\textsc{Exploitability}(\pi) = (\max_{\pi_1'}{u_1(\pi_1', \pi_2)} + \max_{\pi_2'}{u_2(\pi_1, \pi_2')})/2$.}
Also, equilibrium strategies in two-player zero-sum games are \defword{interchangeable}: if $\pi^A$ and $\pi^B$ are Nash equilibria, then $(\pi_1^A, \pi_2^B)$ and $(\pi_1^B, \pi_2^A)$ are also both equilibria.
These properties mean that a Nash equilibrium plays perfect defence: it
will not lose on expectation against any opponent, even one that is playing a best response to the Nash equilibrium. If the opponent makes mistakes, then the Nash equilibrium policy can win. Thus, it is reasonable for an agent to compute and play
a Nash equilibrium, or an approximation to one with low exploitability.

\CHANGED{Although the FOSG formalism generalizes beyond two-player zero-sum games, the theoretical guarantee of Nash equilibria outside of this setting is less meaningful and it is unclear how effective they would be (for example, in games with more than two players). In this work, we focus on the two-player zero-sum setting.} \CHANGED{To put \pog in context, we begin with a high-level description of several techniques that have been dominant in this setting, and then contrast \pog with existing work.}

\subsubsection*{Tree Search and Machine Learning}

The first major milestones in the field of game-playing AI were obtained by efficient search techniques inspired by the minimax theorem~\cite{Samuel59,DeepBlue}. In a two-player zero-sum game with perfect information, the approach uses depth-limited search starting from the current world state $w_t$, along with a heuristic evaluation function to estimate \CHANGED{the value of} states beyond the depth limit, $h(w_{t+d})$, and game-theoretic reasoning to back up values~\cite{KnuthMoore75}. Researchers developed \CHANGED{notable} search enhancements~\cite{Marsland81,Schaeffer1996NewAI} that greatly improved performance, leading to  IBM's super-human DeepBlue chess program~\cite{DeepBlue}. 

This classical approach was, however, unable to achieve super-human performance in Go, which has \CHANGED{substantially} larger branching factor and state space complexity than chess. Prompted by the challenge of Go~\cite{Gelly12Go}, researchers proposed Monte Carlo tree search (MCTS)~\cite{Kocsis06UCT,Coulom06MCTS}. Unlike minimax search, MCTS builds trees via simulations, starting with an empty tree rooted by $w_t$ and expanding the tree by adding the first state encountered in simulated trajectories \CHANGED{that is} not currently in the tree, and finally estimating values from rollouts to the end of the game. MCTS led to \CHANGED{substantially} stronger play in Go and other games~\cite{Browne12}, attaining 6 dan amateur level in Go. However, heuristics leveraging domain knowledge were still necessary to achieve these milestones.

In AlphaGo~\cite{Silver16Go}, value functions and policies are incorporated, learned initially from human expert data, and then improved via self-play.
A deep network approximates the value function and a prior policy helps guide the selection of actions during the tree search. The approach was the first to achieve super-human level play in Go~\cite{Silver16Go}. AlphaGo Zero removed the initial training from human data and Go-specific features~\cite{Silver2017AGZ}. AlphaZero reached state-of-the-art performance in chess and Shogi as well as Go, using minimal domain knowledge~\cite{Silver18AlphaZero}.

\poglong, like AlphaZero, combines search and learning from self-play, using minimal domain knowledge. Unlike MCTS, which is not sound for imperfect information games, \pog's search algorithm is based on counterfactual regret minimization and is sound for both perfect and imperfect information games.

\subsubsection*{Game-Theoretic Reasoning and Counterfactual Regret Minimization}

In imperfect information games, the choice of strategies that arise from hidden information can be crucial to determining each player's expected rewards. 
Simply playing too predictably can be problematic: in the classic example game of Rock, Paper, Scissors, the only thing a player does not know is the choice of the opponent's action, however this information fully determines their achievable reward. A player choosing to always play one action (e.g. rock) can be easily beaten by another playing the best response (e.g. paper). The Nash equilibrium plays each action with equal probability, which minimizes the benefit of any particular counter-strategy.
Similarly, in poker, knowing the opponent's cards or their strategy could yield \CHANGED{substantially} higher expected reward, and in Scotland Yard, players have a higher chance of catching the evader if their current location is known.
In these examples, players can exploit any knowledge of hidden information to play the counter-strategy resulting in higher reward.
Hence, to avoid being exploited, players must act in a way that does not reveal their own private information. We call this general behavior \defword{game-theoretic reasoning} because it emerges as the result of computing (approximate) minimax-optimal strategies. Game-theoretic reasoning has been paramount to the success of competitive poker AI over the last 20 years. 

One algorithm for computing approximate optimal strategies is counterfactual regret minimization (CFR)~\cite{08nips-cfr}. 
CFR is a self-play algorithm that produces policy iterates $\pi^t_i(s,\cdot)\in \Delta(\cA)$ for each player $i$ at each of their information states $s$ in a way that minimizes long-term average regret. As a result, the (appropriately weighted) average policy over all $T$ iterations $\bar{\pi}^T$ converges to an $\epsilon$-Nash equilibrium at a rate of $O(1/\sqrt{T})$. At each iteration, $t$, counterfactual values $v^t_i(s,a)$ are computed for each action $a \in \cA(s)$ and immediate regrets for not playing $a$, $r^t(s,a) = v^t_i(s,a) - \sum_{a \in \cA(s)}{ \pi^t(s,a) v^t_i(s,a)}$, are computed and tabulated in a cumulative regret table storing $R^T(s,a) = \sum_{t=1}^T{r^t}(s,a)$. A new policy is computed using regret-matching~\cite{Hart00}: $\pi^{t+1}(s,\cdot) = \frac{[R^t(s,\cdot)]^+}{\sum_a{[R^t(s,a)]^+}}$, where $[x]^+ = \max(x, 0)$, if $\sum_a{[R^t(s,a)]^+} > 0$, or the uniform distribution otherwise.

CFR$^+$~\cite{Tammelin15CFRPlus} is a successor of CFR that played a key role in solving the game of heads-up limit hold'em poker, the largest imperfect information game to be solved to date~\cite{Bowling15Poker}. A key change in CFR$^+$ is a different policy update mechanism,  regret-matching$^+$, which defines cumulative values slightly differently: $Q^t(s,a) = (Q^{t-1}(s,a) + r^t(s,a))^+$, and $\pi^{t+1}(s,a) = Q^t(s,a) / \sum_b{Q^t(s,b)}$.

A common form of CFR (or CFR$^+$) is one that traverses the \defword{public tree} of public states, rather than the classical extensive-form game tree of world states (and information states).
Quantities required to compute counterfactual values, such as each player's probabilities of reaching each information state under their policy (called their \defword{range}) are maintained as beliefs. Finally, leaf nodes can be evaluated directly using the ranges, chance probabilities, and utilities (often more efficiently~\cite{12aamas-pcs}).

\subsubsection*{Imperfect Information Search, Decomposition, and Re-Solving}
\label{sec:bg-decomposition}

Solution concepts like Nash equilibria and minimax are defined over policy profiles. A player's policy is fixed during play and solely a function of the information state.
Search could instead be described as a process, which might return different action distributions at subsequent visits to the same state.  That is, when using search the resulting policy can depend on more than the just the information state, such as time-limits, non-deterministic computation, stochastic events from either the game or within the search, or the outcome of other searches.  These factors introduce important subtleties such as solution compatibility across different searches~\cite{sustr2020sound}.

CFR has been traditionally used as a game-solving engine, computing entire policies via self-play. Each iteration traverses the entire game tree or a sampled subtree, recursively computing the counterfactual values for an information state from the values of its successor states. Suppose one wanted a policy for a part of the game up to some depth $d > 0$. If there was an oracle to compute the counterfactual values each player would receive at depth $d$, then each iteration of CFR could be run to depth $d$ and query the oracle to return the values. As a result, the policies would not be available at depths $d' > d$. Summarizing the policies below depth $d$ by a set of values which can be used to reconstruct policies at depth $d$ and beyond is the basis of \defword{decomposition} in imperfect information games~\cite{Burch14CFRD}. 
A \defword{subgame} in an imperfect information game is a game rooted at a public state $\spub$. In order for a subgame to be a proper game, it is paired with a belief distribution $r$ over initial information states, $s \in \cS_i(\spub)$. This is a strict generalization of subgames in perfect information games, where every public state has exactly one information state (which is, in fact, no longer private as a result) and a singleton belief with probability $1$ for both players.

Subgame decomposition has been a crucial component of most recent developments of poker AI that scale to large games such as no-limit Texas hold'em~\cite{Moravcik17DeepStack,Brown17Libratus,Brown19Pluribus,brown2020combining}.
Subgame decomposition enables local search to refine the policy during play analogously to the classical search algorithms in perfect information games and traditional Bellman-style bootstrapping to learn value functions~\cite{Moravcik17DeepStack,serrino2019finding,Zarick20Supremus,brown2020combining}. Specifically, a \defword{counterfactual value network} (CVN) represented by parameters $\btheta$ encodes the value function $\bv_{\btheta}(\beta) = \{ v_i(s_i) \}_{s_i \in \cS_i(\spub), i \in \{1, 2\}}$, where $\beta$ includes player's beliefs over information states for the public information at $\spub$. The function $\bv_{\btheta}$ can then be used in place of the oracles mentioned above to summarize values of the subtrees below $\spub$.
An example of depth-limited CFR solving using decomposition is shown in Figure~\ref{fig:decomp}.

\defword{Safe re-solving} is a technique that generates subgame policies from only summary information of a previous (approximate) solution: a player's range and their opponent's counterfactual values. This is done by constructing an auxiliary game with specific constraints. The subgame policies in the auxiliary game are generated in a way that preserves the exploitability guarantees of the original solution, so they can replace the original policies in the subgame.  Thorough examples of the auxiliary game construction are found in~\cite{Burch14CFRD} and~\cite[Section 4.1]{Brown17Safe}.

\defword{Continual re-solving} is an analogue of classical game search, adapted to imperfect information games, that uses repeated applications of safe re-solving to play an episode of a game~\cite{Moravcik17DeepStack}. It starts by solving a depth-limited game tree rooted at the beginning of the game, and search is a re-solving step. As the game progresses, for every subsequent decision at some information state $s_i$, continual re-solving will refine the current strategy by re-solving at $s_i$. Like other search methods, it is using additional computation to more thoroughly explore a specific situation encountered by the player.

\subsection*{Related Work}
\label{sec:relwork}


\pog combines many elements that were originally proposed in AlphaZero and its predecessors, as well as DeepStack~\cite{Silver16Go,Silver2017AGZ,Silver18AlphaZero,Moravcik17DeepStack}. Specifically, \pog uses the combined search and learning using deep neural networks from AlphaGo and DeepStack, along with game-theoretic reasoning and search in imperfect information games from DeepStack. 
The use of public belief states and decomposition in imperfect information games has been a critical component of success in no-limit Texas Hold'em poker~\cite{Burch14CFRD,Brown17Safe,Moravcik17DeepStack,Brown17Libratus,Brown18,Brown19Pluribus,brown2020combining}.
The main difference from AlphaZero is that the search and self-play training in \pog are also sound for imperfect information games, and evaluation across game types. The main difference from DeepStack is the use of \CHANGED{substantially} less domain knowledge: the use of self-play (rather than poker-specific heuristics) to generate training data and a single network for all stages of the game.
The most closely related algorithm is Recurrent Belief-based Learning (ReBeL)~\cite{brown2020combining}. 
Like \pog, ReBeL combines search, learning, and game-theoretic reasoning via self-play.
The main difference is that \pog is based on (safe) continual re-solving and sound self-play. 
To achieve ReBeL's guarantees, its test-time search must be conducted with the same algorithm as in training, whereas \pog can use any belief-based value-and-policy network of the form described in \sectionref{sec:alg-cvpns} (similarly to e.g. AlphaZero, which trains using 800 simulations but then can use substantially larger simulation limits at test-time, \CHANGED{which is needed for strong performance in many perfect information domains).} \pog is also validated empirically across different challenge games of different game types, \CHANGED{whereas ReBeL is only evaluated on two imperfect information games.}

There has been considerable work in search for imperfect information games. One method that has been quite successful in practice is determinization: at decision-time, a set of of candidate world states are sampled, and some form of search is performed~\cite{CowlingISMCTS,Long10Understanding}. 
In fact, the baseline player we use to compare \pog to in Scotland Yard, PimBot~\cite{Nijssen12,PimThesis}, is based on these methods and achieved state-of-the-art results.
However, these methods are not guaranteed to converge to an optimal strategy over time.
We demonstrate this lack of convergence in practice over common search algorithms and standard \CHANGED{reinforcement learning (RL)} benchmarks in \supplementaryref{app:basic-rl-mcts-results}.
In contrast, the search in \pog is based on game-theoretic reasoning. Other algorithms have proposed adding game-theoretic reasoning to search: Smooth UCT~\cite{Heinrich2015SmoothUCT} combines Upper Confidence Bounds applied to Trees (UCT)~\cite{Kocsis06UCT} with fictitious play, however its convergence properties are not known. Online Outcome Sampling~\cite{Lisy15Online} derives an MCTS variant of Monte Carlo CFR~\cite{09nips-mccfr}; however, OOS is only guaranteed to approach an approximate equilibrium at a single information state (local consistency) and has not been evaluated in large games. GT-CFR used by \pog makes use of sound search based on decomposition and is globally consistent~\cite{Burch14CFRD,sustr2020sound}.

There have been a number of RL algorithms that have been proposed for two-player zero-sum games: Fictitious Self-Play~\cite{Heinrich15FSP}, Policy-Space Response Oracles (PSRO)~\cite{Lanctot17PSRO}, Double Neural CFR~\cite{Li19DNCFR}, Deep CFR and DREAM~\cite{Brown18DeepCFR,steinberger2020dream}, Regret Policy Gradients~\cite{Srinivasan18RPG}, Exploitability Descent~\cite{Lockhart19ED}, Neural Replicator Dynamics (NeuRD)~\cite{hennes2020neural}, Advantage Regret-Matching Actor Critic~\cite{gruslys2020advantage}, Friction FoReL~\cite{perolat2020poincare}, Extensive-form Double Oracle (XDO)~\cite{McAleer21XDO}, Neural Auto-curricula (NAC)~\cite{Feng21NAC},
and Regularized Nash Dynamics (R-NaD)~\cite{Perolat22Stratego}. These methods adapt classical algorithms for computing (approximate) Nash equilibria to the RL setting with sampled experience and general function approximation. As such, they combine game-theoretic reasoning and learning. Several of these methods have shown promise to scale: Pipeline PSRO defeated the best openly available agent in Stratego Barrage; ARMAC showed promising results on large poker games. 
R-NaD truly demonstrated scale by obtaining human-level performance in the very large game of Stratego~\cite{Perolat22Stratego}.
In Starcraft, AlphaStar was able to use human
data and game-theoretic reasoning to create a master-level real-time strategy policy~\cite{Silver19AlphaStar}.
However, none of them can use search at test-time to refine their policy; \CHANGED{this shifts a learning and function approximation burden onto training, typically making these methods more computationally demanding in both training time and model capacity to encode a policy or value function.}

Lastly, there have been works that use some combination of search, learning, and/or game-theoretic reasoning applied to specific domains. 
Neural networks have been trained via Q-learning to learn to play Scotland Yard~\cite{Dash2018SY}; however, the overall play strength of the resulting policy was not directly compared to any other known Scotland Yard agent.
In poker, Supremus proposed a number of improvements to DeepStack and demonstrated that they make a big difference when playing human experts~\cite{Zarick20Supremus}. Another work used a method inspired by DeepStack applied to The Resistance~\cite{serrino2019finding}. In the cooperative setting, several works have made use of belief-based learning (and search) using public subgame decomposition~\cite{foerster2019bayesian,lerer2020hanabi,sokota2021solving}, applied to Hanabi~\cite{Bard19Hanabi}.
Learning and game-theoretic reasoning were also recently combined to produce agents that play well with humans without human data on the collaborative game Overcooked~\cite{strouse2021collaborating}.
Search and reinforcement learning were combined to produce a bridge bidding player that cooperated with a state-of-the-art bot (WBridge5) and with humans~\cite{lockhart2020humanagent}.
Of considerable note is the game of (no-press) Diplomacy. In that game, game-theoretic reasoning was combined with learning in Best Response Policy Iteration~\cite{anthony2020learning}, and game-theoretic search and supervised learning were combined in~\cite{gray2020humanlevel} reaching human-level performance on the two-player variant. Recently, all three were combined in DORA~\cite{bakhtin2021nopress}, which learned to play Diplomacy without human data reaching human-level performance on the two-player variant, and subsequently Cicero~\cite{Bahkin22Cicero} reached human-level on the full game including communication with humans via language models. The main difference between \pog and these works is that they focus on specific games and exploit domain-specific knowledge to attain strong performance. 

\subsection*{Descriptions of Challenge Domains}

Chess and Go are well-known classic games, both seen as grand challenges of AI~\cite{DeepBlue,Gelly12Go}
that have driven progress in artificial intelligence since its inception. 
The achievement of DeepBlue beating Kasparov in 1997 is widely regarded to be the first big milestone of AI. Today, chess playing computer programs remain consistently super-human, and one of the strongest and most widely-used programs is Stockfish~\cite{Stockfish}.
Go emerged as the favorite new challenge domain, which was particularly difficult for classical search techniques~\cite{Gelly12Go}. 
Monte Carlo tree search~\cite{Coulom06MCTS,Kocsis06UCT,Browne12} emerged as the dominant search technique in Go. The best of these programs, Crazy Stone and Zen, were able to reach the level of 6 dan amateur~\cite{Silver16Go}. It was not until 2016 that AlphaGo defeated the first human professional Lee Sedol in the historical 2016 match, and also defeated the top human Ke Jie in 2017.

Heads-up no-limit Texas hold'em is the most common two-player version of poker played by humans, which is also played by DeepStack and Libratus~\cite{Moravcik17DeepStack,Brown17Libratus}. 
Human expert-level poker has been the standard challenge domain among imperfect 
information games, inspiring the field of game theory itself. No-limit Texas hold'em presents the complexity
of stochastic events (card draws), imperfect information (private cards), 
and a very large state space~\cite{johanson2013measuring}.
In this paper, we use blinds of 100 and 50 chips, and stack sizes of 200 big blinds (20,000 chips).

Scotland Yard is a compelling board game of imperfect information, receiving a Spiel des Jahres award in 1983 as well as being named the “The most popular game ’83” by SpielBox~\cite{syspiel}. 
The game is played on a map of London, where locations are connected by edges representing different modes of transportation. One player plays as ``Mr.~X'' (the evader) and others control detectives (pursuers). Mr.~X is only visible on specific rounds, but detectives get to see the mode of
transportation Mr.~X uses every round (e.g. taxi, bus, subway). In order to win, detectives need to catch Mr.~X within 24 rounds.  Scotland Yard is a perfect example of an imperfect information game that requires search for strong play---the detectives have to plan multiple moves into the future while reasoning about possible locations that Mr.~X may be.  Similarly, though Mr.~X has perfect information, he must also reason about where he could be to, for example, avoid revealing his location.  Unlike poker, Scotland Yard has partially-observable actions, so private information is effected by the agents' choices in addition to chance.

This suite of games covers the classic challenge domains across game types (perfect information and imperfect information, some with stochastic elements and others not), as well as an \CHANGED{additional} challenging imperfect information game with \CHANGED{substantially} longer sequences of actions and a fundamentally different type of uncertainty over hidden actions.

\section*{Results}
In order to understand the results, we give a brief \CHANGED{high-level} overview of our main algorithm, \poglong, which we present \CHANGED{formally} in the \sectionref{sec:mat_methods} section below.

\subsection*{\poglong: Algorithm Summary}

The \pog algorithm trains the agent via \defword{sound self-play}: each player, when faced with a decision to make,
employs a sound \defword{growing-tree CFR (GT-CFR)} search equipped with a \defword{counterfactual value-and-policy network} (CVPN) to generate a policy for the current state, which is then used to sample an action to take. 

GT-CFR grows a tree, starting with the current public state, and consists of two alternating phases: the \emphword{regret update phase} runs public tree CFR updates on the current tree; 
the \emphword{expansion phase} expands the tree by adding new public states via simulation-based expansion trajectories.
One iteration of \gtcfr{} consists of one run of the regret update phase followed by one run of the expansion phase.

The self-play process generates two types of training data for updating the value and policy networks: \emphword{search queries}, which are public belief states that were queried by the CVPN during the \gtcfr{} regret update phase, and \emphword{full-game trajectories} from the self-play games. The search queries must be \emphword{solved} to compute counterfactual value targets for updating the value network. The full-game trajectories provide targets for updating the policy network.  In practice, the self-play data generation and training happen in parallel: actors generate the self-play data (and solve queries) while trainers learn new networks and periodically update the actors. 

\subsection*{Theoretical Results}
We have two main theoretical results, which we describe here only informally. They are formally treated in \sectionref{sec:mat_methods}.
Theorem~\ref{thm:main-theorem} ensures that the exploitability of the final \gtcfr{} policy is at most $O(1/\sqrt{T})$, where $T$ is number of \gtcfr{} iterations, under some conditions on how the search tree is expanded and so long as the value function is reasonably accurate.  \pog invokes \gtcfr{} to re-solve a subtree every time it must act. Thereom~\ref{thm:continual-resolving} bounds the exploitability of the entire \pog policy proving that it is sound to employ \gtcfr{} recursively.
Both theorems together ensure that the search is sound up to some acceptable error in the value function. If there is no error in the value function, and the values are the game-theoretic optimal values, then \gtcfr{} will provably converge to a Nash equilibrium strategy if run under the conditions stated in the theorems.

\subsection*{Experimental Results}

We evaluate \pog on four games: chess, Go, heads-up no-limit Texas hold'em poker, and Scotland Yard. We also evaluate \pog on the commonly-used small benchmark poker game Leduc hold'em, and a custom-made small Scotland Yard map, where the approximation quality compared to the optimal policy can be computed exactly.

When reporting the results we use the notation $\pog(s, c)$ for \pog running GT-CFR with $s$ total expansion simulations, and $c$ expansion simulations per regret update phase, so the total number of GT-CFR iterations is then $\frac{s}{\lceil c \rceil}$.
For example, $\pog(8000, 10)$ refers to 8000 expansion simulations at 10 expansions per regret update (800 GT-CFR iterations).
We choose this notation style to be easily comparable to number of simulations in AlphaZero.

\subsubsection*{Exploitability in Leduc Poker and Small Scotland Yard Map}

Supporting our theoretical results, we empirically evaluate the exploitability of \pog in Leduc poker~\cite{Southey05bayes} and in Scotland Yard on a small map named ``glasses''. The full description of Leduc poker is presented in \supplementaryref{app:additional-results} and the map is illustrated in Figure~\ref{fig:scotlandyard-glasses}.

Exploitability is a function of a specific (fixed) policy profile. However, for a search algorithm like \pog, previous searches may affect policies computed at later points within the same game, as explained in \sectionref{sec:bg-decomposition}. Hence, we construct multiple samples of the \pog policy by choosing a random seed, running the search algorithm at every public state in a breadth-first manner such that every search is conditioned on previous searches at predecessor states, and composing together the policies obtained from each search. We then show the minimum, average, and maximum exploitabilities over policies constructed in this way from 50 different choices of seeds. If the minimum and maximum exploitability values are tight, then they represent an accurate estimate of true exploitability.

Figure~\ref{fig:exploitability} shows the exploitability of \pog in Leduc poker and the glasses map of Scotland Yard, as a function of the number of CVPN training steps.
For these graphs, we evaluate multiple networks (each trained for a different number of steps) generated by a single training run of $\pog(100, 1)$. Each data point corresponds to a specific network (determined by number of steps trained) being evaluated under different settings during play.
For each specific x-value, a single network was used to obtain each exploitability value of \pog using the network under different evaluation conditions.

We observe that exploitability drops fairly quickly as the training steps increase. Also, even using only 1 CFR update per simulation, there is significant difference in exploitability when more simulations are used.
As Theorem~\ref{thm:main-theorem} suggests, more training (by reducing $\epsilon$) and more search (by increasing $T$) reduces the exploitability of \pog. Standard RL algorithms in self-play are not guaranteed to reduce exploitability with continued training in this setting. We show this lack of convergence in practice in \supplementaryref{app:basic-rl-mcts-results}.

\subsubsection*{Results in Challenge Domains}

Our main results compare the performance of \pog to other agents in our challenge domains.
We trained a version of AlphaZero using its original settings in chess and Go, \eg, using 800 MCTS simulations during training, with 3500 concurrent actors each on a single TPUv4, for a total of 800k training steps. \pog was trained using a similar amount of TPU resources.

In chess, we evaluated~\pog against Stockfish 8 level 20~\cite{Stockfish} and AlphaZero.
$\pog(400,1)$ was run in training for 3M training steps.
During evaluation, Stockfish uses various search controls: number of threads, and time per search.
We evaluate AlphaZero and \pog up to 60,000 simulations. 
A tournament between all of the agents was played at 200 games per pair of agents (100 games as white, 100 games as black).
\CHANGED{From this tournament, we rank players according to their Elo ratings.
Elo is a classic system for rating chess players originally designed by Arpad Elo in 1967 and still widely-used today~\cite{Elo1986rating} in many games. A rating, $r_i$,  is  assigned  to  each player $i$ such that a logistic model predicts the probability of player $i$ beating player $j$ as $1/(1 + 10^{(r_j - r_i)/400})$.}
Table~\ref{tab:elo-results} shows the relative Elo comparison obtained by this tournament, where a baseline of 0 is chosen for Stockfish(threads=1, time=0.1s).

In Go, we evaluate $\pog(60000, 10)$ using a similar tournament as in chess, against two previous Go programs: GnuGo (at its highest level, 10)~\cite{gnugo} and Pachi v7.0.0~\cite{pachi} with 10k and 100k simulations, as well as AlphaZero~\cite{Silver18AlphaZero} with a range of search simulations at different points in training.
$\pog(400,1)$ was used in training for 1M training steps.
Table~\ref{tab:elo-results} shows the relative Elo comparison for a subset of the agents that played in this tournament, where a baseline of 0 is chosen for GnuGo. The full results are presented in Tables~\ref{tab:full-go-results} and \ref{tab:full-go-results-recursive}.

Notice in both chess and Go that \pog reaches strong performance. In chess, $\pog(60000,10)$ is stronger than Stockfish using 4 threads and one second of search time. In Go, $\pog(16000, 10)$ is more than 1100 Elo stronger than Pachi with 100,000 simulations. Also, $\pog(16000, 10)$ wins 0.5\% (2/400) of its games against AlphaZero(s=8000,t=800k). As a result, \pog appears to be performing at the level of top human amateur, possibly even professional level. In both cases, \pog is weaker than AlphaZero, with the gap being smaller in chess. We hypothesize that this difference is the result of MCTS being more efficient than CFR on perfect information games, as the price of \pog's generality.

For chess and Go, we also present direct Elo comparisons from a tournament between AlphaZero (trained for 800k steps) and \pog agents when increasing the number of neural network evaluations in Figure~\ref{fig:pog-and-az-scaling}. 
These results demonstrate that \pog is able to scale, improving performance with available computation.  Note that while the neural networks evaluations account for the majority of the run time, the complexity of the regret update phase is linear in the size of the tree. The run time is thus quadratic in the number of GT-CFR iterations. The absolute time cost could be reduced by an implementation that runs the regret update and expansion phase in parallel. For a more detailed analysis of \pog's complexity, see \sectionref{ref:alg-guarantees}.
Intuitively, we would expect $c = 1$ (corresponding to one regret update per expansion simulation) to be best choice. Due to these computational constraints, we chose by hand a small number of values for $c > 1$. Interestingly, we did notice that $c = 1$ is not always the best choice in practice and hope to explore this more thoroughly in the future.

In heads-up no-limit Texas hold'em, we evaluate \pog against Slumbot2019~\cite{Jackson13SlumbotNL,JacksonSlumbotGithub}, the best open-source heads-up no-limit computer poker player.
When training poker, \pog uses randomized betting abstractions described in \supplementaryref{sec:poker-bets} to reduce the number of actions from 20,000 to 4 or 5.
$\pog(10, 0.01)$ is trained for up to 1.1M training steps and then evaluated.
Since poker has particularly high variance, we use the Action-Informed Value Assessment Tool (AIVAT)~\cite{burch2017aivat} to compute a more accurate estimate of performance. 
We also evaluate \pog against a local best-response (LBR) player that can use only fold and call actions with a poker-specific heuristic, which has shown to find exploits in previous poker agents~\cite{lisy2017eqilibrium}. Table~\ref{tab:hunl} summarizes the results of \pog along with other recent poker agents. $\pog(10, 0.01)$ wins on average $7 \pm 3$ milli big blinds (0.7 chips) per hand, with 95\% confidence intervals (3.1M matches). LBR fails to find an exploit of \pog's strategy, and \pog wins on average by 434 $\pm$ 9 milli big blinds per hand.


In Scotland Yard, the current state-of-the-art agent in this game is based on MCTS with game-specific heuristic
enhancements~\cite{Nijssen12}. 
We call this agent ``PimBot'' based on its main author, Joseph Antonius Maria (``Pim'') Nijssen.
PimBot implements a variant of MCTS that uses determinization, heuristic evaluations and playout 
policies~\cite{Nijssen12,PimThesis}. PimBot won 34 out of 50 manually played games against the Nintendo DS Scotland Yard AI.

In our experiment \pog is trained up to 17M steps. In evaluation we play a head-to-head match with $\pog(400,1)$ against PimBot at different number of simulations per search. The results are shown in Figure~\ref{fig:pog_x_pimbot}. These results show that \pog is winning significantly even against PimBot with 10M search simulations (55\% win rate), compared to \pog searching a tiny fraction of the game. Interestingly PimBot does not seem to play stronger with more search at this point, as both the 1M and 10M iteration versions have the same performance against \pog.

As in chess and Go, \pog also demonstrates strong performance in these complex imperfect information games. In the case of poker, in addition to beating Slumbot it also beats the local best-response agent which was not possible for some previous agents (including Slumbot). Finally, \pog significantly beats the state-of-the-art agent in Scotland Yard, an imperfect information game with longer episodes and fundamentally different kind of imperfect information than in poker. Together, these results indicate that \pog is capable of strong performance across four games, two fundamentally different game types, and can act as a truly unified algorithm combining search, learning, and game-theoretic reasoning for competitive games.



\section*{Discussion}

\poglong (\pog) is a unified algorithm that combines search, learning, and game-theoretic reasoning. 
\pog is comprised of two main components: a growing-tree counterfactual regret minimization (GT-CFR) technique, and sound self-play which learns counterfactual value-and-policy networks via self-play. Most notably, \pog is a sound algorithm for both perfect and imperfect information games: as computational resources increase, \pog is guaranteed to produce better approximation of minimax-optimal strategies. This finding is also verified empirically in Leduc poker, where additional search leads to test-time approximation refinement, unlike any pure reinforcement learning algorithms that do not use search.

\CHANGED{In addition to being sound, \pog also demonstrates}
strong performance on challenge domains, using minimal domain knowledge. In the perfect information games of chess and Go, \pog performs at the level of human experts or professionals, but can be \CHANGED{substantially} weaker in head-to-head play than specialized algorithms for this class of games, like AlphaZero, when given the same resources. In the imperfect information game no-limit Texas hold'em poker, \pog beats Slumbot, the best openly available poker agent, and is shown not to be exploited by a local best-response agent using poker-specific heuristics. In Scotland Yard, \pog defeats the state-of-the-art agent.

There are some limitations of \pog that are worth investigating in future work. First, the use of betting abstractions in poker could be removed in favor of a general action-reduction policy for large action spaces. Second, \pog currently requires enumerating the information states per public state, which can be prohibitively expensive in some games; this might be approximated by a generative model that samples world states and operates on the sampled subset. Finally, substantial computational resources are used to attain strong play in challenge domains; an interesting question is whether this level of play is achievable with less computational resources.

\section*{Materials and Methods}
\label{sec:mat_methods}

We now give a detailed description of the \poglong algorithm. As \pog has several components, we describe them each individually first, and then describe how they are all combined toward the end of the section.
For clarity, many of the details (including full pseudocode) are presented in \supplementaryref{sec:alg-details}.

\subsection*{Counterfactual Value-and-Policy Networks}
\label{sec:alg-cvpns}

The first major component of \pog is a \defword{counterfactual value-and-policy network} (CVPN) with parameters $\btheta$, depicted in Figure~\ref{fig:cvpn}. These parameters represent a function $f_{\btheta}(\beta) = (\bv, \bp)$, where outputs $\bv$ are counterfactual values (one per information state per player), and \emphword{prior policies} $\bp$, one per information state for the acting player, in the public state $\spub(h)$ at some history of play $h$.

In our experiments, we use standard feed-forward networks and residual networks. The details of the architecture are described in \supplementaryref{sec:net-arch}.

\subsection*{Search via Growing-Tree CFR}
\label{sec:alg-gtcfr}

Growing-tree CFR (GT-CFR) is an algorithm that runs a CFR variant on a public game tree that is incrementally grown over time.
GT-CFR starts with an initial tree, $\cL^0$, containing $\beta$ and all of its child public states. Then each iteration, $t$, of GT-CFR consists of two phases:
\begin{enumerate}
    \item The \defword{regret update phase} runs several public tree CFR updates on the current tree $\cL^t$. 
    \item The \defword{expansion phase} expands $\cL^t$ by adding new public states via simulation-based expansion trajectories, producing a new larger tree $\cL^{t+1}$.
\end{enumerate}

\noindent When reporting the results we use the notation $\pog(s, c)$ for \pog running GT-CFR with $s$ total expansion simulations, and $c$ expansion simulations per regret update phase, so the total number of GT-CFR iterations is then $\frac{s}{\lceil c \rceil}$. The $c$ can be fractional, so \eg 0.1 indicates a new node every 10 regret update phases. Figure~\ref{fig:pog-search} depicts the whole GT-CFR cycle. We chose this specific notation to directly compare total expansion simulations, $s$, to AlphaZero.

The regret update phase runs $\left \lceil \frac{1}{c} \right \rceil$ updates (iterations) of public tree CFR on $\cL^t$ using simultaneous updates, regret-matching$^+$, and linearly-weighted policy averaging~\cite{Tammelin15CFRPlus}. 
At public tree leaf nodes, a \defword{query} is made to the CVPN at belief state $\beta'$, whose values $f_{\btheta}(\beta') = (\bv, \bp)$ are used as estimates of counterfactual values for the public subgame rooted at $\beta'$. 

In the expansion phase, new public tree nodes are added to $\cL$.
Search statistics, initially empty, are maintained over information states $s_i$, accumulated over all expansion phases within the same search.
At the start of each simulation, an information state $s_i$ is sampled from the beliefs in $\beta_{\text{root}}$. Then, a world state $w_{\text{root}}$ is sampled from $s_i$, with associated history $h_{\text{root}}$.
Actions are selected according to a mixed policy that takes into account learned values (via $\pi_{\text{PUCT}}(s_i(h))$) as well as the currently active policy ($\pi_{\text{CFR}}(s_i(h))$) from search: $\pi_{\text{select}}(s_i(h)) = \frac{1}{2} \pi_{\text{PUCT}}(s_i(h)) + \frac{1}{2} \pi_{\text{CFR}}(s_i(h))$. The first policy is determined by PUCT~\cite{Silver16Go} using counterfactual values $v_i(s_i, a)$ normalized by the sum of the opponent's reach probability at $s_i$ to resemble state-conditional action values, and the prior policy $\bp$ obtained from the queries.
The second is simply CFR's average policy at $s_i(h)$. 
As soon as the simulation encounters an information state $s_i \in \spub$ such that $\spub \not\in \cL$, the simulation ends, $\spub$ is added to $\cL$, and visit counts are updated along nodes visited during the trajectory.
Similarly to AlphaZero~\cite{Silver18AlphaZero}, virtual losses~\cite{SegalVirtualLoss} are added to the PUCT statistics when doing $\lceil c \rceil$ simulations inside one GT-CFR iteration.


AlphaZero always expands a single action/node at the end of the iteration (the action with the highest UCB score).
Optimal policies in perfect information games can be deterministic, and expanding a single action/node is a good way to avoid unneeded computation after unpromising actions.
MCTS methods are sound as long as the best action has been added, which is always true in the limit as the tree is completely filled out.
In imperfect information games, optimal policies might be stochastic, having non-zero probability over multiple actions.
Rather than expanding a single action, \pog thus expands the top $k$ actions as ranked by the prior.
We use $k=1$ for perfect information games, where computation cost is very important and we only need to find a single good action, and $k=\infty$ to add all children for imperfect information games where it is important to mix over multiple actions.
In addition to being sound in the limit, \pog also has a finite-time guarantee on policy quality when $k=\infty$.

\subsection*{Modified Continual Re-solving}
\label{sec:modified-resolving}

The continual re-solving method used by DeepStack~\cite{Moravcik17DeepStack} takes advantage of a few poker properties, which are not found in other games like Scotland Yard, so we use a more general re-solving method that can be applied to a broader class of games.
Recall that the re-solving step and the corresponding auxiliary game requires i) the current player's range ii) the opponent's counterfactual values.
This provides a succinct and sufficient representation to safely re-solve the subgame rooted in a public state $s_{pub}$.
In hold'em poker, players generally take turns making actions, and a depth-limited search tree for a re-solving auxiliary game can always be deep enough to contain a state for the opponent's action.
All player actions are fully visible to both players, so the opponent's maximum counterfactual value in the previous search tree can be used for the next re-solving auxiliary game, no matter what opponent action we are responding to.
By working within the single, fixed domain of poker, these properties let DeepStack and Libratus simply retrieve the re-solving summary information from its previous search.

As \pog is a general algorithm, it can no longer leverage this special case.
The current public state $s_\text{pub}$ might not have been included in the previous search tree, so the prior computation might not directly provide us with the required summary information for re-solving the subgame rooted in $s_\text{pub}$.
\pog thus starts its re-solving process in the state closest to the current state that is included in the previous search tree: $s^{prev}_{pub}$.
We initialize the search tree with a single branch leading from $s^\text{prev}_\text{pub}$ to $s_\text{pub}$, with all off-branch actions being leaves.
The search tree is then expanded forward from $s_\text{pub}$, as in DeepStack or Libratus.
This re-solving auxiliary game uses summary information for $s^\text{prev}_\text{pub}$ instead of $s_\text{pub}$, and by construction these values and probabilities are available in the previous search tree.

When generating a policy for this next re-solving auxiliary game with GT-CFR, we constrain the expansion phase of GT-CFR to only grow the tree under $s_\text{pub}$, to focus the computation on the states relevant for the current decision.
After re-solving, our action probabilities for our current information state will still come from the new re-solved policy at $s_\text{pub}$, which might not be at the root of the search tree.

Finally, like DeepStack, the gadget for the re-solving auxiliary game is modified by mixing in the opponent's range from the previous search.
As introduced in \cite{Burch14CFRD}, the gadget used to transform a subgame into a re-solving auxiliary game is a binary opponent decision for each opponent information state before the subgame.
At each information state, the opponent can either terminate (T) and receive the opponent counterfactual values in the re-solving summary, or follow (F) this line of play into the corresponding subgame.
The effect of the gadget is to generate an opponent range $r$ for the subgame.
Given an opponent range $r^{\text{prev}}$ from the previous search, \cite{Moravcik17DeepStack} modified the opponent range to be $\alpha r + (1-\alpha) r^\text{prev}$.
As with DeepStack, this regularization towards the previous opponent policy empirically improves the performance, and we used $\alpha = 0.5$.


\subsubsection*{Performance Guarantees for Continual Re-solving}
\label{ref:alg-guarantees}

Growing the tree in GT-CFR allows the search to selectively focus on parts of the space that are important for local decisions. Starting with a small tree and adding nodes over time does not have an additional cost in terms of convergence:

\begin{theorem}
\label{thm:main-theorem}
Let $\mathcal{L}^t$ be the public tree at time $t$.
Assume public states are never removed from the search tree, so $\mathcal{L}^t \subseteq \mathcal{L}^{t+1}$.
For any given tree $\mathcal{L}$, let $\treeinterior{\mathcal{L}}$ be the interior of the tree: all non-leaf, non-terminal public states where \gtcfr{} generates a policy.
Let $\treefrontier{\mathcal{L}}$ be the frontier of $\mathcal{L}$, containing the non-terminal leaves where \gtcfr{} uses $\epsilon$-noisy estimates of counterfactual values. 
Let $U$ be the maximum difference in counterfactual value between any two strategies, at any information state, and $A$ be the maximum number of actions at any information state.
Then, the regret at iteration $T$ for player $i$ is bounded:
\begin{align*}
    R^{T, \text{full}}_i \le \sum_{t=1}^T |\treefrontier{\mathcal{L}^t}|\epsilon + \sum_{\pubstate{} \in \treeinterior{\mathcal{L}^T}} |\mathcal{S}_i(\pubstate{})| U \sqrt{AT}
\end{align*}
\end{theorem}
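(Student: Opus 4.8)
\emph{Proof plan.}
The plan is to treat \gtcfr{} as counterfactual regret minimization run on a \emph{trunk} --- the current lookahead tree --- whose leaves are handled by an $\epsilon$-approximate value oracle, following the decomposition analysis of~\cite{Burch14CFRD}, and then to bookkeep the fact that the trunk $\cL^t$ grows with $t$. First I would invoke the counterfactual-regret decomposition of~\cite{08nips-cfr}: the full-game external regret $R^{T,\text{full}}_i$ is bounded by a recursive sum over player $i$'s information states of the positive parts of the immediate counterfactual regrets, and this recursion can be cut at any frontier separating a trunk from the subtrees hanging below it, so that the trunk contributes one immediate-regret term per information state while each hanging subtree contributes the regret incurred \emph{within} that subtree. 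I would apply this at every iteration $t$ with the cut taken to be $\cL^t$: the interior states $\treeinterior{\cL^t}$ form the trunk and the frontier states $\treefrontier{\cL^t}$ are the roots of the hanging subtrees.

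For the trunk contribution I would use that at each $\pubstate \in \treeinterior{\cL^t}$ and each information state $s_i \in \cS_i(\pubstate)$, \gtcfr{} updates the policy with regret-matching$^+$~\cite{Hart00,Tammelin15CFRPlus} on immediate counterfactual regrets of magnitude at most $U$ over at most $A$ actions; hence the cumulative counterfactual regret at $s_i$ after any number $T' \le T$ of iterations during which $s_i$ is present in the tree is at most $U\sqrt{AT'} \le U\sqrt{AT}$. Because public states are never removed, $\treeinterior{\cL^t} \subseteq \treeinterior{\cL^T}$ for all $t$, so the double sum over iterations and interior states reorganizes into a sum over $\pubstate \in \treeinterior{\cL^T}$ and $s_i \in \cS_i(\pubstate)$ of these per-state bounds, giving $\sum_{\pubstate \in \treeinterior{\cL^T}} |\cS_i(\pubstate)|\,U\sqrt{AT}$.

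For the subtree contribution I would use that \gtcfr{} replaces the value at a frontier node $\pubstate \in \treefrontier{\cL^t}$ by the CVPN estimate, which differs from the true counterfactual value by at most $\epsilon$. Exact counterfactual-best-response values below the cut would add no regret beyond the trunk's (the trunk already accounts for the best continuation), so an $\epsilon$ error adds at most $\epsilon$ per frontier node per iteration, i.e.~at most $\sum_{t=1}^T |\treefrontier{\cL^t}|\epsilon$ in total; here the reach-probability weighting built into counterfactual values is what keeps this at $\epsilon$ per node rather than scaling with reach or with $|\cS_i(\pubstate)|$. Adding the two contributions yields the claimed bound.

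I expect the main obstacle to be making the decomposition rigorous across a \emph{moving} frontier: regret-matching$^+$ at an information state is effectively restarted when that state first enters $\cL^t$, and the values used below the frontier change as the frontier moves and need not be consistent with any single fixed continuation strategy, so I must argue that the telescoping in the counterfactual-regret decomposition still goes through when the trunk/subtree split differs from iteration to iteration, and that the CVPN can legitimately be treated as an $\epsilon$-approximate best-response oracle for every such split. The remaining pieces --- the $U\sqrt{AT}$ regret-matching$^+$ bound and the per-information-state summation --- are routine; the care is all in tracking the growth of $\cL^t$ and in the normalization that converts a per-frontier-node value error of $\epsilon$ into exactly an $\epsilon$ additive regret term.
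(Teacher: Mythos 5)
Your proposal is correct and follows essentially the same route as the paper: cut the standard CFR regret decomposition at the (growing) frontier, charge $\epsilon$ per frontier public state per iteration for the value-function error, bound each interior information state's immediate regret by the regret-matching bound $U\sqrt{AT}$, and use monotone growth of $\cL^t$ to collapse the interior contribution into a single sum over $\treeinterior{\cL^T}$. The obstacle you flag (telescoping across a moving trunk/frontier split) is exactly what the paper's lemmas handle, via an interval decomposition over the iterations during which each public state sits in the interior, which degenerates to a single interval of length at most $T$ when states are never removed.
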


The regret $R^{T, \text{full}}$ in Theorem~\ref{thm:main-theorem} is the gap in performance between \gtcfr{} iterations and the highest-value strategy. Theorem~\ref{thm:main-theorem} shows that the average policy returned by \gtcfr{} converges towards a Nash equilibrium at a rate of $1/\sqrt{T}$, but with some minimum exploitability due to $\epsilon$-error in the value function.
There is also no additional cost when using \gtcfr{} as the game-solving algorithm for each re-solving search step in continual re-solving:
\begin{theorem}
\label{thm:continual-resolving}
Assume we have played a game using continual re-solving, with one initial solve and $D$ re-solving steps.
Each solving or re-solving step finds an approximate Nash equilibrium through $T$ iterations of \gtcfr{} using an $\epsilon$-noisy value function, public states are never removed from the search tree, the maximum interior size $\sum_{\pubstate{} \in \treeinterior{\mathcal{L}^T}} |\mathcal{S}_i(\pubstate)|$ of the tree is always bounded by $N$, the frontier size of the tree is always bounded by $F$, the maximum number of actions at any information states is $A$, and the maximum difference in values between any two strategies is $U$.
The exploitability of the final strategy is then bounded by
$(5D+2)\left(F\epsilon + NU \sqrt{\frac{A}{T}}\right)$.
\end{theorem}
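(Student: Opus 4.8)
The plan is to derive Theorem~\ref{thm:continual-resolving} by plugging the single-search guarantee of Theorem~\ref{thm:main-theorem} into an error-propagation argument for continual re-solving, in the style of CFR-D~\cite{Burch14CFRD} and DeepStack~\cite{Moravcik17DeepStack} but instantiated with the general re-solving gadget of Appendix~\ref{sec:appendix-gadget-game}. The three steps are: (i) bound the exploitability contributed by each of the $D{+}1$ individual \gtcfr{} searches; (ii) show by induction on the number of re-solving steps that these per-search errors accumulate only additively, with a bounded constant per step; (iii) sum the series.

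\emph{Per-search bound.}
\gtcfr{} is a self-play regret minimizer run with simultaneous updates for both players, so in a single search the exploitability (duality gap) of the average strategy it returns --- with respect to the game or gadget game it is actually solving --- is at most $\tfrac{1}{T}\bigl(R^{T,\text{full}}_1 + R^{T,\text{full}}_2\bigr)$~\cite{08nips-cfr}. Substituting Theorem~\ref{thm:main-theorem} for each player and dividing by $T$: the interior term contributes $NU\sqrt{A/T}$ under the assumed bound $\sum_{\pubstate \in \treeinterior{\mathcal{L}^T}}|\mathcal{S}_i(\pubstate)| \le N$, and the frontier term contributes $F\epsilon$ under the assumed frontier bound, so each search returns a strategy whose exploitability in its (gadget) game is at most $\delta := F\epsilon + NU\sqrt{A/T}$. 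Crucially, the tree-growing behaviour of \gtcfr{} is already absorbed into Theorem~\ref{thm:main-theorem}, so from here on each search is treated as a black box producing a $\delta$-exploitable solution of a fixed (gadget) game.

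\emph{Propagation.}
Continual re-solving plays a strategy stitched from the initial solve and the $D$ re-solves, each overwriting the subtree below the public state at which it runs; soundness of this stitching~\cite{sustr2020sound} is what makes the exploitability of the whole object controllable. I would prove by induction on $d$ that after $d$ re-solving steps the stitched strategy has exploitability at most $(5d{+}2)\delta$. Base case ($d=0$, the initial depth-limited solve from the start of the game): $\le 2\delta$, one $\delta$ for \gtcfr{}'s own error and one more for the gap between the $\epsilon$-noisy frontier values used while solving and the true continuation values. Inductive step: the $(d{+}1)$-th re-solve builds the gadget of Appendix~\ref{sec:appendix-gadget-game} from (a) the player's current range, kept consistent with the already-played strategy by construction of continual re-solving, and (b) the opponent's counterfactual values carried over from step $d$. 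Safe re-solving guarantees that replacing the old subgame policy with \emph{any} $\delta$-exploitable gadget solution cannot increase exploitability by more than a constant multiple of $\delta$ together with the error already carried in those values --- in the gadget the opponent may always decline to enter the subgame and claim its preserved value, forcing the new policy to be essentially no easier to exploit there~\cite{Burch14CFRD,Brown17Safe}. Charging the three sources of slack at this step (the gadget-solve error, the error in the player's range, and the inherited error in the opponent counterfactual values) bounds the increase by $5\delta$, giving $(5(d{+}1)+2)\delta$ and closing the induction. Taking $d = D$ yields $(5D{+}2)\bigl(F\epsilon + NU\sqrt{A/T}\bigr)$.

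\emph{Main obstacle.}
The genuinely delicate point is the additivity in step (iii): one must rule out any multiplicative blow-up of error over the $D$ re-solves. This reduces to showing that the ``safety'' of each re-solve is robust to approximate inputs --- that the gadget certifies that the new subgame strategy's counterfactual values dominate the opponent's preserved values up to the input error, with no amplification constant --- which in turn requires proving the gadget-game analogue of the CFR-D re-solving lemma for the more general gadget of Appendix~\ref{sec:appendix-gadget-game}, since the DeepStack argument leans on poker-specific structure. One also has to verify that \gtcfr{}'s tree growth does not corrupt the range and value bookkeeping passed between consecutive searches; it does not, because the root belief and opponent values entering each search are external inputs while the growth is internal to the search --- a point already covered by Theorem~\ref{thm:main-theorem}. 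Everything else --- tracking the constants and summing the (non-geometric, arithmetic) series --- is routine.
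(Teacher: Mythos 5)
Your high-level architecture is exactly the paper's: the initial solve contributes $2\delta$ with $\delta = F\epsilon + NU\sqrt{A/T}$ obtained from the single-search regret bound, each of the $D$ re-solves contributes at most $5\delta$, and unrolling the arithmetic series gives $(5D+2)\delta$. You also correctly locate where the real work lies. But that located work is precisely what is missing, and it is not a routine verification: the paper devotes a separate, substantial theorem (Theorem~\ref{thm:resolving_average_value_constraints}) to it. The subtlety you partially gloss over is that \pog{} does not carry forward opponent \emph{best-response} counterfactual values into the gadget --- with a value network in the loop those are not computable --- but the opponent's \emph{average self-play} values $\bar{v}(\infostate{o}) = \frac{1}{T}\sum_t \cfv{\infostate{o}}{\pi^t}$. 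The classical CFR-D/DeepStack safety argument does not directly apply to these, and the paper's proof has to split the error into an underestimation term $U^S_{\bar{v},\bar{\pi}} = \min_{\pi^{*S}}\sum_{\infostate{o}}(\brcfv{\infostate{o}}{\bar{\pi}\leftarrow\pi^{*S}} - \bar{v}(\infostate{o}))^+$ (bounded by the opponent's full counterfactual regret of $\bar{\pi}$ in the subgame, hence by Theorem~\ref{thm:gtcfr_regret_simple}) and an overestimation term controlled by analyzing regret in an augmented ``opt-out'' game in which each iterate $\pi^t$ is lifted by never taking the opt-out action, using the identity $\tilde{R}^T(\infostate{o}) = (R^T(\infostate{o}))^+$. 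None of this appears in your sketch.

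Relatedly, your accounting of the constant $5$ does not match the decomposition that actually closes the bound. The paper's per-step increase is $(\bar{R}^S_o)^+ + (\bar{R}^S_i)^+ + \bar{R}_i + 2\left(F(\epsilon+\xi) + NU\sqrt{A/T}\right)$: two regret terms from the re-solving gadget solve, one regret term $\bar{R}_i$ of the \emph{player} inherited from the previous solve (not the opponent's values, which is where your ``three sources of slack'' framing points), and a factor-of-two error term covering both under- and over-estimation of the opt-out values. Your narrative of ``gadget-solve error, range error, inherited value error'' would need to be reworked into this form; as written it asserts the constant rather than deriving it, and the range error you charge is not actually one of the five terms (ranges enter the gadget exactly, by construction of continual re-solving). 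So: same skeleton as the paper, correct identification of the hard lemma, but the hard lemma itself --- which is most of the proof --- is left unproved, and the per-step constant is not actually established.
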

Theorem~\ref{thm:continual-resolving} is similar to Theorem 1 of \cite{Moravcik17DeepStack}, adapted to \gtcfr{} and using a more detailed error model which can more accurately describe value functions trained on approximate equilibrium strategies. It shows that continual re-solving with \gtcfr{} has the general properties we might desire: exploitability decreases with more computation time and decreasing value function error, and only increases linearly with game length.
Proofs of these theorems are presented as supplementary text.

The computational complexity of a \gtcfr{} re-solving step with $T$ iterations expanding $k$ children is $\mathcal{O}(kT^2)$ public states visited and CVPN network calls.
In the special case of perfect information games, the number of network calls can be reduced to $\mathcal{O}(T)$.
At every iteration $t$, in the expansion phase \gtcfr{} will traverse a single trajectory through the tree to expand a leaf, and use the CVPN to evaluate the newly expanded children.
This requires $k$ network calls, and a worst case of $|\mathcal{L}^t|$ states visited or $\lceil \log_b{|\mathcal{L}^t|} \rceil$ states visited in a balanced $b$-ary tree.
In the regret update phase, \gtcfr{} visits every state in $\mathcal{L}^t$ and uses the CVPN to evaluate every leaf of the tree.
Because $k$ child states are added to the tree at each iteration, $|\mathcal{L}^t| \le \mathcal{O}(kt)$, giving the stated bounds.

In perfect information games, $k=1$, each player range is a single number, and we only need to evaluate a state once because the optimal policy does not depend on the player ranges.
If a state is evaluated once with a range of $1$ for both players and then stored, any other belief state can by evaluated by scaling the stored result by the opponent's ranges.

\subsection*{Data Generation via Sound Self-play}
\label{sec:alg-selfplay}

\poglong generates episodes of data in self-play by running searches at each decision point.
Each episode starts at the initial history $h_0$ corresponding to the start of the game, and produces a sequence of histories $(h_0, h_1, \cdots)$. At time $t$, the agent runs a local search and then selects an action $a_t$,
and the next history $h_{t+1}$ is obtained from the environment by taking action $a_t$ at $h_t$.
Data for training the CVPN is collected via resulting trajectories and the individual searches.

When generating data for training the CVPN, it is important that searches performed at different public states be consistent with both the CVPN represented by $\btheta$ and with searches made at previous public states along the same trajectory (\eg two searches should not be computing parts of two different optimal policies). This is a critical requirement for sound search~\cite{Burch14CFRD,Moravcik17DeepStack,sustr2020sound}, and we refer to the process of a sound search algorithm generating data in self-play as \defword{sound self-play}.
To achieve sound self-play, searches performed during data generation run GT-CFR on the modified safe re-solving auxiliary game (as described in \sectionref{sec:modified-resolving}).

\subsection*{Training Process}
\label{sec:training}

The quality of the policies produced by GT-CFR and data generated by sound self-play depends critically on the values returned by the CVPN. 
Hence, it is important for the estimates to be accurate in order to produce high-performance searches and generate high-quality data.
In this subsection, we describe the procedure we use to train the CVPN. The process is summarized in Figure~\ref{fig:pog-training}.

\subsubsection*{Query Collection}
\label{sec:training-query-collection}

As described in \sectionref{sec:alg-gtcfr} and \sectionref{sec:alg-selfplay},
episodes are generated by each player running searches of GT-CFR from the current public state. Each search produces a number of network queries from public tree leaf nodes $\beta$ (depicted as pink nodes in Figure~\ref{fig:pog-training}). 

The training process improves the CVPN via supervised learning.  Values are trained using Huber loss~\cite{huber64} based on value targets and the policy loss is cross entropy with respect to a target policy. 
Value and policy targets are added to a sliding window data set of training data that is used to train the CVPN concurrently. The CVPN is updated asynchronously on the actors during training.

\subsubsection*{Computing Training Targets}
\label{sec:training-targets}

Policy targets are assembled from the searches started at public states along the main line of episodes (the histories reached in self-play) generated by sound self-play described in \sectionref{sec:alg-selfplay}. 
Specifically, they are the output policies for all information states within the root public state, computed in the regret update phase of GT-CFR.

Value targets are obtained in two different ways. Firstly, the outcome of the game is used as a (TD(1)) value target for states along the main line of episodes generated by sound self-play.
Secondly, value targets are also obtained
by bootstrapping: running an instance of GT-CFR from subgames rooted at input queries. In principle, any solver could be used because any subgame rooted at $\beta$ has well-defined values. Thus, this step acts much like a policy improvement operator via decomposition described in \sectionref{sec:bg-decomposition}.
Specifically, the value targets are the final counterfactual values after $T$ iterations of GT-CFR for all the information states within the public state that initiated the search.
The specific way that the different value targets are assigned is described by the pseudocode in \supplementaryref{sec:pseudocode} and determined by a hyperparameter noted in Table~\ref{tab:hparams-per-game}.

\subsubsection*{Recursive Queries}
\label{sec:recursive-queries}

While the solver is computing targets for a query, it is also generating more queries itself by running GT-CFR.
Some of these \defword{recursive queries} are also added to the buffer for future solving, so that the CVPN can produce reasonable answers for all leaves in a search, not just those on the self-play lines.
As a result, at any given time the buffer may include queries generated by search in the main self-play game or by solver-generated queries off the main line.
To ensure that the buffer is not dominated by recursive queries, we set the probability of adding a new recursive query to less than 1 (in our experiments, the value is typically 0.1 or 0.2; see Table~\ref{tab:hparams-per-game} for the exact values).

\subsubsection*{Consistency of Training Process}

One natural question is whether, or under what circumstances, the training process could ensure convergence to the optimal values? 
The answer is positive: the training process converges to the optimal values, asymptotically, as $T \rightarrow \infty$ and with very large (exponential) memory.

Informally, imagine an oracle function $f(\beta)$ that can simply memorize the values and policy for the particular $\beta$ similar to a tabular value or policy iteration algorithm except with continuous keys. For any subgame rooted at some $\beta$ with a depth of 1 (every action leads to terminal states), the values and policies can be computed and stored for $\beta$ after $T$ iterations of the solver. This can then be applied inductively: since CFR is deterministic, for any subgame on the first iteration of GT-CFR, a finite number of queries will be generated. Each of these queries will be solved using GT-CFR. Eventually, the query will be a specific one that is one step from the terminal state whose values can be computed exactly and stored in $f(\beta)$. As this value was generated in self-play or by a query solver, and CFR is deterministic, it will produce another self-play game with the identical query, except it will load the solved value from $f(\beta)$, and inductively the values will get propagated from the bottom up.
Since CFR is deterministic and $T$ is finite, these ensure that the memory requirement is not infinite despite the continuous-valued keys. Practically, the success of the training process will depend on the representational capacity and training efficacy of the function approximation (\ie neural network architecture).

For a fully-detailed description of the algorithm, including hyperparameter values and specific descriptions of each process described above, see \supplementaryref{sec:alg-details}.

%


\ifcamera

\else
\bibliography{paper}
\bibliographystyle{ScienceAdvances}

\fi

\section*{Acknowledgements} 
%
%

We thank several people for their help and feedback: Ed Lockhart, Michael Johanson, Adam White, Julian Schrittwieser, Thomas Hubert, Michal Sustr, Leslie Acker, Morgan Redshaw, Dustin Morrill, Trevor Davis, Stephen McAleer, Sanah Choudry, and Shayna Bowling.

We would like to extend a special thanks to Mark Winands and Pim Nijssen for providing the code for and helping us configure their Scotland Yard agent, and special thanks to Eric Jackson for providing the code for and assistance with his poker agent.

\noindent \textbf{Author Contributions:}
Conceptualization: NBa, MB, NBu, MM, MS, KW;
Methodology: NBu, MM, MS, KW;
Software: NBa, NBu, JD, ZH, RK, ML, MM, MS, FT, KW;
Validation: NBa, JD, RK, KW;
Formal analysis: NBu;
Investigation: NBa, NBu, JD, ED, RK, ML, MM, MS, FT, KW;
Writing --- original draft: NBa, MB, NBu, RK, ML, MM, MS;
Writing --- review \& editing: NBa, MB, NBu, ML;
Visualization: NBa, JD, RK;
Supervision: NBa, MB, MS;
Project administration: NBa, MB, AC, JD.

\noindent \textbf{Competing Interests:} The authors declare they have no competing interest.

\noindent \textbf{Data and Materials Availability:} All data needed to evaluate the conclusions of the paper are present in the paper and/or the Supplementary Materials.


\ifcamera


\section*{Supplementary Materials}
Supplementary Text\\
Figs. S1 to S4\\
Tables S1 to S6\\
References \textit{(77-80)}

\clearpage

\noindent {\bf Fig. 1.} An example structure of public belief state $\beta = (\spub, r)$. $\spub$ translates to two sets of information states, one for player 1, $\cS_1(\spub) = \{\bar{s}_0, \bar{s}_1\}$, and one for player 2, $\cS_2(\spub) = \{s_0, s_1, s_2\}$. Each information state includes different partitions of possible histories. Finally $r$ contains reach probabilities for information states for both players.

\noindent {\bf Fig. 2.} An example of depth-limited CFR solving using decomposition in a game with two specific subgames shown. Standard CFR would require traversing all the subgames. Depth-limited CFR decomposes the solve into running down to depth $d=2$ and using $\bv = \bv_\btheta(\beta)$ to represent the second subgame's values. On the downward pass, ranges $r$ are formed from policy reach probabilities. Values are passed back up to tabulate accumulating regrets. Re-solving a subgame would require construction of an auxiliary game~\cite{Burch14CFRD} (not shown).

\noindent {\bf Fig. 3.} Exploitability of \pog as a function of the number of training steps under different number of simulations of GT-CFR. For both \textsf{(A)} Leduc poker and \textsf{(B)} Scotland Yard (glasses map), each line corresponds to a different evaluation condition, \eg $\pog(s,c)$ used at evaluation time. The ribbon shows minimum and maximum exploitability out of 50 seeded runs for each setup. The units of the y-axis in Leduc poker are milli big blinds per hand (mbb/h), which corresponds to one thousandth of a chip in Leduc. In Scotland Yard the reward is either -1 (loss) or +1 (win). All networks were trained using a single training run of $\pog(100,1)$, and the x-values correspond to a network trained for the corresponding number of steps.

\noindent {\bf Fig. 4.} Scalability of \pog with increasing number of neural network evaluations compared to AlphaZero measured on relative Elo scale. The x-axis corresponds to the number of simulations in AlphaZero and $s$ in $\pog(s,c)$.  Elo of $\pog(s=800, c)$ was set to be 0. In chess \textsf{(A)}, $c = 10$ for all runs, with varying $s \in \{ 800, 2400, 7200, 21600, 64800 \}$. In Go \textsf{(B)}, we graph \pog using $(s,c) \in \{ (800, 1), (2000, 10), (4000, 10), (8000, 10), (16000, 16)$.\}

\noindent {\bf Fig. 5.}  Win rate of $\pog(400,1)$ against PimBot with varying simulations. 2000 matches were played for each data point, with roles swapped for half of the matches. Note that the x-axis has logarithmic scale. The ribbon shows 95\% confidence interval.

\noindent {\bf Fig. 6.} A counterfactual value-and-policy network (CVPN).Each query, $\beta$, to the network includes beliefs $r$ and an encoding of $\spub$ to get the counterfactual values $\bv$ for both players and policies $\bp$ for the acting player in each information state $s_i \in \spub(h)$, producing outputs $f_\btheta$. Since players may have different actions spaces (as in \eg Scotland Yard) there are two sets of policy outputs: one for each player, and $\bp$ refers to the one for the acting player at $\spub$ only (depicted as player 1 in this diagram by greying out player 2's policy output).  

\noindent {\bf Fig. 7.} Overview of the phases in one iteration of Growing-Tree CFR. The regret update phase propagates beliefs down the tree, obtains counterfactual values from the CPVN at leaf nodes (or from the environment at terminals), and passes back counterfactual values to apply the CFR update. The expansion phase simulates a trajectory from the root to a leaf, adding public states to the tree. In this case the trajectory starts in the public belief state $s_{pub}$ by sampling the information state $s_0$. After that the sampled action $a_0$ leads to the information state $s^0_0$ in public state $s^0_{pub}$, and finally the action $a_1$ leads to a new public state that is added to the tree.

\noindent {\bf Fig. 8.}  \pog Training Process. Actors collect data via sound self-play and trainers run separately over a distributed network.  \textsf{(A)} Each search produces a number of CVPN \defword{queries} with input $\beta$.     \textsf{(B)} Queries are added to a query buffer and subsequently solved by a \defword{solver} that studies the situation more closely via another invocation of GT-CFR. During solving, new recursive queries might be added back to the query buffer; separately the network is \textsf{(C)} trained on minibatches sampled from the replay buffer to predict values and policy targets computed by the solver.


\clearpage
\begin{table}[t]
\centering
\begin{tabular}{lrclr}
\toprule
                    Chess Agents &  Rel. Elo && Go Agents &  Rel. Elo \\
\cmidrule(lr){1-2}\cmidrule(lr){4-5}
            AlphaZero(sims=60k) &  +592       && AlphaZero(s=16k, t=800k) & +3139 \\
 Stockfish(threads=16, time=4s) &  +530       && AlphaZero(s=8k, t=800k) & +2875 \\
             AlphaZero(sims=8k) &  +455       && \textbf{SoG(s=16k, c=10)} & \textbf{+1970}\\
          \bf{SoG(s=60k, c=10)} &  \bf{+420}  && \textbf{SoG(s=8k, c=10)} & \textbf{+1902}\\
  Stockfish(threads=4, time=1s) &  +382       && Pachi(s=100k) & +869 \\
           \bf{SoG(s=8k, c=10)} &  \bf{+268}  && Pachi(s=10k) & +231\\
Stockfish(threads=1, time=0.1s) &    0        && GnuGo(l=10) & 0 \\
\bottomrule
\end{tabular}
\caption{\textbf{Relative Elo of different agents in chess (left), and Go (right).} Each agent played 200 matches (100 as white and 100 as black) against every other agent in the tournament.  For chess, Elo of Stockfish with a single thread and 100ms thinking time was set to be 0.  For Go, 
Elo of GnuGo was set to be 0.  The other values are relative to those.  AlphaZero(s=16k, t=800k) refers to 16000 search simulations. For full results, see Tables~\ref{tab:full-go-results} and \ref{tab:full-go-results-recursive}.}
\label{tab:elo-results}
\end{table}

\clearpage
\begin{table}[t]
\centering
\begin{tabular}{ l | c c  }
\toprule
{\bf Agent Name} & {\bf Slumbot} & {\bf LBR}~\cite{lisy2017eqilibrium}\\ 
\midrule
Slumbot (2016) & - & -522 $\pm$ 50 \\ 
ARMAC~\cite{gruslys2020advantage}  & - & -460 $\pm$ 260 \\ 
DeepStack~\cite{Moravcik17DeepStack}  & - & $428 \pm 87$\\ 
Modicum~\cite{Brown18}  & 11 $\pm$ 5  & -\\ 
ReBeL~\cite{brown2020combining}  & 45 $\pm$ 5 & - \\ 
Supremus~\cite{Zarick20Supremus} & 176 $\pm$ 44 & 951 $\pm$ 96 \\ 
\midrule
$\pog(10, 0.01)$  &  7 $\pm$ 3 & 434 $\pm$ 9\\
\bottomrule

\end{tabular}
\caption{\textbf{Head-to-head results showing expected winnings (mbb/h) of \pog and other recently published agents against Slumbot and LBR.} The LBR agent use either fold or call (FC) actions in the all four rounds. The $\pm$ shows one standard error. LBR results for Slumbot are from~\cite{lisy2017eqilibrium}. The other results are from the papers describing the agents.
}
\label{tab:hunl}
\end{table}

\else

\clearpage
\begin{figure}[t!]
\centering
  \ifcamera
  \else
    \includegraphics[width=0.4\textwidth]{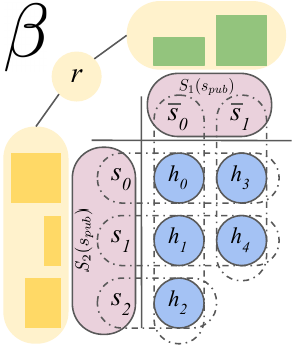}
  \fi
  
  \captionof{figure}{\textbf{An example structure of public belief state $\beta = (\spub, r)$.} $\spub$ translates to two sets of information states, one for player 1, $\cS_1(\spub) = \{\bar{s}_0, \bar{s}_1\}$, and one for player 2, $\cS_2(\spub) = \{s_0, s_1, s_2\}$. Each information state includes different partitions of possible histories. Finally $r$ contains reach probabilities for information states for both players.}
  \label{fig:public-belief-state}
\end{figure}

\clearpage
\begin{figure}[t!]
\centering
  \ifcamera
  \else
    \includegraphics[width=0.9\textwidth]{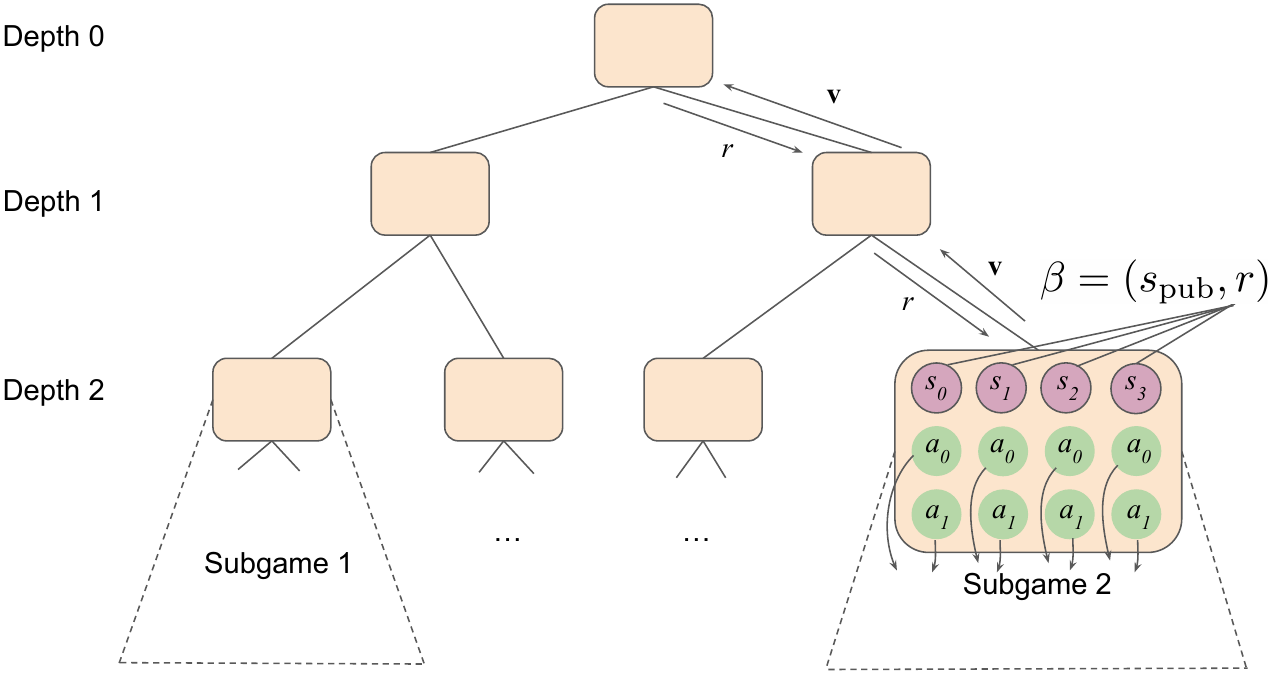}
  \fi
  \caption{\textbf{An example of depth-limited CFR solving using decomposition in a game with two specific subgames shown.} Standard CFR would require traversing all the subgames. Depth-limited CFR decomposes the solve into running down to depth $d=2$ and using $\bv = \bv_\btheta(\beta)$ to represent the second subgame's values. On the downward pass, ranges $r$ are formed from policy reach probabilities. Values are passed back up to tabulate accumulating regrets. Re-solving a subgame would require construction of an auxiliary game~\cite{Burch14CFRD} (not shown).}
\label{fig:decomp}
\end{figure}

\clearpage
\begin{figure}[t!]
    \centering

    \ifcamera
    \else
      \includegraphics[width=\textwidth]{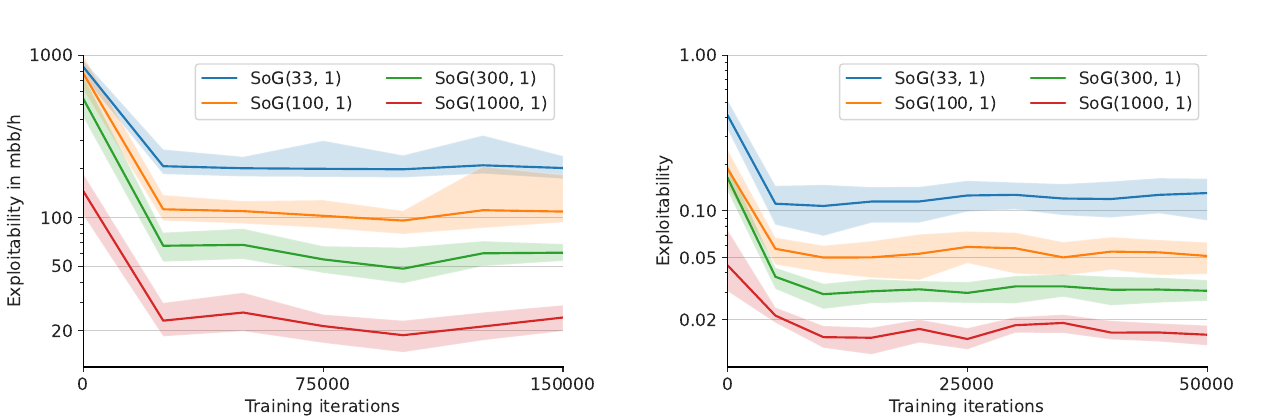}

     \begin{subfigure}[b]{0.48\textwidth}
         \centering
         \caption{Leduc poker}
     \end{subfigure}
     \hfill
     \begin{subfigure}[b]{0.48\textwidth}
         \centering
         \caption{Scotland Yard (glasses map)}
     \end{subfigure}
    \fi
 
    \caption{\textbf{Exploitability of \pog as a function of the number of training steps under different number of simulations of GT-CFR.} For both \textsf{(A)} Leduc poker and \textsf{(B)} Scotland Yard (glasses map), 
    each line corresponds to a different evaluation condition, \eg $\pog(s,c)$ used at evaluation time. 
    The ribbon shows minimum and maximum exploitability out of 50 seeded runs for each setup. The units of the y-axis in Leduc poker are milli big blinds per hand (mbb/h), which corresponds to one thousandth of a chip in Leduc. In Scotland Yard the reward is either -1 (loss) or +1 (win).
    All networks were trained using a single training run of $\pog(100,1)$, and the x-values correspond to a network trained for the corresponding number of steps.
       }
    \label{fig:exploitability}
\end{figure}

\clearpage
\begin{table}[t]
\centering
\begin{tabular}{lrclr}
\toprule
                    Chess Agents &  Rel. Elo && Go Agents &  Rel. Elo \\
\cmidrule(lr){1-2}\cmidrule(lr){4-5}
            AlphaZero(sims=60k) &  +592       && AlphaZero(s=16k, t=800k) & +3139 \\
 Stockfish(threads=16, time=4s) &  +530       && AlphaZero(s=8k, t=800k) & +2875 \\
             AlphaZero(sims=8k) &  +455       && \textbf{SoG(s=16k, c=10)} & \textbf{+1970}\\
          \bf{SoG(s=60k, c=10)} &  \bf{+420}  && \textbf{SoG(s=8k, c=10)} & \textbf{+1902}\\
  Stockfish(threads=4, time=1s) &  +382       && Pachi(s=100k) & +869 \\
           \bf{SoG(s=8k, c=10)} &  \bf{+268}  && Pachi(s=10k) & +231\\
Stockfish(threads=1, time=0.1s) &    0        && GnuGo(l=10) & 0 \\
\bottomrule
\end{tabular}
\caption{\textbf{Relative Elo of different agents in chess (left), and Go (right).} Each agent played 200 matches (100 as white and 100 as black) against every other agent in the tournament.  For chess, Elo of Stockfish with a single thread and 100ms thinking time was set to be 0.  For Go, 
Elo of GnuGo was set to be 0.  The other values are relative to those.  AlphaZero(s=16k, t=800k) refers to 16000 search simulations. For full results, see Tables~\ref{tab:full-go-results} and \ref{tab:full-go-results-recursive}.}
\label{tab:elo-results}
\end{table}

\clearpage
\begin{figure}
  \centering

  \ifcamera
  \else
    \includegraphics[width=\textwidth]{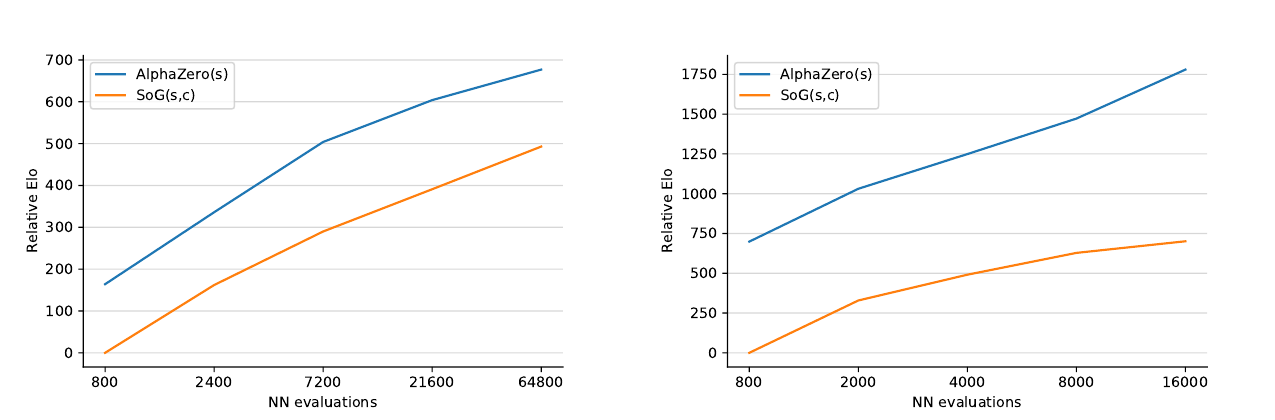}

    \begin{subfigure}[b]{0.48\textwidth}
      \centering
      \caption{Chess}
    \end{subfigure}
    \hfill
    \begin{subfigure}[b]{0.48\textwidth}
      \centering
      \caption{Go}
    \end{subfigure}
    \fi
    \caption{\textbf{Scalability of \pog with increasing number of neural network evaluations compared to AlphaZero measured on relative Elo scale.} The x-axis corresponds to the number of simulations in AlphaZero and $s$ in $\pog(s,c)$.  Elo of $\pog(s=800, c)$ was set to be 0. In chess \textsf{(A)}, $c = 10$ for all runs, with varying $s \in \{ 800, 2400, 7200, 21600, 64800 \}$. In Go \textsf{(B)}, we graph \pog using $(s,c) \in \{ (800, 1), (2000, 10), (4000, 10), (8000, 10), (16000, 16)$.\}}
        \label{fig:pog-and-az-scaling}
\end{figure}


\clearpage
\begin{figure}[t]
\begin{minipage}{.45\textwidth}
  \centering
\begin{tabular}{ l | c c  }
\toprule
{\bf Agent Name} & {\bf Slumbot} & {\bf LBR}~\cite{lisy2017eqilibrium}\\ 
\midrule
Slumbot (2016) & - & -522 $\pm$ 50 \\ 
ARMAC~\cite{gruslys2020advantage}  & - & -460 $\pm$ 260 \\ 
DeepStack~\cite{Moravcik17DeepStack}  & - & $428 \pm 87$\\ 
Modicum~\cite{Brown18}  & 11 $\pm$ 5  & -\\ 
ReBeL~\cite{brown2020combining}  & 45 $\pm$ 5 & - \\ 
Supremus~\cite{Zarick20Supremus} & 176 $\pm$ 44 & 951 $\pm$ 96 \\ 
\midrule
$\pog(10, 0.01)$  &  7 $\pm$ 3 & 434 $\pm$ 9\\
\bottomrule
\end{tabular}
\vspace{-0.05in}
\captionof{table}{\small{\textbf{Head-to-head results showing expected winnings (mbb/h) of \pog and other recently published agents against Slumbot and LBR.} The LBR agent use either fold or call (FC) actions in the all four rounds. The $\pm$ shows one standard error. LBR results for Slumbot are from~\cite{lisy2017eqilibrium}. The other results are from the papers describing the agents.
}}
\label{tab:hunl}
\end{minipage}
\hspace{0.05\textwidth}
\begin{minipage}{.45\textwidth}
  \centering
  \ifcamera
  \else
    \vspace{-0.5cm}\includegraphics[width=\textwidth]{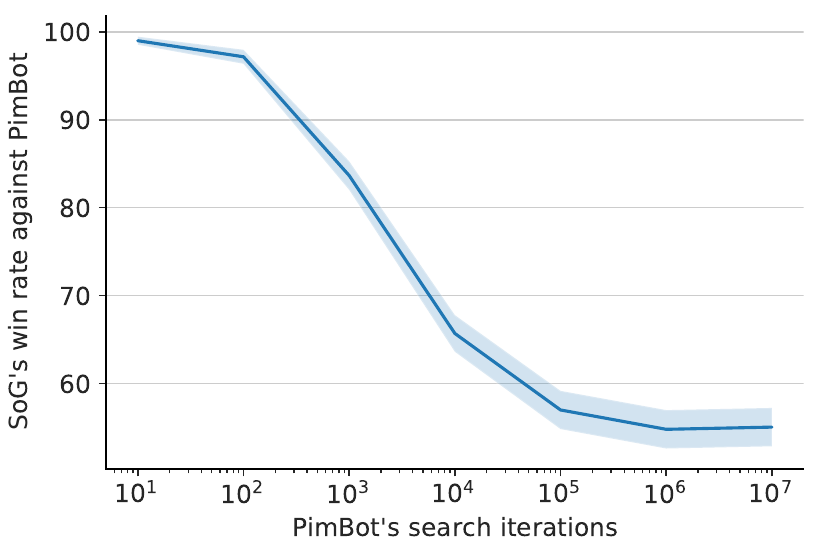}
  \fi
  \captionof{figure}{\textbf{Win rate of $\pog(400,1)$ against PimBot with varying simulations.} 2000 matches were played for each data point, with roles swapped for half of the matches. Note that the x-axis has logarithmic scale. The ribbon shows 95\% confidence interval.}
  \label{fig:pog_x_pimbot}
\end{minipage}
\end{figure}

\clearpage
\begin{figure}[t!]
  \centering
  
  \ifcamera
  \else
    \includegraphics[width=0.9\textwidth]{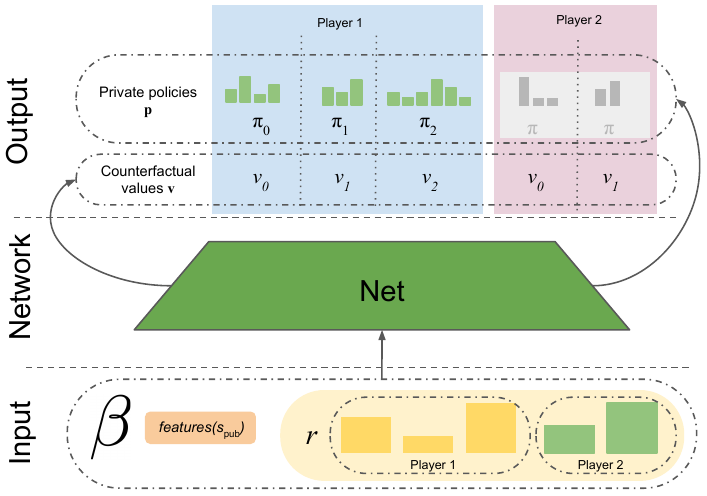}
  \fi
  \caption{\textbf{A counterfactual value-and-policy network (CVPN).}
    Each query, $\beta$, to the network includes beliefs $r$ and an encoding of $\spub$ to get the counterfactual values $\bv$ for both players and policies $\bp$ for the acting player in each information state $s_i \in \spub(h)$, producing outputs $f_\btheta$. Since players may have different actions spaces (as in \eg Scotland Yard) there are two sets of policy outputs: one for each player, and $\bp$ refers to the one for the acting player at $\spub$ only (depicted as player 1 in this diagram by greying out player 2's policy output).}
  \label{fig:cvpn}
\end{figure}

\clearpage
\begin{figure}[t!]
  \centering
  \ifcamera
  \else
    \includegraphics[width=0.9\textwidth]{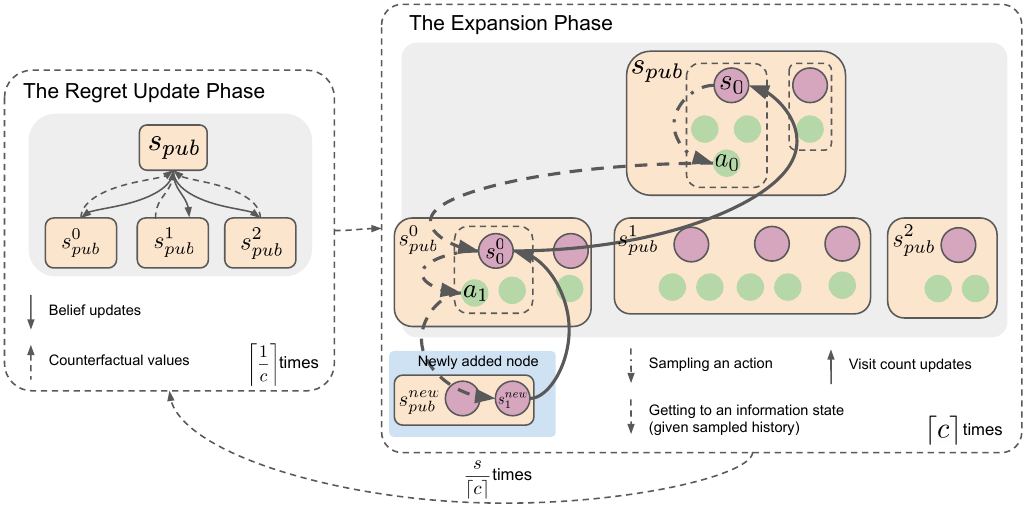}
  \fi
  \caption{\textbf{Overview of the phases in one iteration of Growing-Tree CFR.} The regret update phase propagates beliefs down the tree, obtains counterfactual values from the CPVN at leaf nodes (or from the environment at terminals), and passes back counterfactual values to apply the CFR update. The expansion phase simulates a trajectory from the root to a leaf, adding public states to the tree. In this case the trajectory starts in the public belief state $s_{pub}$ by sampling the information state $s_0$. After that the sampled action $a_0$ leads to the information state $s^0_0$ in public state $s^0_{pub}$, and finally the action $a_1$ leads to a new public state that is added to the tree.}
  \label{fig:pog-search}
\end{figure}

\clearpage
\begin{figure}[t!]
  \centering
  
  \ifcamera
  \else
    \includegraphics[width=0.9\textwidth]{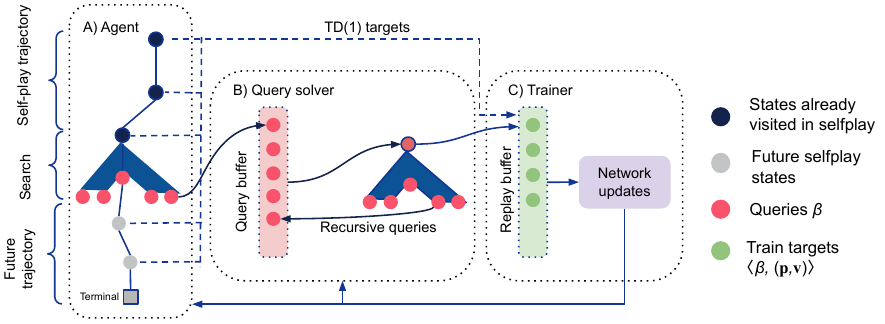}
  \fi
  
  \caption{\textbf{\pog Training Process.}
  Actors collect data via sound self-play and trainers run separately over a distributed network.
    \textsf{(A)} Each search produces a number of CVPN \defword{queries} with input $\beta$. 
    \textsf{(B)} Queries are added to a query buffer and subsequently solved by a \defword{solver} that studies the situation more closely via another invocation of GT-CFR. During solving, new recursive queries might be added back to the query buffer; separately the network is 
    \textsf{(C)} trained on minibatches sampled from the replay buffer to predict values and policy targets computed by the solver.}
  \label{fig:pog-training}
\end{figure}

\clearpage
\setcounter{table}{0} 
\setcounter{figure}{0} 
\renewcommand{\thetable}{S\arabic{table}}  
\renewcommand{\thefigure}{S\arabic{figure}}

\begin{center}
  \makebox{\includegraphics[width=0.7\paperwidth]{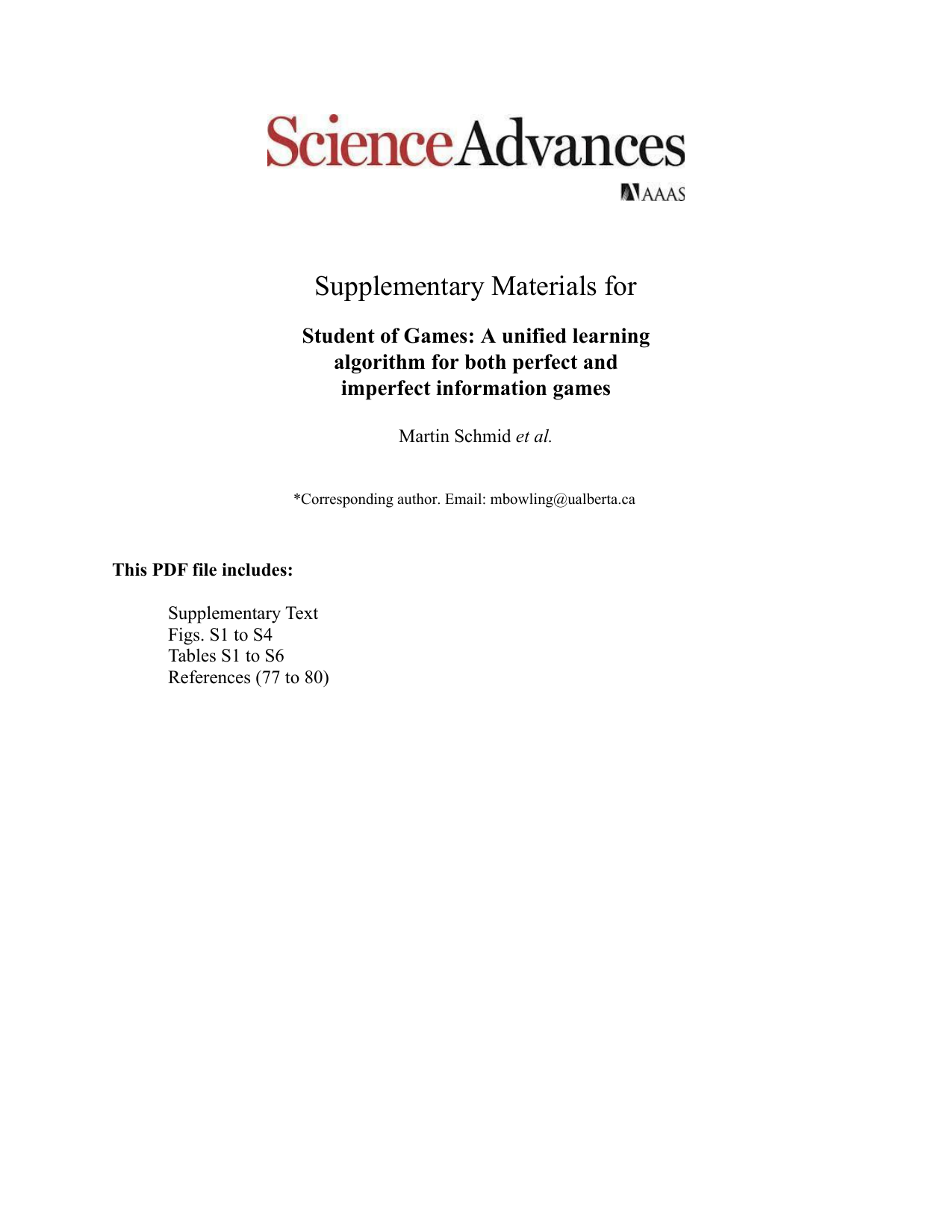}}
\end{center}

\section*{Supplementary Text}

\setul{1pt}{.6pt}
\titleformat{\section}{\Large}{\thesection}{1em}{\ul}
\titleformat{\subsection}{\large}{\thesubsection}{1em}{\ul}
\titleformat{\subsubsection}{\normalsize}{\thesubsubsection}{1em}{\ul}

\section*{Student of Games Algorithm Details}
\label{sec:alg-details}

\subsection*{Network Architecture and Optimization}
\label{sec:net-arch}

Table~\ref{tab:nn-architectures-per-game} lists neural network architectures and input features used for each game. For chess and Go we use exactly the same architecture and inputs as used by AlphaZero~\cite{Silver18AlphaZero}. In poker and Scotland Yard we process concatenated belief and public state features by a MLP with ReLU activations. 

The counterfactual value head is optimized by Huber loss~\cite{huber64}, while policy for each information state $i$ is optimized by KL divergence:

$$l(\bv,\bp,\bv_{target},\bp_{target}) = w_v * l_{huber}(\bv,\bv_{target}) + w_p * \sum_{i} {l_{KL}(\pi^i, \pi^i_{target})}$$ 

where each head is weighted with the corresponding weight $w_v$ and $w_p$. During training we smoothly decay the learning rate by a factor of $d$ every $T_{decay}$ steps. Formally learning rate $\alpha$ at training step $t$ is defined as: 

$$\alpha_t = \alpha_{init} * d^{t/T_{decay}}$$

When using the policy head's prediction as prior in PUCT formula the logits are processed with softmax with temperature $T_{prior}$. This can decrease weight of the prior in some games and encourage more exploration in the search phase.

\subsection*{Pseudocode}
\label{sec:pseudocode}

Here we provide pseudocode for the most important parts of the \pog algorithm. Algorithm~\ref{alg:gt-cfr} specifies GT-CFR, the core of \pog's sound game-theoretic search that scales to large perfect information games introduced in \sectionref{sec:alg-gtcfr}. Algorithm~\ref{alg:selfplay} presents how GT-CFR is used during self-play that generates training examples for the neural network, previously covered in \sectionref{sec:training}. Hyperparameters used in self-play are specified in Table~\ref{tab:hparams-per-game}. 

When \pog plays against an opponent, the search tree is rebuilt also for the opponent's actions (as discussed in \sectionref{sec:modified-resolving}).  This way, \pog reasons about the opponent's behavior since it directly influences the belief distribution for the current state $\beta$ where \pog is to act.

\algnewcommand{\LineComment}[1]{\State \(\triangleright\) #1}

\begin{algorithm}[p]
\caption{Growing Tree CFR. Note that GT-CFR is logging all neural net queries it does since they might be used later in training.}
\label{alg:gt-cfr}
\begin{algorithmic}

\Procedure{GT-CFR}{$\cL^0, \beta, s, c$}
  \LineComment{$\cL^0$ --- a tree including $\beta$ built as described in \sectionref{sec:modified-resolving}.}
  \LineComment{$\beta$ --- a public belief state under which the new nodes will be added.}
  \LineComment{$s, c$ --- total number of expansion simulations and number of simulations per CFR update.}
  
  \For{$i \in \{0, 1, \cdots, \frac{s}{\lceil c \rceil} -1 \}$}
    \State \Call{CFR}{$\cL^i, \left \lceil \frac{1}{c} \right \rceil$} \Comment{Store average policy and counterfactual values in the tree.}
    \State $\cL^{i+1} \gets \Call{Grow}{\cL^i}$
  \EndFor
  \LineComment{Return counterfactual values and average policy from CFR and all NN calls.}
  \State \Return{$\bv, \bp, nn\_queries$} 
\EndProcedure

\State

\Procedure{Grow}{$\cL, \beta$}
    \For{$i \in \{ 0, 1, \cdots, \lceil c \rceil - 1\}$}
      \State $path \gets$ \Call{SamplePathDownTheTree}{$\cL, \beta$}    \Comment{The path starts at $\beta$.}
      \State \Call{AddTopKChildren}{$\cL, path, k$}
      \State \Call{UpdateVisitCountsUp}{$\cL, path$}
     \EndFor
    \State \Return{$\cL$}
\EndProcedure

\end{algorithmic}
\end{algorithm}

\begin{algorithm}
\caption{Sound Self-play}
\label{alg:selfplay}
\begin{algorithmic}

\Procedure{SelfPlay}{}
  \State Get initial history state $w \gets w^{INIT}$ and corresponding public state $\beta$ 
  \LineComment Decide whether the game has to be played till the end.
  \State $do\_not\_resign \gets $ coin flip with probability $p_{no\_resign}$
  \While{$w$ is not terminal AND played less than $moves_{max}$}
    \If{chance acts in $w$}
        \State {$a \gets$ uniform random action.}
    \Else
        \LineComment{\pog acts for all non-chance players.}
        \State $v_w, \pi^{controller}_w \gets$ \Call{SoGSelfplayController}{$w$}
        \If{$v_w < resign\_threshold$ AND $not(do\_not\_resign)$}
            \LineComment Don't waste compute on already decided game.
            \State \Return{}
        \EndIf
        \LineComment Mix controller's policy with uniform prior to encourage exploration.
        \State $\pi^{selfplay}_w \gets (1-\epsilon) \cdot \pi^{controller}_w + \epsilon \cdot \pi^{uniform}$ 
        \If{moves played $< moves_{greedy\_after}$}
            \State $a \gets$ sample action from $\pi^{selfplay}_w$
        \Else
            \State $a \gets \arg \max \pi^{selfplay}_w$
        \EndIf
    \EndIf
    \State $w \gets$ apply action $a$ on state $w$
  \EndWhile
  \LineComment{Sampling states with TD(1) targets.} 
  \For{each belief state $\beta \in$ played trajectory $tr$} 
    \If{uniform random sample from unit interval $< p_{td1}$}
      \State $\bv \gets$ outcome of $tr$ assigned to information state visited in $\beta$
      \State $\bp \gets$ policy used in $\beta$
      \State replay\_buffer.append($\langle \beta,(\bv, \bp) \rangle$)
    \EndIf
  \EndFor
\EndProcedure

\algstore{selfplay}
\end{algorithmic}
\end{algorithm}

\begin{algorithm}
\begin{algorithmic}
\algrestore{selfplay}

\Procedure{SoGSelfplayController}{$w$}
    \State $\beta \gets$ public state including $w$
    \State $\cL \gets$ the tree including $\beta$ built as described in \sectionref{sec:modified-resolving}. 
    \State $\bv, \bp \gets$ \Call{Training-GT-CFR}{$\cL$}
    \State \Return $\bv(w), \bp(w)$
\EndProcedure

\State

\Procedure{Training-GT-CFR}{$\cL$}
  \State $\bv, \bp, nn\_queries \gets$ \Call{GT-CFR}{$\cL$}
  \State queries $\gets$ Pick on average $q_{search}$ neural net queries $\beta$ from $nn\_queries$.
  \State queries\_to\_solve.extend(queries)
  \State \Return $\bv, \bp$
\EndProcedure

\State

\Procedure{QuerySolver}{}
  \For{$\beta \gets$ queries\_to\_solve.pop()}
    \State $\bv, \bp, nn\_queries \gets$ \Call{GT-CFR}{$\beta$}
    \LineComment{Send the example to the trainer.}
    \State replay\_buffer.append($\langle \beta,(\bv, \bp) \rangle$)
    \LineComment{Create recursive queries.}
    \State queries $\gets$ Pick on average $q_{recursive}$ neural net queries $\beta$ from $nn\_queries$.
  \State queries\_to\_solve.extend(queries)
  \EndFor
\EndProcedure
\end{algorithmic}
\end{algorithm}

Note that unlike AlphaZero, \pog currently starts its search procedure from scratch.
That is, the previous computation only provides invariants for the next resolving step.
AlphaZero rather warm-starts the MCTS process by initializing values and visit counts from the previous search.
For \pog, this would also require warm-starting CFR.
While possible \cite{brown2016strategy}, there is no warm-starting in the current implementation of \pog.

\subsection*{Implementation}
\pog is implemented as a distributed system with decoupled actor and trainer jobs. Each actor runs several parallel games and the neural network evaluations are batched for better accelerator utilization. The networks were implemented using TensorFlow. 

\subsection*{Poker Betting Abstraction}
\label{sec:poker-bets}

There are up to 20000 possible actions in no-limit Texas hold'em. To make the problem easier, AI agents are typically allowed to use only a small subset of these~\cite{Moravcik17DeepStack,Brown18,brown2020combining,Zarick20Supremus}. This process of selecting a set of allowed actions for a given poker state is called betting abstraction. Even using betting abstraction the players are able to maintain strong performance in the full game~\cite{Moravcik17DeepStack,brown2020combining,Zarick20Supremus}. Moreover, the local best response evaluation~\cite{lisy2017eqilibrium} suggests that there is not an easy exploit for such simplification as long as the agent is able to see full opponent actions~\cite{Moravcik17DeepStack}. 

We use a betting abstraction in the \poglong to speed up the training and simplify the learning task. 
Our agent's action set was limited to just 3 actions: fold (give up), check/call (match the current wager) and bet/raise (add chips to the pot). 
To improve generalization we used stochastic betting size similarly to ReBeL~\cite{brown2020combining}. The single allowed bet/raise size is randomly uniformly selected at the start of each poker hand from the interval $\langle0.5, 1.0\rangle * pot\_ size$. This amount is anecdotally similar to one used by human players and had good performance in our experiments. The same random selection was used in both training and evaluation.

As in~\cite{brown2020combining}, we have also randomly varied the stack size (number of chips available to the players) at the start of the each round during the training. This number stays fixed during evaluation.  

\section*{Evaluation Details and Additional Experimental Results}
\label{app:additional-results}

\subsection*{Description of Leduc poker}

Leduc is a simplified poker game with two rounds
and a 6-card deck in two suits. Each player initially antes a
single chip to play and obtains a single private card and
there are three actions: fold, call and raise. There is a fixed
bet amount of 2 chips in the first round and 4 chips in the
second round, and a limit of two raises per round. After
the first round, a single public card is revealed. A pair is
the best hand, otherwise hands are ordered by their high
card (suit is irrelevant). A player's reward is their gain or loss in chips
after the game.

\subsection*{Reinforcement Learning and Search in Imperfect Information Games}
\label{app:basic-rl-mcts-results}
In this section, we provide some experimental results showing that common RL and widely-used search algorithms can produce highly exploitable strategies, even in small imperfect information games where exploitability is computable exactly. In particular, we show how exploitable Information Set Monte Carlo Tree Search is in Leduc poker, as well as three standard RL algorithms (DQN, A2C and tabular Q-learning) in both Kuhn poker and Leduc poker using OpenSpiel~\cite{LanctotEtAl2019OpenSpiel}. Results are presented in milli big blinds per hand (mbb/h), which corresponds to one thousandth of a chip for both games.

\subsubsection*{Information Set Monte Carlo Tree Search}
Information Set Monte Carlo Tree Search (IS-MCTS) is a search method that, at the start of each simulation, first samples a world
state-- consistent with the player's information state-- and uses it for the simulation~\cite{CowlingISMCTS}. Reward and visit count statistics are
aggregated over information states so that players base their decisions only on their information states rather than on private
information inaccessible to them. 

Table~\ref{fig:leduc-ismcts-exploitability} shows the exploitability of a policy obtained by running
separate independent IS-MCTS searches from each information state in the game, over various parameter values.
The lowest exploitability of IS-MCTS we found among this sweep was {\bf 465 mbb/h}.

\subsubsection*{Standard RL algorithms in Imperfect Information Games}
As imperfect information games generally need stochastic policies to achieve an optimal strategy, one might wonder how exploitable standard RL algorithms are in this class of games. To test this, we trained three standard RL agents: DQN, policy gradient (A2C) and tabular Q-learning. We used MLP neural networks in DQN and A2C agents.
Table~\ref{fig:RL_hyper_params} shows the hyper parameters we swept over to train these RL agents.

In Kuhn poker, the best performing A2C agent converges to exploitability of 52 mbb/h, and tabular Q-learning and DQN agents converge to around 250 mbb/h. Similarly, in Leduc poker, the best performing A2C agent converges to exploitability of 78 mbb/h, tabular Q-learning and DQN agents converge to about 1300 mbb/h and 900 mbb/h respectively. Figure~\ref{fig:rl-exploitability} shows the exploitability of RL agents in Kuhn poker and Leduc poker.



\section*{Proofs of Theorems}
\label{sec:proofs}
There are three substantive differences between the \pog algorithm and DeepStack.
First, \pog uses a growing search tree, rather than using a fixed limited-lookahead tree.
Second, the \pog search tree may depend on the observed chance events.
Finally, \pog uses a continuous self-play training loop operating throughout the entire game, rather than the stratified bottom-up training process used by DeepStack.
We address each of these differences below, in turn, after considering how to describe an approximate value function for search in imperfect information games.

\subsection*{Value Functions for Subgames}

Like DeepStack, the \pog algorithm uses a value function, so the quality of its play depends on the quality of the value function.
We will describe a value function in terms of its distance to a strategy with low regret.
We start with some value and regret definitions that are better suited to subgames.

Consider some policy profile $\pi$ which is a tuple containing a strategy for each player, public tree subgame $S$ rooted at public state $\pubstate{}$ with player ranges $B_i[\infostate{i}\in{}\mathcal{S}_i(\pubstate{})]\coloneqq{}P_i(\infostate{i}|\pi)$.
First, note that we can re-write counterfactual value $v$ so that it depends only on $B$ and $\pi$ restricted to $S$, with no further dependence on $\pi$.
Let $\infostate{i}$ be a Player $i$ information state in $\mathcal{S}_i(\pubstate{})$, and $q$ be the opponent of Player $i$, then:
\begin{align*}
\cfv{\infostate{i}}{\mathbf{B},\pi^S} & \coloneqq \sum_{h \in I(\infostate{i})} \sum_{z \sqsupset h} B_q[\infostate{q}(h)] P_c(h) P(z|h,\pi^S) u_i(z) \\
& = \sum_{h \in I(\infostate{i})} \sum_{z \sqsupset h} P_{-i}(h|\pi) P(z|h,\pi) u_i(z) = \cfv{\infostate{i}}{\pi}
\end{align*}

We can write several quantities in terms of the best-response value at information state $\infostate{i}$:
\begin{align*}
\brcfv{\infostate{i}}{B, \pi^S} \coloneqq \max_{\pi^{*}_{i}} \cfv{\infostate{i}}{B, \pi^S \leftarrow \pi^{*}_{i}}
\end{align*}
where $\pi \leftarrow \pi'$ is the policy profile constructed by replacing action probabilities in $\pi$ with those in $\pi'$.
The value function is a substitute for an entire subgame policy profile, so the regret we are interested in is player $i$'s full counterfactual regret~\cite{08nips-cfr} at $\infostate{i}$, which considers all possible strategies within subgame $S$:
\begin{align*}
    R^{\text{full}}_{\infostate{i}}(B, \pi^S) \coloneqq \brcfv{\infostate{i}}{B, \pi^S} - \cfv{\infostate{i}}{B, \pi^S}
\end{align*}

With these definitions in hand, we can now consider the quality of a value function $f$ in terms of a regret bound $\epsilon$ and value error $\xi$.
Recall that $f$ maps ranges $B$ and public state $\pubstate{}$ to approximate counterfactual values $\apxcfv{\infostate{i}}{}$ for each player $i$.

First, we consider versions of the regret bound and value error which are parameterised by a strategy $\pi$.
There is some associated bound $\epsilon(\pi)$ on the sum of regrets across all information states at any subgame, valid for both players.
\begin{align*}
    \epsilon(\pi) \coloneqq \max_{B} \max_\pubstate{} \max_{i} \sum_{\infostate{i} \in \mathcal{S}_i(\pubstate{})} R^{\text{full}}_{\infostate{i}}(B,\pi)
\end{align*}
There is also some bound $\xi_f(\pi)$ on the distance between $f(\pubstate{},B)$ and the best-response values to $\pi$.
\begin{align*}
    \xi_{f}(\pi) \coloneqq \max_{B} \max_\pubstate{} \max_{i} \sum_{\infostate{i} \in \mathcal{S}_i(\pubstate{})} |f(\pubstate{},B)[\infostate{i}] - \cfv{\infostate{i}}{B,\pi}|
\end{align*}

We then say that $f$ has $\epsilon,\xi$ quality bounds if there exists some strategy $\pi$ such that $\epsilon(\pi)\le\epsilon$ and $\xi_f(\pi)\le\xi$.
As desired, if both $\epsilon$ and $\xi$ are low then $f(\pubstate{},B)$ is a good approximation of the best-response values to a low-regret strategy, for a subgame rooted at $\pubstate{}$ with initial beliefs $B$.

The DeepStack algorithm~\cite{Moravcik17DeepStack} used a similar error metric for value functions, but only considered zero-regret strategies.
We introduce a more complicated error measure because the space of values corresponding to low-regret strategies may be much larger than the space of values corresponding to no-regret strategies.
For example, consider the public subgame of a matching pennies game after the first player acts with the policy 0.501 heads, 0.499 tails.
There are two first-player information states, from playing either heads or tails, with an empty first-player strategy as there are no further first player actions.
Let us assume a value function $f$ is returning the values $[0\ 0]$ for these two information states.
How good is $f$, assuming we restrict our attention to this one subgame?

The unique zero-regret strategy for the second player is to play tails 100\% of the time, resulting in first player counterfactual values of -1 for playing heads and 1 for playing tails.
The error metric based on zero-regret strategies is therefore measuring $|f([0.501\ 0.499]) - [-1\ 1]|_1$, so that the DeepStack metric states that $f$ has an error of 2.
However, $[0\ 0]$ seems like a very reasonable choice: these are exactly the first player counterfactual values when the second player has a strategy of 0.5 heads, 0.5 tails, which has a regret of only 0.002 in this subgame.
Rather than saying $f$ is a poor quality value function with an error of $2$ in a game with utilities in $[-1,1]$, we can now say $f$ is a great $0.002,0$ value function which exactly describes a low-regret strategy.

The new quality metric also addresses an issue the old DeepStack metric had with discontinuities in the underlying 0-regret value functions.
This means that the space of functions with a low DeepStack error metric may not be well suited for learning from data.
Continuing with the previous example, if we shifted $B$ slightly to be 0.499 heads and 0.501 tails for the first player, the unique 0-regret strategy in the subgame flips to playing tails 0\% of the time, while the uniform random strategy is still a low-regret strategy for this subgame.
In this example, a function can only have a low error with the DeepStack metric if it accurately predicts the values everywhere around the discontinuity at 0.5 heads and 0.5 tails, whereas the new metric can avoid this discontinuity by picking an $\epsilon>0$.
More generally, for any constant $c$, the objective $\epsilon+c\xi$ is a continuous function in $B$, making it a potentially more attractive learning target than the discontinuous function defined by exact Nash equilibrium values, and which matches a learning procedure based on approximately solving example subgames.

\subsection*{Growing Trees}

One major step in showing soundness of the \pog algorithm is demonstrating that Growing Tree CFR (\gtcfr{}) can approximately solve games.
As a quick recap, \gtcfr{} is a variant of the CFR algorithm~\cite{08nips-cfr} that uses limited lookahead and a value function, storing values within a tree that grows over time, in a fashion similar to UCT~\cite{Kocsis06UCT}.
We use this algorithm as a component to solve the problems that the \pog algorithm sets up.
At every non-terminal public leaf state $\pubstate{}$ of the lookahead tree, \gtcfr{} uses estimated counterfactual values $~\tilde{v}$, generated from a value function $f(\pubstate{},B)$ with player ranges $B$ induced by Bayes' rule at $\pubstate{}$ for the current policy profile $\pi$.

Like DeepStack, \pog has two steps which involve solving subgames of the original game.
One of the steps is the re-solving step used to play through a game, where we solve a modified subgame based on constraints on opponent values and beliefs about our possible private information, in order to get our policy and new opponent values.
The other step is only in the training loop, where we are solving a subgame with fixed beliefs for both players, in order to get values for both players.
While the (sub)games for these two cases are slightly different, they are both well-formed games and we can find an approximate Nash equilibrium using \gtcfr{}.

When running \gtcfr{}, even though a policy is explicitly defined only at information states in the lookahead tree $\mathcal{L}$, at each iteration $t$ there is implicitly some complete policy profile $\pi^t$.
For any information state $\infostate{}$ in $\mathcal{L}$ which is not a leaf, $\pi^t(\infostate{})$ is explicitly defined by the regret-matching policy.
For all other $\infostate{}$ -- either a leaf of $\mathcal{L}$ or outside of the lookahead tree -- $\pi^t(\infostate{})$ is defined by the $\epsilon$-regret subgame policy profile $\pi^{*,S}$ associated with the value function's $\epsilon,\xi$ quality bounds.
Note that this $\pi^t$ only exists as a concept which is useful for theoretical analysis: \gtcfr{} does not have access to the probabilities outside of its lookahead tree, only a noisy estimate of the associated counterfactual values provided by the value function.

\def\rbf#1{\rm{\mathbf{#1}}}

\begin{lemma}
\label{lem:cfv_decompose}
Let $\rbf{p}$ and $\rbf{q}$ be vectors in $[0,1]^n$, and $\rbf{v}$ and $\rbf{w}$ be vectors in $\mathbb{R}^n$ such that $\rbf{v}[i]>\rbf{w}[i]$ for all $i$.
Then $\rbf{p} \cdot \rbf{v} - \rbf{q} \cdot \rbf{w} \le \rbf{1} \cdot (\rbf{v} - \rbf{w}) + \rbf{p} \cdot \rbf{w} - \rbf{q} \cdot \rbf{w}$
\end{lemma}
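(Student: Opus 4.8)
The plan is to observe that the term $q \cdot w$ appears with a minus sign on both sides of the claimed inequality, so it cancels outright, and the statement reduces to a one-line, coordinatewise estimate. Concretely, the inequality
\[
    p \cdot v - q \cdot w \le \mathbf{1} \cdot (v - w) + p \cdot w - q \cdot w
\]
is equivalent, after adding $q \cdot w$ to both sides and moving $p \cdot w$ to the left, to
\[
    p \cdot (v - w) \le \mathbf{1} \cdot (v - w),
\]
i.e. $\sum_{i=1}^n p[i]\,(v[i]-w[i]) \le \sum_{i=1}^n (v[i]-w[i])$.

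Next I would prove this reduced inequality termwise. Fix any index $i$. By hypothesis $v[i]-w[i] > 0$, and since $p \in [0,1]^n$ we have $p[i] \le 1$; multiplying the (strictly positive) quantity $v[i]-w[i]$ by $p[i] \le 1$ gives $p[i]\,(v[i]-w[i]) \le v[i]-w[i]$. Summing these $n$ inequalities over $i$ yields $p \cdot (v-w) \le \mathbf{1} \cdot (v-w)$, which is exactly the reduced form above, and hence the lemma.

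There is essentially no obstacle here: the only subtlety worth flagging in the write-up is that the estimate uses only the upper bound $p[i] \le 1$ together with the strict positivity $v[i] > w[i]$ — nonnegativity of $p$ (and the values of $q$ entirely) play no role, since $q \cdot w$ cancels. If desired, one can also note the inequality is tight when $p = \mathbf{1}$.
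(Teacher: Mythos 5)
Your proposal is correct and is essentially identical to the paper's proof: both reduce the inequality to $p \cdot (v-w) \le \mathbf{1} \cdot (v-w)$ (the paper by adding and subtracting $p \cdot w$, you by cancelling $q \cdot w$) and then conclude coordinatewise from $p[i] \le 1$ and $v[i] > w[i]$. Your extra remark that nonnegativity of $p$ and the value of $q$ play no role is a correct and mildly useful observation, but the argument is the same.
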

\begin{proof}
\begin{align*}
\rbf{p} \cdot \rbf{v} - \rbf{q} \cdot \rbf{w} & = \rbf{p} \cdot \rbf{v} - \rbf{p} \cdot \rbf{w} + \rbf{p} \cdot \rbf{w} - \rbf{q} \cdot \rbf{w} \\
& = \rbf{p} \cdot (\rbf{v} - \rbf{w}) + \rbf{p} \cdot \rbf{w}  - \rbf{q} \cdot \rbf{w} \\
& \le \rbf{1} \cdot(\rbf{v} - \rbf{w}) + \rbf{p} \cdot \rbf{w} - \rbf{q} \cdot \rbf{w}
\end{align*}
\end{proof}

\begin{lemma}
\label{lem:value_error}
Let $\rbf{p}$ and $\rbf{q}$ be vectors in $[0,1]^n$, and $\rbf{v}$ and $\rbf{w}$ be vectors in $\mathbb{R}^n$ such that $\sum_{i=1}^{n}|\rbf{v}[i]-\rbf{w}[i]|\le\xi$.
Then $(\rbf{p}-\rbf{q})\cdot{}\rbf{v}\le\xi+(\rbf{p}-\rbf{q})\cdot{}\rbf{w}$.
\end{lemma}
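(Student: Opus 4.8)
The plan is to reduce the claim to a one-line coordinate-wise estimate, exactly in the spirit of Lemma~\ref{lem:cfv_decompose} but without needing the sign hypothesis $v[i]>w[i]$. First I would subtract $(p-q)\cdot w$ from both sides, so that the inequality to be proved becomes
\begin{align*}
(p-q)\cdot v - (p-q)\cdot w = (p-q)\cdot (v-w) \le \xi .
\end{align*}
Writing this out coordinate-wise gives $\sum_{i=1}^n (p[i]-q[i])(v[i]-w[i])$.

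Next I would bound each term. Since $p[i],q[i]\in[0,1]$, we have $|p[i]-q[i]|\le 1$ for every $i$. Hence
\begin{align*}
\sum_{i=1}^n (p[i]-q[i])(v[i]-w[i]) \le \sum_{i=1}^n |p[i]-q[i]|\,|v[i]-w[i]| \le \sum_{i=1}^n |v[i]-w[i]| \le \xi,
\end{align*}
where the first inequality is the triangle inequality, the second uses $|p[i]-q[i]|\le 1$, and the last is the hypothesis. Adding $(p-q)\cdot w$ back to both sides yields the claimed bound $(p-q)\cdot v \le \xi + (p-q)\cdot w$.

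There is no real obstacle here: the statement is an elementary consequence of $\ell_\infty$/$\ell_1$ Hölder-type reasoning, and the only thing to be careful about is that, unlike Lemma~\ref{lem:cfv_decompose}, we do not assume $v$ dominates $w$ componentwise, so the bound on $p\cdot(v-w)$ must go through absolute values of the differences rather than through $\mathbf{1}\cdot(v-w)$. This lemma will then be used downstream to absorb the value-function error term $\xi$ into regret bounds when $v$ is replaced by the network estimate $\tilde v$ and $w$ by the true counterfactual values of a low-regret strategy.
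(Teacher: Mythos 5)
Your proof is correct and is essentially identical to the paper's argument: both decompose $(p-q)\cdot v$ as $(p-q)\cdot(v-w)+(p-q)\cdot w$, bound the first term by $\sum_i|v[i]-w[i]|$ using $|p[i]-q[i]|\le 1$, and apply the hypothesis. No differences worth noting.
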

\begin{proof}
\begin{align*}
(\rbf{p} - \rbf{q}) \cdot{} \rbf{v} & = (\rbf{p} - \rbf{q}) \cdot (\rbf{v} - \rbf{w}) + (\rbf{p} - \rbf{q}) \cdot \rbf{w} \\
& \le \sum_{i=1}^n |(\rbf{p}[i] - \rbf{q}[i])(\rbf{v}[i] - \rbf{w}[i])| + (\rbf{p} - \rbf{q}) \cdot \rbf{w} \\
& \le \sum_{i=1}^n|(\rbf{v}[i] - \rbf{w}[i])| + (\rbf{p} - \rbf{q}) \cdot \rbf{w} \\
& \le \xi + (\rbf{p} - \rbf{q}) \cdot \rbf{w} \\
\end{align*}
\end{proof}

In \gtcfr{}, the depth-limited public tree used for search may change at each iteration.
Let $\mathcal{L}^t$ be the public tree at time $t$.
For any given tree $\mathcal{L}$, let $\treeinterior{\mathcal{L}}$ be the interior of the tree: all non-leaf, non-terminal public states.
The interior of the tree is where regret matching is used to generate a policy, with regrets stored for all information states in interior public states.
Let $\treefrontier{\mathcal{L}}$ be the frontier of $\mathcal{L}$, containing non-terminal leaves, and $\treeterminals{\mathcal{L}}$ be the terminal public states.
\gtcfr{} uses the value function at all public states in the frontier, receiving noisy estimates $\apxcfv{\infostate{}}{}$ of the true counterfactual values $\cfv{\infostate{}}{}$.
We will distinguish between the true regrets $R^T_\infostate{}$ computed from the entire policy, and the regret $\tilde{R}^T_\infostate{}$ computed using the estimated values $\apxcfv{\infostate{}}{}$.
Given a sequence of trees across $T$ iterations, let $\interiorintervals{\pubstate{}}$ be the set of maximal length intervals $[a,b] \subseteq [1,T]$ where $\pubstate{}$ is in $\treeinterior{\mathcal{L}^t}$ for all $t\in[a,b]$.
Let $U$ be the maximum difference in counterfactual value between any two strategies, at any information state, and $A$ be the maximum number of actions at any information state.
\begin{lemma}
\label{lem:gtcfr_regret}
After running \gtcfr{} for $T$ iterations starting at some initial public state $s_0$, using a value function with quality $\epsilon, \xi$, regret for the strategies satisfies the bound
\begin{align*}
\sum_{\infostate{i} \in \mathcal{S}_i(s_0)} R^T_\infostate{i} & \le \sum_{t=1}^T |\treefrontier{\mathcal{L}^t}|(\epsilon + \xi) \\
& + \sum_{\pubstate \in \bigcup_{t=1}^T \mathcal{L}^t} |\mathcal{S}_i(\pubstate)| U \sqrt{A} \sum_{[a,b] \in \interiorintervals{\pubstate}} \sqrt{|[a,b]|}
\end{align*}
\end{lemma}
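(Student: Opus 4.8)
I would bound the root full counterfactual regret $\sum_{\infostate{i}\in\mathcal{S}_i(s_0)}R^T_{\infostate{i}}$ in two stages: first pass from the true regret to the regret $\tilde R^T_{\infostate{i}}$ that \gtcfr{} actually controls (the one assembled from the backed-up estimated values $\apxcfv{\infostate{i}}{}$), absorbing the value-function error; then bound $\sum_{\infostate{i}}\tilde R^T_{\infostate{i}}$ by the usual CFR decomposition into immediate counterfactual regrets, carried out on the time-varying interior $\treeinterior{\mathcal{L}^t}$. Fix once and for all the strategy $\pi^*$ witnessing the value function's $\epsilon,\xi$ quality bounds, and let $\pi^t$ denote the implicit profile at iteration $t$: regret-matching$^+$ inside $\treeinterior{\mathcal{L}^t}$, and the subgame restriction of $\pi^*$ on $\treefrontier{\mathcal{L}^t}$ and below.

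\emph{Stage 1 (true versus estimated regret).} Unroll the best-response value at $s_0$ against $\pi^t$ down $\mathcal{L}^t$. Lemma~\ref{lem:cfv_decompose} rewrites each reach-weighted value gap $p\cdot v-q\cdot w$ at an interior node as $\mathbf{1}\cdot(v-w)+(p-q)\cdot w$: the first term becomes that node's immediate counterfactual regret, and the second, having reaches in $[0,1]$, folds into the parent's regret rather than being amplified. When the recursion reaches a frontier node $\pubstate\in\treefrontier{\mathcal{L}^t}$ the best responder would continue into the subgame while \gtcfr{} treats $\pubstate$ as a leaf valued at $\apxcfv{\infostate{i}}{}$; Lemma~\ref{lem:value_error} swaps $\apxcfv{\infostate{i}}{}$ for the true values $\cfv{\infostate{i}}{B,\pi^*}$ of $\pi^*$ at cost $\xi$ per public state, and the residual best-response-versus-$\pi^*$ gap is $\sum_{\infostate{i}}R^{\text{full}}_{\infostate{i}}(B,\pi^*)\le\epsilon$ by the quality bound. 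Since at each iteration a public state lies in exactly one of the interior, the frontier, or outside $\mathcal{L}^t$, summing these frontier charges over information states and iterations gives
\[
\sum_{\infostate{i}\in\mathcal{S}_i(s_0)}R^T_{\infostate{i}}\;\le\;\sum_{\infostate{i}\in\mathcal{S}_i(s_0)}\tilde R^T_{\infostate{i}}\;+\;\sum_{t=1}^T|\treefrontier{\mathcal{L}^t}|(\epsilon+\xi).
\]

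\emph{Stage 2 (CFR recursion on the interior).} Apply the standard decomposition of full counterfactual regret into a sum of positive parts of immediate counterfactual regrets over interior states~\cite{08nips-cfr}, iteration by iteration. A public state $\pubstate$ contributes only while it is interior, i.e.\ over the intervals in $\interiorintervals{\pubstate}$; on each maximal such interval $[a,b]$ its regret accumulators run from zero (a node starts with empty statistics when it joins the interior), so regret-matching$^+$ on counterfactual values of range at most $U$ with at most $A$ actions bounds the immediate counterfactual regret at each information state of $\pubstate$ by $U\sqrt{A|[a,b]|}$. Summing over the information states of $\pubstate$, over the intervals in $\interiorintervals{\pubstate}$, and over all $\pubstate\in\bigcup_{t\le T}\mathcal{L}^t$ gives $\sum_{\infostate{i}}\tilde R^T_{\infostate{i}}\le\sum_{\pubstate}|\mathcal{S}_i(\pubstate)|U\sqrt{A}\sum_{[a,b]\in\interiorintervals{\pubstate}}\sqrt{|[a,b]|}$, and combining with the Stage 1 display yields the claimed bound.

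\emph{Main obstacle.} The delicate part is making Stage 2 rigorous for a tree that changes between iterations, since the classical CFR recursion presumes a fixed game tree: I must check that the immediate-regret telescoping still closes when a public state migrates from the frontier (where it is summarised by $\pi^*$ and the value function) into the interior (where it begins minimising regret), that restarting the accumulators on entry only helps so the intervals in $\interiorintervals{\pubstate}$ can be analysed independently and their $\sqrt{\cdot}$ contributions simply added, and — crucially — that the $(\epsilon+\xi)$ frontier charge and the interior immediate-regret charge for the same $\pubstate$ are never incurred at the same iteration (which holds, since the interior and frontier are disjoint at every $t$). A minor loose end is confirming the $U\sqrt{A|[a,b]|}$ bound for regret-matching$^+$ with linearly weighted averaging.
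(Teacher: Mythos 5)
Your proposal is correct and follows essentially the same route as the paper's proof: both use Lemma~\ref{lem:cfv_decompose} to peel immediate counterfactual regrets off the reach-weighted value gaps node by node, Lemma~\ref{lem:value_error} together with the $\epsilon,\xi$ quality bounds to charge $(\epsilon+\xi)$ per frontier public state per iteration, and the regret-matching bound applied independently on each maximal interval in $\interiorintervals{\pubstate}$ to handle the growing tree. The only cosmetic difference is the order of operations (you unroll top-down extracting interior regrets as you go, while the paper first expands to the frontier and then recursively separates interior states), and your flagged loose end about regret-matching$^+$ with linear averaging is no worse than the paper's own citation of the plain regret-matching bound.
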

\begin{proof}
Starting with the definition of regret, and noting that regrets are independently maximised in a perfect recall game, we can rearrange terms to get
\begin{align*}
\sum_{\infostate{i} \in \mathcal{S}_i(s_0)} R^T_\infostate{i} & = \sum_{\infostate{i} \in \mathcal{S}_i(s_0)} \left( \max_{\pi^*_i} \sum_{t=1}^T \cfv{\infostate{i}}{\pi^t \leftarrow \pi^*_i} - \sum_{t=1}^T \cfv{\infostate{i}}{\pi^t} \right) \\
& = \max_{\pi^*_i} \sum_{\infostate{i} \in \mathcal{S}_i(s_0)} \left( \sum_{t=1}^T \cfv{\infostate{i}}{\pi^t \leftarrow \pi^*_i} - \sum_{t=1}^T \cfv{\infostate{i}}{\pi^t} \right) \\
& = \max_{\pi^*_i} \sum_{t=1}^T \sum_{\infostate{i} \in \mathcal{S}_i(s_0)} \left( \cfv{\infostate{i}}{\pi^t \leftarrow \pi^*_i} - \cfv{\infostate{i}}{\pi^t} \right) \\
\end{align*}
We can rewrite the counterfactual values of information state $\infostate{i}$ in terms of the counterfactual value of leaves and terminals of the tree.
\begin{align}
& = \max_{\pi^*_i} \sum_{t=1}^T & & \left( \sum_{\pubstate{} \in \treefrontier{\mathcal{L}^t}} \sum_{\infostate{i} \in \mathcal{S}_i(\pubstate{})} \left( P_i(\infostate{i} | \pi^*_i) \cfv{\infostate{i}}{\pi^t \leftarrow \pi^*_i} - P_i(\infostate{i} | \pi^t) \cfv{\infostate{i}}{\pi^t} \right) \right. \nonumber \\
& & & \left. + \sum_{\pubstate{} \in \treeterminals{\mathcal{L}^t}} \sum_{z \in I(\pubstate{})} \left( P_i(z | \pi^t \leftarrow \pi^*_i) \cfv{z}{\pi^t} - P_i(z | \pi^t) \cfv{z}{\pi^t} \right) \right) \label{eqn:regret_leaves}
\end{align}
Examining part of the first term inside the sum, we can independently maximise the counterfactual values at each information state $\infostate{i}$. As above, this is equivalent to maximising at public state $\pubstate{}$.
\begin{align*}
& \sum_{\infostate{i} \in \mathcal{S}_i(\pubstate{})} \left( P_i(\infostate{i} | \pi^*_i) \cfv{\infostate{i}}{\pi^t \leftarrow \pi^*_i} - P_i(\infostate{i} | \pi^t) \cfv{\infostate{i}}{\pi^t} \right) \\
& \le \sum_{\infostate{i} \in \mathcal{S}_i(\pubstate{})} \max_{\pi^{**}} \left( P_i(\infostate{i} | \pi^*_i) \cfv{\infostate{i}}{\pi^t \leftarrow \pi^{**}_i} - P_i(\infostate{i} | \pi^t) \cfv{\infostate{i}}{\pi^t} \right) \\
& = \max_{\pi^{**}} \sum_{\infostate{i} \in \mathcal{S}_i(\pubstate{})} \left( P_i(\infostate{i} | \pi^*_i) \cfv{\infostate{i}}{\pi^t \leftarrow \pi^{**}_i} - P_i(\infostate{i} | \pi^t) \cfv{\infostate{i}}{\pi^t} \right)
\end{align*}
Given that we individually maximised over each minuend, we satisfy the requirements of Lemma~\ref{lem:cfv_decompose}. We can then use the value function quality bounds.
\begin{align*}
& \le \max_{\pi^{**}} \sum_{\infostate{i} \in \mathcal{S}_i(\pubstate{})} \left( \cfv{\infostate{i}}{\pi^t \leftarrow \pi^{**}_i} - \cfv{\infostate{i}}{\pi^t} \right) \\
& + \sum_{\infostate{i} \in \mathcal{S}_i(\pubstate{})} \left( P_i(\infostate{i} | \pi^*_i) \cfv{\infostate{i}}{\pi^t} - P_i(\infostate{i} | \pi^t) \cfv{\infostate{i}}{\pi^t} \right) \\
& \le \epsilon + \sum_{\infostate{i} \in \mathcal{S}_i(\pubstate{})} \left( P_i(\infostate{i} | \pi^*_i) \cfv{\infostate{i}}{\pi^t} - P_i(\infostate{i} | \pi^t) \cfv{\infostate{i}}{\pi^t} \right)
\end{align*}
Up to this point, we have used the true counterfactual values for the current policy profile.
At leaves, however, \gtcfr{} only has access to the value function's noisy estimates of the true values.
Applying Lemma~\ref{lem:value_error}, we get
\begin{align*}
& \le \epsilon + \xi + \sum_{\infostate{i} \in \mathcal{S}_i(\pubstate{})} \left( P_i(\infostate{i} | \pi^*_i) \apxcfv{\infostate{i}}{\pi^t} - P_i(\infostate{i} | \pi^t) \apxcfv{\infostate{i}}{\pi^t} \right)
\end{align*}
Placing this back into Equation~\ref{eqn:regret_leaves} and collecting $\epsilon$ and $\xi$ terms, we have
\begin{align*}
\sum_{\infostate{i} \in \mathcal{S}_i(s_0)} R^T_\infostate{i} & \le \sum_{t=1}^T |\treefrontier{\mathcal{L}^t}|(\epsilon + \xi) + \max_{\pi^*_i} \sum_{t=1}^T  \\
& \left( \sum_{\pubstate{} \in \treefrontier{\mathcal{L}^t}} \sum_{\infostate{i} \in \mathcal{S}_i(\pubstate{})} \left( P_i(\infostate{i} | \pi^*_i) \apxcfv{\infostate{i}}{\pi^t} - P_i(\infostate{i} | \pi^t) \apxcfv{\infostate{i}}{\pi^t} \right) \right. \\
& \left. + \sum_{\pubstate{} \in \treeterminals{\mathcal{L}^t}} \sum_{z \in I(\pubstate{})} \left( P_i(z | \pi^t \leftarrow \pi^*_i) \cfv{z}{\pi^t} - P_i(z | \pi^t) \cfv{z}{\pi^t} \right) \right)
\end{align*}
We can rearrange the sums to consider the regret contribution for each public state
\begin{align*}
& = \sum_{t=1}^T |\treefrontier{\mathcal{L}^t}|(\epsilon + \xi) + \max_{\pi^*_i} \sum_{\pubstate{} \in \bigcup_{t=1}^T \mathcal{L}^t} \\
& \left( \sum_{t \text{ s.t. } \pubstate \in \treefrontier{\mathcal{L}^t}} \sum_{\infostate{i} \in \mathcal{S}_i(\pubstate{})} \left( P_i(\infostate{i} | \pi^*_i) \apxcfv{\infostate{i}}{\pi^t} - P_i(\infostate{i} | \pi^t) \apxcfv{\infostate{i}}{\pi^t} \right) \right. \\
& \left. + \sum_{t \text{ s.t. } \pubstate \in \treeterminals{\mathcal{L}^t}} \sum_{z \in I(\pubstate{})} \left( P_i(z | \pi^t \leftarrow \pi^*_i) \cfv{z}{\pi^t} - P_i(z | \pi^t) \cfv{z}{\pi^t} \right) \right)
\end{align*}
As before we can use Lemma~\ref{lem:cfv_decompose} to separate out regrets at the interior states in $\mathcal{N}\coloneqq\treefrontier{\treeinterior{\bigcup_{t=1}^T\mathcal{L}^t}}$, which always depend only on leaves and terminals.
Let $\mathcal{L}^{',t}$ be $\mathcal{L}^t$ minus all public states in $\mathcal{N}$ and any successor states.
\begin{align*}
& \le \sum_{t=1}^T |\treefrontier{\mathcal{L}^t}|(\epsilon + \xi) + \sum_{\pubstate{} \in \mathcal{N}} \sum_{[a,b] \in \interiorintervals{\pubstate}} \sum_{\infostate{i} \in \pubstate{}} \tilde{R}^{a,b}_\infostate{i} + \max_{\pi^*_i} \sum_{\pubstate{} \in \bigcup_{t=1}^T \mathcal{L}^{',t}} \\
& \left( \sum_{t \text{ s.t. } \pubstate \in \treefrontier{\mathcal{L}^{',t}}} \sum_{\infostate{i} \in \mathcal{S}_i(\pubstate{})} \left( P_i(\infostate{i} | \pi^*_i) \apxcfv{\infostate{i}}{\pi^t} - P_i(\infostate{i} | \pi^t) \apxcfv{\infostate{i}}{\pi^t} \right) \right. \\
& \left. + \sum_{t \text{ s.t. } \pubstate \in \treeterminals{\mathcal{L}^{',t}}} \sum_{z \in I(\pubstate{})} \left( P_i(z | \pi^t \leftarrow \pi^*_i) \cfv{z}{\pi^t} - P_i(z | \pi^t) \cfv{z}{\pi^t} \right) \right)
\end{align*}
Note that the states which were separated out are now effectively terminals in smaller trees.
We can repeat this process until regrets for all public states have been separated out.
\begin{align*}
& \le \sum_{t=1}^T |\treefrontier{\mathcal{L}^t}|(\epsilon + \xi) + \sum_{\pubstate{} \in \bigcup_{t=1}^T \mathcal{L}^t} \sum_{[a,b] \in \interiorintervals{\pubstate}} \sum_{\infostate{i} \in \pubstate{}} \tilde{R}^{a,b}_\infostate{i}
\end{align*}
Finally, from bounds on regret-matching\cite{Hart00},
\begin{align*}
& \le \sum_{t=1}^T |\treefrontier{\mathcal{L}^t}|(\epsilon + \xi) + \sum_{\pubstate{} \in \bigcup_{t=1}^T \mathcal{L}^t} |\mathcal{S}_i(\pubstate{})| U \sqrt{A} \sum_{[a,b] \in \interiorintervals{\pubstate}} \sqrt{|[a,b]|}
\end{align*}
\end{proof}

Note that the form of Lemma~\ref{lem:gtcfr_regret} implies that regret might not be sub-linear if public states are repeatedly added and removed from the lookahead tree.
If we only add states and never remove them, however, we get a standard CFR regret bound plus error terms for the value function.
\begin{theorem}
\label{thm:gtcfr_regret_simple}
Assume the conditions of Lemma~\ref{lem:gtcfr_regret} hold, and public states are never removed from the lookahead tree.
Then
\begin{align*}
    R^{T, \text{full}}_i \le \sum_{t=1}^T |\treefrontier{\mathcal{L}^t}|(\epsilon + \xi) + \sum_{\pubstate{} \in \treeinterior{\mathcal{L}^T}} |\mathcal{S}_i(\pubstate{})| U \sqrt{AT}
\end{align*}
\end{theorem}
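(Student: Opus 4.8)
The plan is to obtain Theorem~\ref{thm:gtcfr_regret_simple} as an essentially immediate corollary of Lemma~\ref{lem:gtcfr_regret}; the substantive combinatorial and regret-matching work has already been done there, and what remains is purely bookkeeping under the no-removal hypothesis. Lemma~\ref{lem:gtcfr_regret} already gives
\begin{align*}
\sum_{\infostate{i} \in \mathcal{S}_i(s_0)} R^T_\infostate{i} \le \sum_{t=1}^T |\treefrontier{\mathcal{L}^t}|(\epsilon + \xi) + \sum_{\pubstate \in \bigcup_{t=1}^T \mathcal{L}^t} |\mathcal{S}_i(\pubstate)| U \sqrt{A} \sum_{[a,b] \in \interiorintervals{\pubstate}} \sqrt{|[a,b]|},
\end{align*}
so the only thing to do is simplify the last double sum and then relate the left-hand side to $R^{T,\text{full}}_i$.

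First I would use monotonicity $\mathcal{L}^t \subseteq \mathcal{L}^{t+1}$ to argue that each public state's interior-membership pattern over the $T$ iterations is a single contiguous block which, when nonempty, extends all the way to iteration $T$. Concretely: a public state enters the tree as a non-terminal leaf (a member of $\treefrontier{\cdot}$), and as soon as a Grow step attaches one of its children it passes into $\treeinterior{\cdot}$; since children are never removed it can never again become a leaf, and being non-terminal is permanent, so it stays in the interior from then on. Hence for each $\pubstate$ the set $\interiorintervals{\pubstate}$ is either empty — precisely when $\pubstate \notin \treeinterior{\mathcal{L}^T}$ — or consists of the single interval $[a_{\pubstate},T]$, whose length is at most $T$. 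Therefore $\sum_{[a,b] \in \interiorintervals{\pubstate}} \sqrt{|[a,b]|} \le \sqrt{T}$, and the summand vanishes unless $\pubstate \in \treeinterior{\mathcal{L}^T}$, so the public-state sum collapses to $\sum_{\pubstate \in \treeinterior{\mathcal{L}^T}} |\mathcal{S}_i(\pubstate)| U \sqrt{AT}$.

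Finally I would identify the root quantity $\sum_{\infostate{i} \in \mathcal{S}_i(s_0)} R^T_\infostate{i}$ with $R^{T,\text{full}}_i$. The counterfactual value at a root information state is the expected utility to player $i$ weighted by the opponent-and-chance reach of that information state; those weights sum to one over $\mathcal{S}_i(s_0)$, and because the root information states lie in disjoint subtrees a best response can be chosen independently at each (this is the "regrets are independently maximised in a perfect recall game" fact used in the proof of Lemma~\ref{lem:gtcfr_regret}), so summing the full counterfactual regrets at the root recovers exactly player $i$'s overall regret against an arbitrary deviation, namely $R^{T,\text{full}}_i$. Substituting the simplified bound then yields the claimed inequality.

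I do not expect a genuine obstacle here. The step that most needs care is the "once interior, always interior" claim: it rests on the no-removal assumption together with the fact that, in Lemma~\ref{lem:gtcfr_regret}'s accounting, $\treefrontier{\cdot}$, $\treeinterior{\cdot}$ and $\treeterminals{\cdot}$ partition the tree with "interior" meaning exactly "non-terminal with at least one child present," so I would verify that Grow never demotes an interior node back to a leaf. The other mild point is making the root-regret identification rigorous for the re-solving gadget game, where the root public state can legitimately carry several player-$i$ information states; the summation in Lemma~\ref{lem:gtcfr_regret} is already written over $\mathcal{S}_i(s_0)$ to accommodate this, so the argument should go through unchanged.
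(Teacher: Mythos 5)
Your proposal is correct and takes essentially the same route as the paper, whose proof of this theorem is literally the one-line observation that it ``follows from Lemma~\ref{lem:gtcfr_regret}, noting that the interior of $\mathcal{L}^t$ monotonically grows over time.'' Your write-up simply makes explicit the two bookkeeping steps the paper leaves implicit --- that under no-removal each $\interiorintervals{\pubstate}$ is a single interval of length at most $T$ terminating at $T$ (so the inner sum is at most $\sqrt{T}$ and the outer sum restricts to $\treeinterior{\mathcal{L}^T}$), and that the root-information-state sum is what the theorem calls $R^{T,\text{full}}_i$.
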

\begin{proof}
This follows from Lemma~\ref{lem:gtcfr_regret}, noting that the interior of $\mathcal{L}^t$ monotonically grows over time.
\end{proof}

\subsection*{Self-play Values as Re-solving Constraints}
By using a value network in solving, we lose the ability to compute our opponent's counterfactual best response values to our average strategy~\cite{sustr2019montecarlo}.
It is easy to track the opponent's average self-play value across iterations of a CFR variant, but using these values as re-solving constraints does not trivially lead to a bound on exploitability for the re-solved strategy.
We show here that average CFR self-play values lead to reasonable, controllable error bounds in the context of continual re-solving.
We will use $(x)^+$ to mean $\max\{x, 0\}$.
For simplicity, we will also assume that the subgame that is being re-solved is in the \gtcfr{} lookahead tree for all iterations.

\begin{theorem}
Assume we have some average strategy $\bar{\pi}$ generated by $T$ iterations of \gtcfr{} solver using a value function with quality $\epsilon, \xi$, with final lookahead tree $\mathcal{L}^T$ where public states were never removed from the lookahead tree, and a final average regret $R^T_i$ for the player of interest.
Further assume that we have re-solved some public subgame $S$ rooted public state $\pubstate{}$, using the average counterfactual values $\bar{v}(\infostate{o}) \coloneqq \frac{1}{T} \sum_{t=1}^T \cfv{\infostate{o}}{\pi^t}$ as the opt-out values in the re-solving gadget.
Let $\pi^S$ be the strategy generated from the re-solving game, with some player and opponent average regrets $\bar{R}^S_i$ and $\bar{R}^S_o$, respectively.
Then
\begin{align*}
\text{BV}^{\bar{\pi} \leftarrow \pi^S}_o - \text{BV}^{\bar{\pi}}_o \le & (\bar{R}^S_o)^+ + (\bar{R}^S_i)^+ + \bar{R}_i \\
& + 2(\max_t|\treefrontier{\mathcal{L}^t_\pubstate{}}|(\epsilon + \xi) + \sum_{\pubstate{} \in \treeinterior{\mathcal{L}^T_\pubstate{}}} |\mathcal{S}_i(\pubstate{})| U \sqrt{\frac{A}{T}})
\end{align*}
\label{thm:resolving_average_value_constraints}
\end{theorem}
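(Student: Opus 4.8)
The plan is to mirror the structure of Theorem 1 of \cite{Moravcik17DeepStack} (which itself adapts the re-solving analysis of \cite{Burch14CFRD}), making the two changes forced by the \pog{} setting: everywhere a solve is analysed, the vanilla CFR regret bound is replaced by the \gtcfr{} bound of Theorem~\ref{thm:gtcfr_regret_simple}, and the re-solving constraints are the cheap average self-play values $\bar v(\infostate{o})=\frac1T\sum_t\cfv{\infostate{o}}{\pi^t}$ rather than the opponent's counterfactual best-response values to $\bar\pi$, which are no longer computable once a value network is used (cf. \cite{sustr2019montecarlo}). I would prove it as three inequalities, one per stage of the chain $\text{BV}^{\bar\pi\leftarrow\pi^S}_o \to \sum_{\infostate{o}}\bar v(\infostate{o}) \to \text{BV}^{\bar\pi}_o$, then sum and collect terms.

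First I would invoke the safe-re-solving decomposition: since $\bar\pi$ and $\bar\pi\leftarrow\pi^S$ agree outside the subgame $S$ rooted at $\pubstate{}$ and player $i$'s strategy above $\pubstate{}$ is unchanged, the change in the opponent's best-response value is carried entirely by $\pubstate{}$, so $\text{BV}^{\bar\pi\leftarrow\pi^S}_o-\text{BV}^{\bar\pi}_o$ is bounded (up to the opponent-reach normalisation handled exactly as in \cite{Burch14CFRD}) by $\sum_{\infostate{o}\in\mathcal{S}_o(\pubstate{})}\big(\brcfv{\infostate{o}}{\pi^S}-\brcfv{\infostate{o}}{\bar\pi}\big)$. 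The reach/value bookkeeping here is the same kind of manipulation already carried out in the proof of Lemma~\ref{lem:gtcfr_regret}, so I would reuse it. Second, I would bound $\sum_{\infostate{o}}\brcfv{\infostate{o}}{\pi^S}$, the opponent's best response against the re-solved strategy, by the re-solving gadget argument: because $\pi^S$ is the average strategy of \gtcfr{} on the gadget game, in which the opponent owns, at each $\infostate{o}$, an opt-out action worth exactly $\bar v(\infostate{o})$, the standard slack from the opponent's regret at that choice node plus player $i$'s subgame regret gives $\sum_{\infostate{o}}\brcfv{\infostate{o}}{\pi^S}\le\sum_{\infostate{o}}\bar v(\infostate{o})+(\bar R^S_o)^+ +(\bar R^S_i)^+$, with any value-function error incurred while solving the gadget already absorbed into $\bar R^S_i,\bar R^S_o$ through Theorem~\ref{thm:gtcfr_regret_simple}.

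The genuinely new step, and the one I expect to be the main obstacle, is the third inequality: bounding $\sum_{\infostate{o}}\bar v(\infostate{o})-\sum_{\infostate{o}}\brcfv{\infostate{o}}{\bar\pi}$, i.e. showing the constraints actually used only under-estimate the opponent's true best-response value to our average strategy by a controlled, vanishing amount. Since $\bar v(\infostate{o})$ averages per-iteration counterfactual values while $\brcfv{\infostate{o}}{\bar\pi}$ best-responds to the average, the gap is a regret-type quantity. I would argue, using multilinearity of counterfactual value in player $i$'s reach-to-$\pubstate{}$ together with player $i$'s play below $\pubstate{}$, and the reach-weighted averaging that defines the CFR average strategy, that $\brcfv{\infostate{o}}{\bar\pi}$ is at most the average over $t$ of the opponent's best-response values below $\infostate{o}$ against $\pi^t$, and that this exceeds $\bar v(\infostate{o})$ by at most the opponent's average full counterfactual regret in the portion of the lookahead tree below $\pubstate{}$ plus the value-function error coming from \gtcfr{} only ever seeing the noisy leaf estimates $\apxcfv{}{}$. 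Applying the (player-symmetric) bound of Theorem~\ref{thm:gtcfr_regret_simple} restricted to the subtree below $\pubstate{}$, dividing by $T$, and using $\frac1T\sum_{t=1}^T|\treefrontier{\mathcal{L}^t_\pubstate{}}|\le\max_t|\treefrontier{\mathcal{L}^t_\pubstate{}}|$ and $\frac1T\sqrt{AT}=\sqrt{A/T}$ yields the bracketed term; the second copy of that bracket, together with the $\bar R_i$ term, comes from additionally needing player $i$'s original-solve strategy to be near-optimal when converting between the noisy values \gtcfr{} tracked and true counterfactual best-response values. Summing the three inequalities and collecting $\epsilon+\xi$, $U\sqrt{A/T}$, and regret terms gives the stated bound.

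The hard part is thus the average-value-versus-best-response comparison in step three: with a value network one cannot simply set the constraints to the opponent's counterfactual best-response values (which would make the gap zero), so one must show the much cheaper average self-play values are "good enough." Obtaining a clean, game-length-independent bound here is precisely what motivates the relaxed $(\epsilon,\xi)$ value-quality metric introduced earlier in this appendix: the argument only needs the value function to describe a \emph{low}-regret rather than a zero-regret strategy, and it is this slack that lets the continuous $\epsilon$ and $\xi$ terms, instead of a discontinuous exact-equilibrium error, appear in the final exploitability bound; the remaining $(5D+2)$-style amplification when this is chained across $D$ re-solving steps is then inherited from the continual-re-solving argument of \cite{Moravcik17DeepStack} and gives Theorem~\ref{thm:continual-resolving}.
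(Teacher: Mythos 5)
Your overall route is the paper's: reduce the change in the opponent's best-response value to the subgame rooted at $\pubstate{}$, use the re-solving gadget's guarantee to relate $\brcfv{\infostate{o}}{\pi^S}$ to the opt-out values $\bar v(\infostate{o})$, and control the quality of those opt-out values with the \gtcfr{} regret bound of Theorem~\ref{thm:gtcfr_regret_simple}; the error terms you predict are the right ones. The gap is in how the work is split between your second and third inequalities. Your step~2, $\sum_{\infostate{o}}\brcfv{\infostate{o}}{\pi^S}\le\sum_{\infostate{o}}\bar v(\infostate{o})+(\bar R^S_o)^++(\bar R^S_i)^+$, is false as stated: the gadget game's value is $\sum_{\infostate{o}}\bar v(\infostate{o})+U^S_{\bar v,\bar\pi}$, where $U^S_{\bar v,\bar\pi}=\min_{\pi^{*S}}\sum_{\infostate{o}}(\brcfv{\infostate{o}}{\bar\pi\leftarrow\pi^{*S}}-\bar v(\infostate{o}))^+$ is the \emph{underestimation} error of the opt-out values (Lemma~1 of \cite{Moravcik17DeepStack}). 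That term measures how far $\bar v$ falls below what the opponent can force in the subgame against \emph{any} player strategy, and no accuracy in solving the gadget absorbs it into $\bar R^S_i,\bar R^S_o$. The argument you then develop in step~3 --- that $\brcfv{\infostate{o}}{\bar\pi}-\bar v(\infostate{o})$ is the opponent's average full counterfactual regret at $\infostate{o}$, bounded by Theorem~\ref{thm:gtcfr_regret_simple} restricted to $\mathcal{L}^t_{\pubstate{}}$ --- is exactly the bound on $U^S_{\bar v,\bar\pi}$ (take $\pi^{*S}=\bar\pi$); it belongs in step~2 and supplies the first copy of the bracket.

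What your chain then still needs at step~3 is the opposite, \emph{over}estimation direction: that giving the opponent the option to cash out at $\bar v(\infostate{o})$ rather than play the subgame against $\bar\pi$ gains them at most $\bar R_i$ plus one more copy of the bracket. You name the right terms but give no mechanism, and notice that the quantity you write, $\sum_{\infostate{o}}\bar v(\infostate{o})-\sum_{\infostate{o}}\brcfv{\infostate{o}}{\bar\pi}$, has the sign of the overestimation even though your prose describes controlling the underestimation. The paper's argument here is not local to $\pubstate{}$: it bounds $\text{BV}^{\bar\pi,S\leftarrow\bar v}_o-\text{BV}^{\bar\pi}_o$ at the root by comparing both terms to the game value $v^{\pi^*}_o$, using the zero-sum identity to convert the slack into player $i$'s regret $\bar R_i=\text{BV}^{\bar\pi}_i-\bar v_i$, and then observing that the immediate counterfactual regrets of the iterates lifted into the opt-out-augmented game equal $(R^T(\infostate{o}))^+$, the positive parts of the original immediate regrets, so that Theorem~\ref{thm:gtcfr_regret_simple} applies a second time and yields the second bracket. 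That global comparison does not obviously survive the reduction to a per-$\infostate{o}$ sum at $\pubstate{}$ performed by your step~1, so this half of the proof is genuinely missing rather than merely unstated.
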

\begin{proof}
The general outline of the proof has two parts, both asking the question "how much can the opponent best response value increase?"
As in Lemma~4 of \cite{Moravcik17DeepStack}, we can consider breaking the error in re-solving opt-out values into separate underestimation and overestimation terms.
The first part of this proof is a bound that takes into account the re-solving solution quality, and how much the average values underestimate the best response to the average.
This underestimation is bounded by the opponent regret at a subgame, which requires the solving algorithm to have low regret everywhere in the game: low regret for the opponent does not directly imply that the opponent has low regret in portions of the game that they do not play.
The second part of the proof is placing a bound on the overestimation, using the player's regret rather than the opponent's regret.

We start by noting that from the opponent player $o$'s point of view, we can replace an information set $\infostate{o}$ with a terminal that has utility $\brcfv{\infostate{o}}{\bar{\pi}}$, and the best response utility $\text{BV}^{\bar{\pi},\infostate{o}\leftarrow\brcfv{\infostate{o}}{\bar{\pi}}}_o$ in this modified game will be equal to $\text{BV}^{\bar{\pi}}_o$.
We can extend this to the entire subgame $S$, replacing each $\infostate{o}$ with a terminal giving the opponent the best response value: $\text{BV}^{\bar{\pi},S\leftarrow\brcfv{S}{\bar{\pi}}}_o = \text{BV}^{\bar{\pi}}_o$.
Using this notation, we can rewrite $\text{BV}^{\bar{\pi} \leftarrow \pi^S}_o$:
\begin{align*}
& \text{BV}^{\bar{\pi} \leftarrow \pi^S}_o - \text{BV}^{\bar{\pi}}_o \\
= & \text{BV}^{\bar{\pi}, S \leftarrow \brcfv{S}{\pi^S}}_o - \text{BV}^{\bar{\pi}}_o 
\end{align*}

Next, note that $\brcfv{\infostate{o}}{\pi^S}$, the opponent's counterfactual best response to the re-solved subgame strategy $\pi^S$ at any $\infostate{o}$ at the root of $S$, is no greater than the value of $\max\{\brcfv{\infostate{o}}{\pi^S},\bar{v}(\infostate{o})\}$, the value of $\infostate{o}$ within the re-solving game before the gadget where the opponent has decision to opt-out for a fixed value $\bar{v}(\infostate{o})$.
That is, adding an extra opponent action which terminates the game never decreases the opponent's best response utility.
Extending this to the entire subgame $S$ again, we get
\begin{align}
& \text{BV}^{\bar{\pi}, S \leftarrow \brcfv{S}{\pi^S}}_o - \text{BV}^{\bar{\pi}}_o \nonumber \\
\le & \text{BV}^{\bar{\pi}, S \leftarrow \max\{\brcfv{S}{\pi^S}, \bar{v}\}}_o - \text{BV}^{\bar{\pi}}_o \label{eq:optout_br_value}
\end{align}

From Lemma 1 of \cite{Moravcik17DeepStack}, the game value of a re-solving game with opt-out values $\bar{v}(\infostate{o})$ is $U^S_{\bar{v}, \bar{\pi}} + \sum_{\infostate{o} \in \mathcal{S}_o(\pubstate{})} \bar{v}(\infostate{o})$, for some underestimation error on the opt-out values that is given by
\begin{align*}
U^S_{\bar{v}, \bar{\pi}} \coloneqq \min_{\pi^{*S}} \sum_{\infostate{o} \in \mathcal{S}_o(\pubstate{})} (\brcfv{\infostate{o}}{\bar{\pi} \leftarrow \pi^{*S}} - \bar{v}(\infostate{o}))^+
\end{align*}
Given the re-solving regrets, we have $\text{BV}^{\pi^S}_o \le (\bar{R}^S_o)^+ + (\bar{R}^S_i)^+ + U^S_{\bar{v}, \bar{\pi}} + \sum_{\infostate{o} \in \mathcal{S}_o(\pubstate{})} \bar{v}(\infostate{o})$.
Because $\text{BV}^{\bar{\pi},\infostate{o}\leftarrow w+\epsilon}_o \le \text{BV}^{\bar{\pi},\infostate{o}\leftarrow w}_o + \epsilon$ for $\epsilon \ge 0$, we can use this inequality to update Equation~\ref{eq:optout_br_value}. That is, there is some per-information-set values $\mathbf{\epsilon}$ such that $\brcfv{\tilde{S}}{\pi^S} = \bar{v}(\cdot) + \mathbf{\epsilon}$ and $\mathbf{\epsilon}\cdot\mathbf{1} \le (\bar{R}^S_o)^+ + (\bar{R}^S_i)^+ + U^S_{\bar{v}, \bar{\pi}}$, so that

\begin{align}
& \text{BV}^{\bar{\pi}, S \leftarrow \max\{\brcfv{S}{\pi^S}, \bar{v}\}}_o - \text{BV}^{\bar{\pi}}_o \nonumber \\
= & \text{BV}^{\bar{\pi}, S \leftarrow \bar{v} + \mathbf{\epsilon}}_o - \text{BV}^{\bar{\pi}}_o \nonumber \\
\le & \text{BV}^{\bar{\pi}, S \leftarrow \bar{v}}_o + \mathbf{\epsilon} \cdot \mathbf{1} - \text{BV}^{\bar{\pi}}_o \nonumber \\
\le & \text{BV}^{\bar{\pi}, S \leftarrow \bar{v}}_o + (\bar{R}^S_o)^+ + (\bar{R}^S_i)^+ + U^S_{\bar{v}, \bar{\pi}} - \text{BV}^{\bar{\pi}}_o \label{eq:br_value_with_underestimation}
\end{align}

Looking at $U^S_{\bar{v}, \bar{\pi}}$, we note that this minimum is no greater than the case when $\pi^*=\bar{\pi}$.
The difference $\brcfv{\infostate{o}}{\bar{\pi} \leftarrow \pi^{*S}} - \bar{v}(\infostate{o})$ is the average full counterfactual regret $R_\infostate{o}$ of strategy $\bar{\pi}$ at $\infostate{o}$.
Restricting our attention to $\mathcal{L}^t_\pubstate{}$, the portion of the lookahead tree restricted to $\pubstate{}$ and its descendants, Theorem~\ref{thm:gtcfr_regret_simple} gives us a bound on $U^S_{\bar{v}, \bar{\pi}}$ and we can update Equation~\ref{eq:br_value_with_underestimation}
\begin{align}
& \text{BV}^{\bar{\pi}, S \leftarrow \bar{v}}_o + (\bar{R}^S_o)^+ + (\bar{R}^S_i)^+ + U^S_{\bar{v}, \bar{\pi}} - \text{BV}^{\bar{\pi}}_o \nonumber \\
\le & \text{BV}^{\bar{\pi}, S \leftarrow \bar{v}}_o  - \text{BV}^{\bar{\pi}}_o \label{eq:br_value_subgame_eliminated} \\
& + (\bar{R}^S_o)^+ + (\bar{R}^S_i)^+ \max_t|\treefrontier{\mathcal{L}^t_\pubstate{}}|(\epsilon + \xi) + \sum_{\pubstate{} \in \treeinterior{\mathcal{L}^T_\pubstate{}}} |\mathcal{S}_i(\pubstate{})| U \sqrt{\frac{A}{T}} \nonumber
\end{align}

Looking at just the difference in opponent counterfactual best response values, we can again get an upper bound by giving the opponent the choice at all information sets at the root of subgame $S$ of playing a best response against the unmodified strategy $\bar{\pi}$ to get value $\brcfv{S}{\bar{\pi}}$, or opting out to get value $\bar{v}$.
\vspace{-1em}
\begin{align}
& \text{BV}^{\bar{\pi}, S \leftarrow \bar{v}}_o  - \text{BV}^{\bar{\pi}}_o \nonumber \\
\le & \text{BV}^{\bar{\pi}, S \leftarrow \max\{\brcfv{S}{\bar{\pi}}, \bar{v}\}}_o - \text{BV}^{\bar{\pi}}_o \nonumber \\
= & (\text{BV}^{\bar{\pi}, S \leftarrow \max\{\brcfv{S}{\bar{\pi}}, \bar{v}\}}_o - \bar{v}_o) - (\text{BV}^{\bar{\pi}}_o - \bar{v}_o) \nonumber \\
\le & (\text{BV}^{\bar{\pi}, S \leftarrow \max\{\brcfv{S}{\bar{\pi}}, \bar{v}\}}_o - \bar{v}_o) - (\text{BV}^{\pi^*}_o - \bar{v}_o) \nonumber \\
= & (\text{BV}^{\bar{\pi}, S \leftarrow \max\{\brcfv{S}{\bar{\pi}}, \bar{v}\}}_o - \bar{v}_o) - (v^{\pi^*}_o - \bar{v}_o) \nonumber \\
= & (\text{BV}^{\bar{\pi}, S \leftarrow \max\{\brcfv{S}{\bar{\pi}}, \bar{v}\}}_o - \bar{v}_o) + (v^{\pi^*}_i - \bar{v}_i) \nonumber \\
\le & (\text{BV}^{\bar{\pi}, S \leftarrow \max\{\brcfv{S}{\bar{\pi}}, \bar{v}\}}_o - \bar{v}_o) + (\text{BV}^{\bar{\pi}}_i - \bar{v}_i) \nonumber \\
= & (\text{BV}^{\bar{\pi}, S \leftarrow \max\{\brcfv{S}{\bar{\pi}}, \bar{v}\}}_o - \bar{v}_o) + \bar{R}_i \label{eq:br_value_final_optout}
\end{align}

The difference of the first two terms is the regret in the opt-out game game described above, where we have lifted each iteration strategy $\pi^t$ into this game by never selecting the opt-out choice.
Consider the immediate counterfactual regret $\tilde{R}^T(\infostate{o})$ in this situation for any information state $\infostate{o}$ in this augmented game.
Writing this in terms of the original immediate counterfactual regret $R^T(\infostate{o})$ and the opt-out value, we get
\begin{align*}
\tilde{R}^T(\infostate{o}) & = \max\{T(\bar{v}[\infostate{o}] - \bar{v}[\infostate{o}]), R^T(\infostate{o})\} \\
& = (R^T(\infostate{o}))^+
\end{align*}
Because the positive immediate regret in the opt-out game is the same as the positive regret in the original game, we can use the Theorem~\ref{thm:gtcfr_regret_simple} bound, which is composed from immediate regrets.
Putting this together with Equation~\ref{eq:br_value_subgame_eliminated} and Equation~\ref{eq:br_value_final_optout}, we get
\begin{align*}
& \text{BV}^{\bar{\pi} \leftarrow \pi^S}_o - \text{BV}^{\bar{\pi}}_o \\
\le & (\bar{R}^S_o)^+ + (\bar{R}^S_i)^+ + \bar{R}_i \\
& + 2(\max_t|\treefrontier{\mathcal{L}^t_\pubstate{}}|(\epsilon + \xi) + \sum_{\pubstate{} \in \treeinterior{\mathcal{L}^T_\pubstate{}}} |\mathcal{S}_i(\pubstate{})| U \sqrt{\frac{A}{T}})
\end{align*}
\end{proof}

\subsection*{Continual Re-solving}
Continual re-solving puts \gtcfr{} together with re-solving the previously solved subgame.
A bound on final solution quality follows directly from applications of Theorem~\ref{thm:gtcfr_regret_simple} and Theorem~\ref{thm:resolving_average_value_constraints}.

\begin{theorem}
Assume we have played a game using continual re-solving, with one initial solve and $D$ re-solving steps.
Each solving or re-solving step finds an approximate Nash equilibrium through $T$ iterations of \gtcfr{} using a value function with quality $\epsilon, \xi$, public states are never removed from the lookahead tree, the maximum interior size $\sum_{\pubstate{} \in \treeinterior{\mathcal{L}^T}} |\mathcal{S}_i(\pubstate)|$ of all lookahead trees is bounded by $N$, the sum of frontier sizes across all lookahead trees is bounded by $F$, the maximum number of actions at any information sets is $A$, and the maximum difference in values between any two strategies is $U$.
The exploitability of the final strategy is then bounded by
$(5D+2)\left(F(\epsilon + \xi) + NU \sqrt{\frac{A}{T}}\right)$.
\end{theorem}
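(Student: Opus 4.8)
\section*{Proof plan for the continual re-solving bound}

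The plan is to chain the single-solve guarantee of Theorem~\ref{thm:gtcfr_regret_simple} with the per-step re-solving guarantee of Theorem~\ref{thm:resolving_average_value_constraints}, by induction over the $D$ re-solving steps. Write $E \coloneqq F(\epsilon+\xi) + NU\sqrt{A/T}$ for the per-solve error budget. The first step is to record that, since $N$ bounds $\sum_{\pubstate{}\in\treeinterior{\mathcal{L}^T}}|\mathcal{S}_i(\pubstate{})|$ and $F$ bounds $\sum_{t}|\treefrontier{\mathcal{L}^t}|$ for every lookahead tree that arises in the algorithm, dividing the cumulative bound of Theorem~\ref{thm:gtcfr_regret_simple} by $T$ (using $\sqrt{AT}/T=\sqrt{A/T}$ and $\tfrac1T\sum_t|\treefrontier{\mathcal{L}^t}|\le \tfrac1T F\le F$) shows that \emph{every} solve --- the initial one and each re-solve, and for either player --- produces an average strategy with average full counterfactual regret at most $E$, and likewise the auxiliary (gadget) re-solving games solved by \gtcfr{} have average regret at most $E$ for both players.

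For the base case I would invoke the standard CFR exploitability argument on the initial solve $\pi^0$: by convexity of the best-response value in the average strategy and zero-sumness, the opponent's best-response value against $\pi^0$ exceeds the game value $v$ by at most $\bar R^0_i + \bar R^0_o \le 2E$. This is the source of the additive ``$+2$'' in the claimed coefficient.

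For the inductive step I would track $\Delta_j \coloneqq \text{BV}^{\sigma^j}_o - v$, the exploitability of the composed strategy $\sigma^j$ obtained after $j$ re-solves, where $\sigma^j$ coincides with $\sigma^{j-1}$ outside the $j$-th re-solved subgame $S_j$ and with the re-solve output $\pi^j$ inside it. Re-solve step $j$ operates on $S_j\subseteq S_{j-1}$ and, as in Appendix~\ref{sec:appendix-gadget-game}, takes as opt-out constraints the average opponent counterfactual values tracked during solve $j-1$ --- which are precisely the average counterfactual values of $\pi^{j-1}$ --- so Theorem~\ref{thm:resolving_average_value_constraints} applies verbatim with $\bar\pi\mapsto\pi^{j-1}$ (defined throughout $S_j$) and $\pi^S\mapsto\pi^j$. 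Its conclusion bounds the increase in the opponent's counterfactual best-response value at the root of $S_j$ by $(\bar R^{S_j}_o)^+ + (\bar R^{S_j}_i)^+ + \bar R^{\,j-1}_i + 2\big(\max_t|\treefrontier{\mathcal{L}^t_{\pubstate{}}}|(\epsilon+\xi) + \sum_{\pubstate{}\in\treeinterior{\mathcal{L}^T_{\pubstate{}}}}|\mathcal{S}_i(\pubstate{})|U\sqrt{A/T}\big) \le E+E+E+2E = 5E$, since each re-solving-game regret, the previous solve's regret, and each restricted-tree error term is at most $E$. Because this is a \emph{counterfactual} best-response value, replacing $\pi^{j-1}$ by $\pi^j$ only inside $S_j$ changes the opponent's overall best-response value by exactly this (reach-weighted) counterfactual increase --- this is precisely the $\text{BV}^{\bar\pi,\,S\leftarrow\,\cdot\,}_o$ bookkeeping already carried out inside the proof of Theorem~\ref{thm:resolving_average_value_constraints}, together with safety of the re-solve relative to the previous solve's tracked values. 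Hence $\Delta_j \le \Delta_{j-1} + 5E$, and with $\Delta_0 \le 2E$ we obtain $\Delta_D \le (5D+2)E = (5D+2)\big(F(\epsilon+\xi)+NU\sqrt{A/T}\big)$.

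The main obstacle is making the chaining airtight: one must verify that the re-solve at step $j$ really does use $\pi^{j-1}$'s tracked average counterfactual values as its opt-out values (so $\bar\pi=\pi^{j-1}$ is a legitimate instantiation of Theorem~\ref{thm:resolving_average_value_constraints}), that $\pi^{j-1}$ remains well-defined on all of $S_j$ under the ``back up to the nearest public state present in the previous lookahead tree'' construction of Appendix~\ref{sec:appendix-gadget-game}, and that the per-subgame counterfactual increase aggregates into the global exploitability without spawning extra constants (the opponent-range mixing with $\alpha=0.2$ in the gadget is harmless, its soundness being established in~\cite{Moravcik17DeepStack}). A secondary bookkeeping point is that the $\epsilon$-noise/$\xi$-error decomposition must throughout refer to the single fixed low-regret witness strategy $\pi$ from the $\epsilon,\xi$ quality definition, so that the $\epsilon+\xi$ terms genuinely add across the $D$ steps rather than being re-optimized --- and possibly made mutually inconsistent --- at each step.
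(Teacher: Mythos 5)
Your proposal is correct and follows essentially the same route as the paper: bound the initial solve's exploitability by the sum of both players' regrets ($\le 2E$ via Theorem~\ref{thm:gtcfr_regret_simple}), then invoke Theorem~\ref{thm:resolving_average_value_constraints} with Theorem~\ref{thm:gtcfr_regret_simple} supplying the regret bounds to show each re-solve adds at most $5E$, and unroll over the $D$ steps. The extra bookkeeping you flag (identifying $\bar\pi$ with the previous step's strategy and aggregating the counterfactual increase into global exploitability) is handled implicitly in the paper's proof, so your version is, if anything, more explicit about the chaining.
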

\begin{proof}
The exploitability $\text{EXP}_0$ of the player's initial strategy from the original solve is bounded by the sum of the regrets for both players.
Theorem~\ref{thm:gtcfr_regret_simple} provides regret bounds for \gtcfr{}, so
\begin{align*}
\text{EXP}_0 & \le 2\left(F(\epsilon + \xi) + NU \sqrt{\frac{A}{T}}\right)
\end{align*}

Each subsequent re-solve is operating on the strategy of the previous step, using the average values for the opt-out values.
That is, the first re-solve will be updating the strategy from the initial solve, the second re-solve will be updating the subgame strategy from the first re-solve, and so on.
Theorem~\ref{thm:resolving_average_value_constraints} provides a bound on how much the exploitability increases after each re-solving step, with Theorem~\ref{thm:gtcfr_regret_simple} providing the necessary regret bounds
\begin{align*}
\text{EXP}_d \le & \text{EXP}_{d-1} + (\bar{R}^S_o)^+ + (\bar{R}^S_i)^+ + \bar{R}_i + 2\left(F(\epsilon + \xi) + NU \sqrt{\frac{A}{T}}\right) \\
\le & \text{EXP}_{d-1} + 5\left(F(\epsilon + \xi) + NU \sqrt{\frac{A}{T}}\right)
\end{align*}

Unrolling for $D$ re-solving steps leads to the final bound.
\end{proof}

\clearpage
\begin{figure}[t!]
  \centering
  \includegraphics[width=0.7\textwidth]{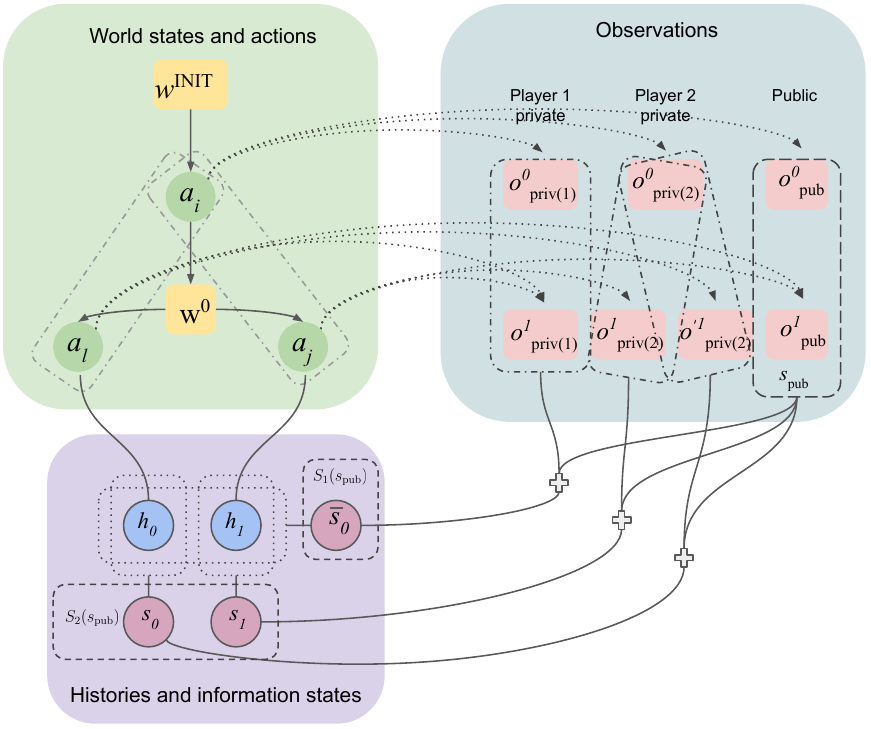}
  \caption{\textbf{An example of a Factored-Observation Stochastic Game (FOSG).} This figure presents the visual view of notation from \sectionref{sec:background}. In this example the game starts in  $w^{init}$ which is the complete state of the environment containing private information for both players. After playing action $a_i$ the state moves to $w^0$ where there are two possible actions. Each action emits private and public observations. In this example, actions $a_j$ and $a_l$ emit the same private observation $o^1_{\text{priv}(1)}$ for player 1, therefore they cannot distinguish which action happened. On the other hand, player 2 has different observations $o^1_{\text{priv}(2)}$ and $o'^1_{\text{priv}(2)}$ for each of the actions, therefore they have more information about the state of the environment than player 1. The sequence  of public observations shared by both players information is denoted as $s_\text{pub}$. Both sequences of actions and factored observations meet in the final `Histories and information states' view. The two possible action sequences are represented by histories $h_0$ and $h_1$, where $h_0 = (a_i, a_l)$, $h_1 = (a_i, a_j)$. Since both actions $a_l$ and $a_j$ result in the same observation for player 1, they cannot tell which one of the histories happened and his information state $\bar{s}_0$ contains them both. This is not the case for player 2, who can separate the histories, and each of his information states $s_0$ and $s_1$ contains just one history.}
  \label{fig:fosg-example}
\end{figure}

\clearpage
\begin{figure}[b!]
  \centering
  \includegraphics[width=0.4\textwidth]{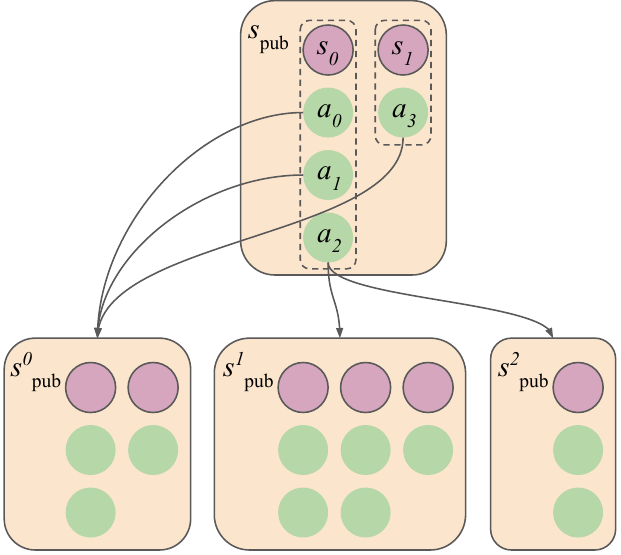}
  \caption{\textbf{An example of a public tree.} The public tree provides different view of the FOSG. In this example actions $a_0$ and $a_1$ emit the same public observation and therefore they lead to the same public tree node  $s^0_\text{pub}$. On the other hand, action $a_2$ can lead to multiple possible states: for instance when a detective in Scotland Yard moves to a location the game can either 1) end because Mr.~X was there and he was caught or 2) it continues because he was in a different station.}
  \label{fig:fosg-public-tree}
\end{figure}

\clearpage
\begin{figure}[t]
  \centering
  \includegraphics[angle=0,width=0.65\textwidth]{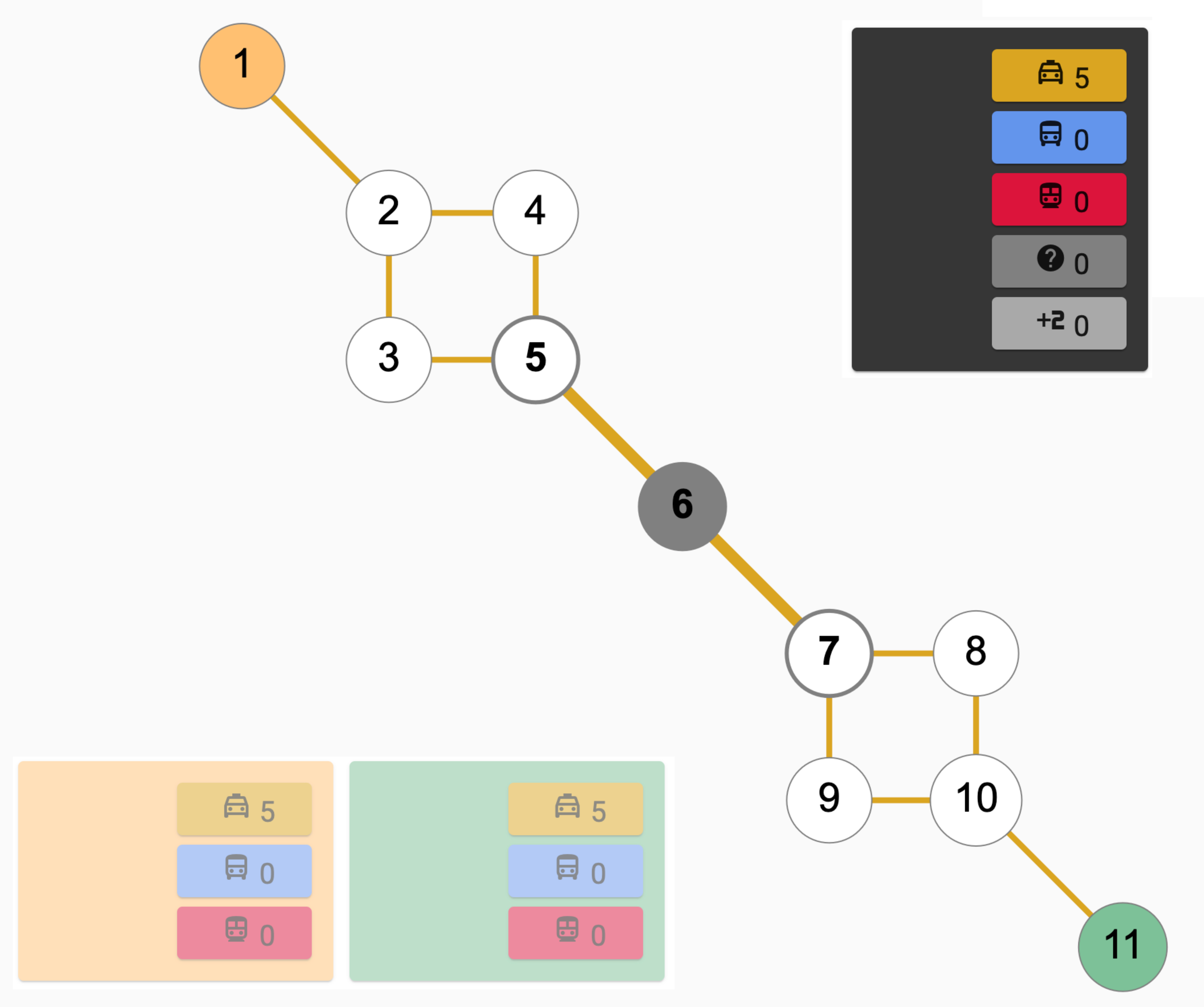}
  \caption{\textbf{Initial situation on the glasses map for Scotland Yard.} Mr.~X starts at station 6 while the two detectives start at stations 1 and 11. All of them have 5 taxi cards (all edges in this map are of the same type) and the game is played for 5 rounds.}
  \label{fig:scotlandyard-glasses}
\end{figure}

\clearpage
\begin{figure}
\centering
\includegraphics[width=1.0\textwidth]{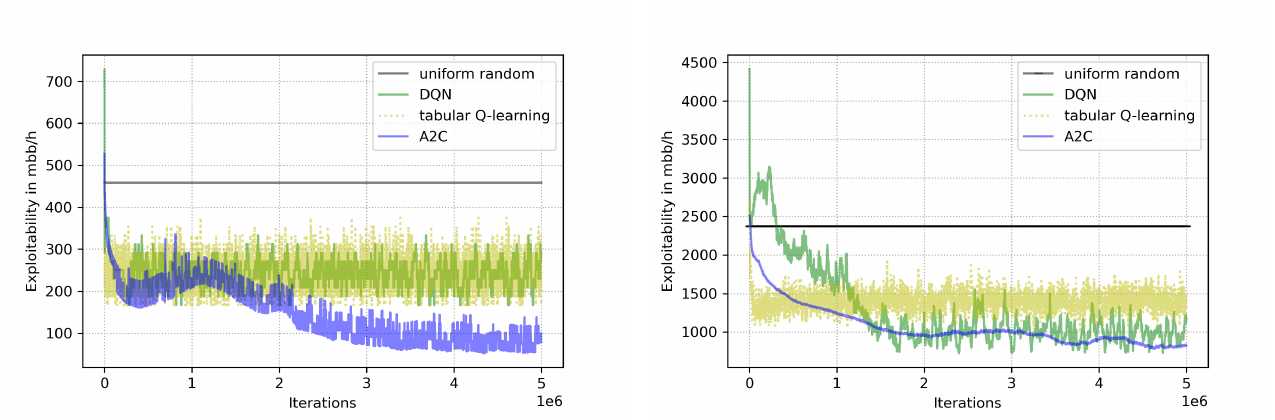}
\begin{subfigure}{.5\textwidth}
  \centering
  \caption{Exploitability in 2-player Kuhn poker}
  \label{fig:sub1}
\end{subfigure}%
\begin{subfigure}{.5\textwidth}
  \centering
  \caption{Exploitability in 2-player Leduc poker}
  \label{fig:sub2}
\end{subfigure}
\caption{\textbf{Comparing performance of DQN, A2C, tabular Q-learning and uniform random policy in (A) Kuhn poker and (B) Leduc poker.}}
\label{fig:rl-exploitability}
\end{figure}

\clearpage
\begin{table*}[ht]
\centering
\renewcommand{\arraystretch}{1.3}
\begin{tabularx}{\textwidth}{l|XXX}
\toprule
 Game	& Architecture	& Belief features	& Public state features\\  \midrule
Chess	& ResNet	& Redundant --- there is no uncertainty over players state. 	&  One 8x8 plane for each piece type (6) of each player (2) and repetitions planes (2) for last eight moves + scalar planes (7), 119 8x8 planes in total. \\
Go	& ResNet	& Redundant 	&  One 19x19 plane for stones of each player (2) for last eight moves plus a single plane encoding player to act, 17 19x19 planes in total. \\
Poker	& MLP 6 x 2048  	& 1326 (possible private card combinations) * 2 (num of players).	& N hot encoding of board cards (52) + commitment of each player normalized by his stack (2) + 1 hot encoding of who acts next, including chance player (3). \\
Scotland Yard	& MLP 6 x 512 	& 1 (detectives' position is always certain) + 199 (possible Mr X's position).	& 1 hot encoding of position of each detective (5*199) + cards of each detective (5*3) + cards of Mr X (5) + who is playing next (6) + was double move just used (1) + how many rounds were played (1). \\
\bottomrule
\end{tabularx}
\caption{\textbf{A neural network architecture and features used for each game.}}
\label{tab:nn-architectures-per-game}
\end{table*}

\clearpage
\begin{table*}[h!]
\centering
\renewcommand{\arraystretch}{1.3}
\begin{tabular}{ll|llll}
\toprule
                        Hyperparam &                   Symbol &  Chess &    Go & Scot. Yard &    HUNL \\
\midrule
                        Batch size &                          &   2048 &  2048 &       1024 &    1024 \\
                         Optimizer &                          &    sgd &   sgd &        sgd &    adam \\
        Initial learning rate (LR) &          $\alpha_{init}$ &    0.1 &  0.02 &        0.1 &  0.0001 \\
                    LR decay steps &              $T_{decay}$ &    40k &  200k &         2M &      2M \\
                     LR decay rate &                      $d$ &    0.8 &   0.1 &        0.5 &     0.5 \\
                Policy head weight &                    $w_p$ &      1 &     1 &       0.05 &    0.01 \\
                 Value head weight &                    $w_v$ &   0.25 &   0.5 &          1 &       1 \\
                Replay buffer size &                          &    50M &   50M &         1M &      1M \\
      Max grad updates per example &                          &      1 &   0.2 &          5 &      10 \\
     TD(1) target sample probability &                $p_{td1}$ &      0 &   0.2 &          0 &       0 \\
                Queries per search &             $q_{search}$ &      1 &     0 &        0.3 &     0.9 \\
      Recursive queries per search &          $q_{recursive}$ &    0.2 &     0 &        0.1 &     0.1 \\
       Self-play uniform policy mix &               $\epsilon$ &      0 &     0 &          0 &     0.1 \\
                    Resign enabled &                          &   True &  True &      False &   False \\
                  Resign threshold &      $resign\_threshold$ &   -0.9 &  -0.9 &          - &       - \\
 Min ratio of games without resign &         $p_{no\_resign}$ &    0.2 &   0.2 &          - &       - \\
            Greedy play after move &  $moves_{greedy\_after}$ &     30 &    30 &      never &   never \\
          Max moves in one episode &            $moves_{max}$ &    512 &   722 &     unlim. &  unlim. \\
         Prior softmax temperature &              $T_{prior}$ &    1.5 &   1.5 &          1 &       1 \\
\bottomrule
\end{tabular}
\caption{\textbf{Hyperparameters for each game.}}
\label{tab:hparams-per-game}
\end{table*}

\clearpage
\begin{table}[t!]
\centering
\begin{tabular}{lr}
\toprule
                              Agent &  Rel. Elo \\
\midrule
AlphaZero(s=16k, t=800k) & +3139 \\
AlphaZero(s=16k, t=400k) & +3021 \\
AlphaZero(s=8k, t=800k) & +2875 \\
AlphaZero(s=8k, t=400k) & +2801 \\
AlphaZero(s=4k, t=800k) & +2643 \\
AlphaZero(s=16k, t=200k) & +2610 \\
AlphaZero(s=4k, t=400k) & +2584 \\
AlphaZero(s=2k, t=800k) & +2451 \\
AlphaZero(s=8k, t=200k) & +2428 \\
AlphaZero(s=2k, t=400k) & +2353 \\
AlphaZero(s=4k, t=200k) & +2234 \\
AlphaZero(s=800, t=800k) & +2099 \\
AlphaZero(s=16k, t=100k) & +2088 \\
AlphaZero(s=2k, t=200k) & +2063 \\
AlphaZero(s=800, t=400k) & +2036 \\
\textbf{SoG(s=16k, c=10)} & \textbf{+1970} \\
AlphaZero(s=8k, t=100k) & +1940 \\
\textbf{SoG(s=8k, c=10)} & \textbf{+1902} \\
AlphaZero(s=800, t=200k) & +1812 \\
\textbf{SoG(s=4k, c=10)} & \textbf{+1796} \\
AlphaZero(s=4k, t=100k) & +1783 \\
\textbf{SoG(s=2k, c=10)} & \textbf{+1672} \\
AlphaZero(s=2k, t=100k) & +1618 \\
\textbf{SoG(s=800, c=1)} & \textbf{+1426} \\
AlphaZero(s=800, t=100k) & +1360 \\
Pachi(s=100k) & +869 \\
Pachi(s=10k) & +231 \\
GnuGo(l=10) & +0 \\
\bottomrule
\end{tabular}
\caption{\textbf{Full Go results (Non Recursive Queries).} Elo of GnuGo with a single thread and 100ms thinking time was set to be 0. AlphaZero(s=16k, t=800k) refers to 16000 search simulations after 800000 training steps.}
\label{tab:full-go-results}
\end{table}

\clearpage
\begin{table}[t!]
\centering
\begin{tabular}{lr}
\toprule
                              Agent &  Rel. Elo \\
\midrule
AlphaZero(s=16k, t=800k) & +3431 \\
AlphaZero(s=16k, t=400k) & +3319 \\
AlphaZero(s=8k, t=800k) & +3169 \\
AlphaZero(s=8k, t=400k) & +3093 \\
AlphaZero(s=4k, t=800k) & +2933 \\
AlphaZero(s=16k, t=200k) & +2899 \\
AlphaZero(s=4k, t=400k) & +2880 \\
AlphaZero(s=2k, t=800k) & +2745 \\
AlphaZero(s=8k, t=200k) & +2712 \\
AlphaZero(s=2k, t=400k) & +2643 \\
AlphaZero(s=4k, t=200k) & +2509 \\
AlphaZero(s=800, t=800k) & +2394 \\
AlphaZero(s=16k, t=100k) & +2391 \\
AlphaZero(s=2k, t=200k) & +2348 \\
AlphaZero(s=800, t=400k) & +2315 \\
AlphaZero(s=8k, t=100k) & +2240 \\
AlphaZero(s=800, t=200k) & +2105 \\
AlphaZero(s=4k, t=100k) & +2078 \\
\textbf{SoG(s=16k, c=10)} & \textbf{+2025} \\
\textbf{SoG(s=8k, c=10)} & \textbf{+1937} \\
AlphaZero(s=2k, t=100k) & +1928 \\
\textbf{SoG(s=4k, c=10)} & \textbf{+1838} \\
\textbf{SoG(s=2k, c=10)} & \textbf{+1766} \\
AlphaZero(s=800, t=100k) & +1644 \\
\textbf{SoG(s=800, c=1)} & \textbf{+1579} \\
Pachi(s=100k) & +958 \\
Pachi(s=10k) & +227 \\
GnuGo(l=10) & +0 \\
\bottomrule
\end{tabular}
\caption{\textbf{Full Go results (Recursive Queries).} Elo of GnuGo with a single thread and 100ms thinking time was set to be 0. AlphaZero(s=16k, t=800k) refers to 16000 search simulations after 800000 training steps.}
\label{tab:full-go-results-recursive}
\end{table}

\clearpage
\begin{table*}
\begin{center}
\begin{tabular}{l|l||lll}
{\bf Num. Sims} & {\bf UCT const. $(C)$} & {\bf Expl. (mvd)} & {\bf Expl. (mvis)} & {\bf Expl. (mval)} \\
\hline
   10 &        1.0 & 2168 & 2449 & 2173\\
   10 &        2.0 & 2058 & 2408 & 2341\\
   10 &        5.0 & 1902 & 2615 & 2517\\
   10 &       10.0 & 1738 & 2555 & 2360\\
   10 &       13.0 & 1799 & 2517 & 2598\\
   10 &       20.0 & 1821 & 2830 & 2349\\
   10 &       26.0 & 1888 & 2861 & 2669\\
\hline
  100 &        1.0 & 1489 & 1509 & 1333\\
  100 &        2.0 & 1404 & 1587 & 1395\\
  100 &        5.0 & 1239 & 1145 & 1094\\
  100 &       10.0 & 1213 & 1195 & 1245\\
  100 &       13.0 & 1218 & 1292 & 1227\\
  100 &       20.0 & 1350 & 1456 & 1342\\
  100 &       26.0 & 1448 & 1747 & 1568\\
\hline
 1000 &        1.0 & 1323 & 1218 & 1177\\
 1000 &        2.0 & 1069 & 1212 & 864\\
 1000 &        5.0 & 699 & 778 & 681\\
 1000 &       10.0 & 697 & 601 & 632\\
 1000 &       13.0 & 741 & 759 & 744\\
 1000 &       20.0 & 859 & 962 & 991\\
 1000 &       26.0 & 966 & 1029 & 1057\\
\hline
10000 &        1.0 & 1348 & 948 & 1134\\
10000 &        2.0 & 911 & 877 & 763\\
10000 &        5.0 & 516 & 582 & 538\\
10000 &       10.0 & 490 & 485 & 480\\
10000 &       13.0 & 511 & 465 & 470\\
10000 &       20.0 & 572 & 505 & 505\\
10000 &       26.0 & 631 & 575 & 570\\
\hline
\end{tabular}
\caption{\textbf{Average exploitability (in mbb/h) over five policy constructions obtained by independent searches of IS-MCTS runs at each
information state in Leduc Poker.}
The parameter $C$ is the value of the UCT exploration constant.
The final policy is obtained either by normalizing the visit counts (mvd),
choosing the action with maximum visits (mvis), or choosing the action with the maximal Monte Carlo value estimate
(mval).}
\label{fig:leduc-ismcts-exploitability}
\end{center}
\end{table*}

\clearpage
\begin{table*}
\begin{center}
\begin{tabular}{l|l|l|l}
{\bf Parameter} & {\bf DQN} & {\bf Tabular} & {\bf A2C} \\
{} & {} & {\bf Q-Learning} & {} \\
\hline
   \small{Learning rate (lr)} & \footnotesize{1e-1, 1e-2, 1e-3, 1e-4}  & \footnotesize{NA} & \small{Actor lr:}  \footnotesize{1e-3, 1e-4, 1e-5}\\ 
    & &  & \small{Critic lr:} \footnotesize{1e-2, 1e-3}\\ \hdashline
   \small{Decaying exploration rate} & \footnotesize{1., 0.8, 0.5, 0.2, 0.1} &  \footnotesize{NA} &  \footnotesize{NA} \\ \hdashline
   \small{Replay buffer size} & \footnotesize{100, 1000, 10000, 100000} & \footnotesize{NA} & \footnotesize{NA} \\ \hdashline
   \small{Hidden layer size} & \footnotesize{'32', '64', '128',}  & \footnotesize{NA} &  \footnotesize{'32', '64', '128',}\\
   & \footnotesize{'32, 32', '64, 64'}  & \footnotesize{NA} &  \footnotesize{'32, 32', '64, 64'}\\ \hdashline
   \small{Num. of critic updates} & \footnotesize{NA} &  \footnotesize{NA} &  \footnotesize{4, 8, 16}\\
   \small{before every actor update} &  &  &  \\ \hdashline
   \small{Step size} & \footnotesize{NA} & \footnotesize{0.1, 0.2, 0.5,} &  \\
   & & \footnotesize{0.8, 1.0} &  \\
\hline
\end{tabular}
\caption{\textbf{Hyper parameters swept over in each RL algorithm.}}
\label{fig:RL_hyper_params}
\end{center}
\end{table*}

\fi 

\end{document}